\pdfoutput=1

\documentclass[Afour,sageh,times,numbered]{./support/sagej}
\usepackage{moreverb,url}

\usepackage[colorlinks,bookmarksopen,bookmarksnumbered,citecolor=red,urlcolor=red]{hyperref}

\newcommand\BibTeX{{\rmfamily B\kern-.05em \textsc{i\kern-.025em b}\kern-.08em
T\kern-.1667em\lower.7ex\hbox{E}\kern-.125emX}}

\usepackage{times}
\usepackage{amsmath,amssymb,amsopn,amstext,amsfonts}
\usepackage{cancel}
\usepackage{soul}
\usepackage{lipsum,multicol}
\usepackage{stfloats}
\everymath{\displaystyle}
\usepackage{booktabs}
\usepackage{threeparttable}
\usepackage{tabularx}
\usepackage{pifont}  
\usepackage{amsmath,amsthm}
\usepackage{xfrac}
\usepackage{balance}
\usepackage{color}
\usepackage{mathtools}

\usepackage{tabularx}
\usepackage{bm}
\usepackage{diagbox}
\usepackage{float}
\usepackage{epstopdf}
\usepackage{url}
\usepackage{multirow}
\usepackage{multicol}
\usepackage{tikz}
\usepackage{enumitem}
\usepackage[ruled,vlined,linesnumbered]{algorithm2e}
\usepackage{pifont}
\usepackage{siunitx}
\usepackage[table,xcdraw]{xcolor}  
\usepackage{caption}

\definecolor{myblue}{rgb}{0.435, 0.667, 0.969}

\DeclareMathAlphabet\mathbfcal{OMS}{cmsy}{b}{n}

\newtheorem{theorem}{Theorem}

\definecolor{celadon}{rgb}{0.78, 0.93, 0.80}
\pagecolor{celadon}
\nopagecolor
\soulregister\cite7
\soulregister\citep7
\soulregister\citet7
\soulregister\ref7
\soulregister\it7
\soulregister\pageref7

\usepackage{arydshln}
\graphicspath{{figures/}}
\DeclareGraphicsExtensions{.pdf,.png,.jpg,.eps}

\begin{document}
\title{Memory-Efficient Boundary Map for Large-Scale Occupancy Grid Mapping
}

\author{Benxu Tang\affilnum{1}, Yunfan Ren\affilnum{1}, Yixi Cai\affilnum{1, 2}, Fanze Kong\affilnum{1}, Wenyi Liu\affilnum{1}, Fangcheng Zhu\affilnum{1}, Longji Yin\affilnum{1}, Liuyu Shi\affilnum{1} and Fu Zhang\affilnum{1}}

\affiliation{\affilnum{1}Mechatronics and Robotic Systems (MaRS) Laboratory, Department of Mechanical Engineering, The University of Hong Kong, Hong Kong SAR, China.\\
\affilnum{2}Department of Robotics, Perception, and Learning, KTH Royal Institute of Technology, Stockholm, Sweden.}

\corrauth{Fu Zhang, fuzhang@hku.hk}

\begin{abstract}

    Determining the occupancy status of locations in the environment is a fundamental task for safety-critical robotic applications. Traditional occupancy grid mapping methods subdivide the environment into a grid of voxels, each associated with one of three occupancy states: free, occupied, or unknown. These methods explicitly maintain all voxels within the mapped volume and determine the occupancy state of a location by directly querying the corresponding voxel that the location falls within. However, maintaining all grid voxels in high-resolution and large-scale scenarios requires substantial memory resources. In this paper, we {introduce} a novel representation that only maintains the boundary of the mapped volume. {Specifically, we explicitly represent the boundary voxels, such as the occupied voxels and frontier voxels, while free and unknown voxels are automatically represented by volumes within or outside the boundary, respectively.} As our representation maintains only a closed surface in two-dimensional (2D) space, instead of the entire volume in three-dimensional (3D) space, it significantly reduces memory consumption. Then, based on this 2D representation, we propose a method to determine the occupancy state of arbitrary locations in the 3D environment. We term this method as boundary map.
    Besides, we design a novel data structure for maintaining the boundary map, supporting efficient occupancy state queries.
    Theoretical analyses of the occupancy state query algorithm are also provided.
    Furthermore, to enable efficient construction and updates of the boundary map from the real-time sensor measurements, we propose a global-local mapping framework and corresponding update algorithms.
    Extensive benchmark experiments were conducted on various datasets, with results demonstrating that our method significantly reduces memory consumption while maintaining highly efficient map queries and updates. Moreover, we showcase a real-world application of the boundary map by deploying our mapping framework on a memory-constrained micro aerial vehicle (MAV) platform, enabling the robot to navigate in large and complex unknown environments. 
    {Finally, we will make our implementation of the boundary map open-source on GitHub to benefit the community:~\href{https://github.com/hku-mars/BDM}{\tt github.com/hku-mars/BDM}.}

\end{abstract}

\keywords{Occupancy Mapping, LiDAR Perception, Range Sensing.}

\maketitle

\section{Introduction}
\label{sec:intro}

\begin{figure*}[t]
    \centering
    \includegraphics[width=\linewidth]{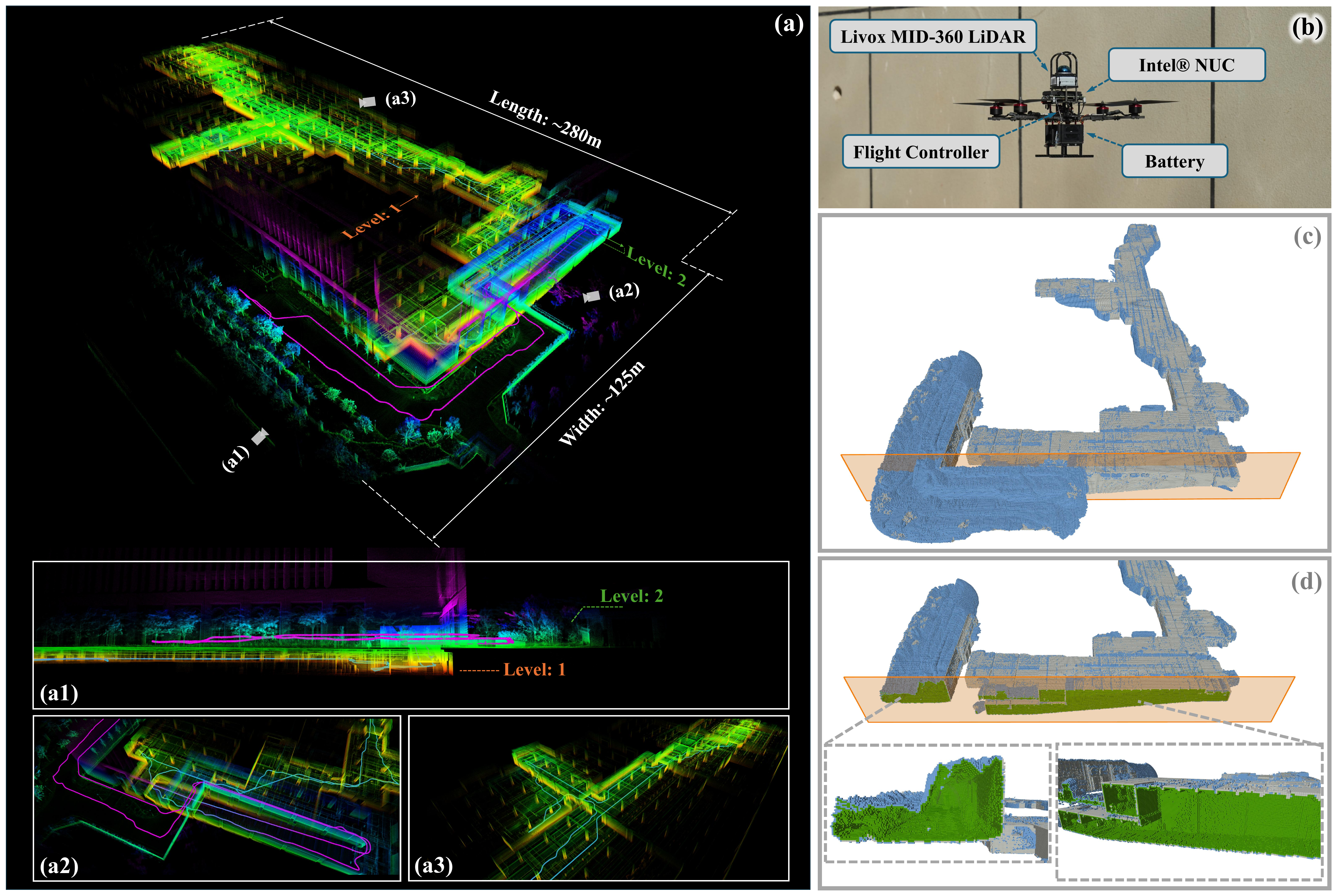}
    \caption{MAV performing a long-range autonomous navigation task in an unknown multi-level building, utilizing the proposed mapping framework. (a) Accumulated point cloud and the MAV trajectory at the end of the task, also visualized from multiple viewpoints in (a1), (a2), and (a3). (b) Hardware configuration of the MAV platform used to perform the navigation task. (c) Overview of the resulting global boundary map, which forms a closed structure. As the viewpoint is external, the low-dimensional property of the map is not evident. To better illustrate its internal structure, the map is sliced along the orange plane. (d) Cross-sectional view revealing the hollow interior of the boundary map. Voxels stored in boundary map are classified into three types, visualized in distinct colors. The green voxels lie on the interior layer of the map, having an occupancy state of free. The grey and blue voxels are located on the exterior layer, with occupancy states of occupied and unknown, respectively.}
    \label{fig:cover}
\end{figure*}

Autonomous robots have attracted significant attention in recent years and have been deployed across diverse domains, including search and rescue~\cite{rouvcek2020darpa, tranzatto2022cerberus}, disaster response~\cite{kawatsuma2012emergency, seungsub2017study}, surveillance~\cite{yoder2016autonomous, tabib2021autonomous}, and 3D reconstruction~\cite{isler2016information, schmid2020efficient}. These applications not only enhance operational efficiency but also ensure the safety of human personnel. Despite notable advancements, deploying autonomous robots in complex, unknown, and large-scale environments remains a formidable challenge. A critical issue lies in the efficient and accurate determination of the occupancy status—free, occupied, or unknown—for locations in the environment. This capability is foundational for autonomous navigation tasks, such as obstacle avoidance~\cite{lopez2017aggressive, kong2021avoiding, ren2022bubble, ren2025safety}, target search~\cite{yang2022far}, active perception and exploration~\cite{bircher2016receding, dang2019graph, zhou2021fuel, cao2021tare, tang2023bubble}.

A solution for the problem is the occupancy grid mapping, which can generally be divided into two categories: (i) grid-based methods~\cite{moravec1996robot, niessner2013real, ren2023rog} and (ii) octree-based methods~\cite{hornung2013octomap, duberg2020ufomap}. Grid-based methods discretize the environment into a fixed grid. Each voxel in grid stores an occupancy state of free, occupied or unknown. These grid voxels are mapped to a specific memory location in an array, or a hash table. Determining the occupancy state of locations in the environment involves directly querying for the corresponding voxel that the location falls within.
Octree-based methods, manage these grid voxels using a hierarchical tree structure. By merging voxels with identical occupancy status, octrees effectively reduce memory consumption compared to grid-based methods. 

A recent work D-Map~\cite{cai2023occupancy} introduces a hybrid representation that uses an octree managing unknown voxels and a grid map storing occupied voxels. This method {leverages} depth images to determine occupancy states in map updates, {avoiding the ray casting process that {is} commonly used in classical methods. This strategy enables occupancy state determination for large grid cells in the octree, thus avoiding the exhaustive visits of the smallest cells. Furthermore, grid cells with determined states are removed at each update, demonstrating a decremental property and reducing redundant grid visits, which further enhances map update efficiency.}

{However, challenges become pronounced in large-scale environments (e.g., spanning several kilometers \cite{geiger2013vision, maddern20171, agarwal2020ford, jung2024helipr}) and tasks require high-resolution mapping. All existing occupancy grid maps explicitly represent both free and occupied volumes in the three-dimension (3D), and as the environment size and resolution increase, memory consumption can become prohibitive (e.g., exceeding 120GB \cite{moravec1996robot}), restricting their application in such tasks. In addition to memory, large-scale environments also lead to substantial increases in map update time and query latency. Existing methods make trade-offs among memory consumption, update efficiency, and query performance. Grid-based methods offer high efficiency in map updates and queries, with constant time complexity of \(\mathcal{O}(1)\)~\cite{ren2023rog}. However, the memory consumption {grows} rapidly as map scale or resolution increases, making them impractical for environments larger than a few hundred meters.
Octree-based methods are more memory-efficient, but their hierarchical structure introduces computational overhead. The time complexity of updating or querying a voxel is \(\mathcal{O}(\text{log}(\frac{D}{d}))\), where \(D\) is the environment scale and \(d\) is the map resolution~\cite{cai2023occupancy}. Thus, as scale increases or resolution becomes finer, latency in updates and queries grows accordingly, which may degrade real-time performance in such settings.
D-Map improves map update efficiency but still requires traversing the octree during map queries, incurring the same time complexity of \(\mathcal{O}(\text{log}(\frac{D}{d}))\). Moreover, this method assumes a static environment. Once a voxel's occupancy state is determined in an update, it remains unchanged. This limits its applicability in dynamic real-world scenarios. Consequently, there is a pressing need for novel techniques that facilitate efficient occupancy mapping in large-scale environments by minimizing memory overhead, and simultaneously enabling rapid map updates and queries, while also supporting dynamic scenarios.}

To address these challenges, we propose a novel map representation and a real-time mapping framework. Unlike existing methods, which explicitly represent the entire mapped volume in the three-dimension (3D), {our method represents only the two-dimensional (2D) boundary voxels (see Section~\ref{sec:boundary}).} This low-dimensional representation significantly reduces memory consumption, achieving improvements by several times to orders of magnitude compared to both grid- and octree-based methods. 
{A problem {with} using 2D boundary voxel representation lies in the determination of the occupancy state of an arbitrary location in the 3D environment. To address this issue, we propose an occupancy state determination method based on the boundary voxels. Besides, to support efficient occupancy state queries, we design a novel data structure to maintain the boundary map. Our approach achieves an average time complexity of \(\mathcal{O}(1)\) for occupancy state queries, matching the efficiency of grid-based methods.} Furthermore, we introduce a global-local mapping framework to facilitate efficient real-time map updates, with the map update time also comparable to the grid-based methods.

In summary, {we propose a novel occupancy grid mapping system that} significantly reduces memory consumption while maintaining high efficiency for both map queries and updates.
Our contributions can be summarized as follows:

\section*{Contributions}

\begin{itemize}
    \item \textbf{The Boundary Map:}  
    We propose a novel occupancy grid map, {which represents only the two-dimensional (2D) boundary voxels instead of the entire three-dimensional (3D) volume. This low-dimensional representation significantly improves memory efficiency.}
    {Based on the 2D boundary voxels, we propose a method for determining the occupancy state of arbitrary locations in the 3D environment. We term our method as boundary map.
    In addition, we introduced a novel data structure to maintain the boundary map, from which an efficient query algorithm is then proposed.
    Our approach enables rapid querying of the occupancy state for locations in the environment, achieving an average time complexity of $\mathcal{O}(1)$.}

    \item \textbf{Real-Time Mapping Framework:} 
    {We developed a global-local mapping framework that supports efficient map updates from real-time sensor measurements. This framework leverages the boundary map as the global map and a robot-centric local map implemented as a uniform occupancy grid. Besides, we design a corresponding update method based on this map structure. We perform the ray casting process on local map to integrate new sensor measurements into the mapping framework.
    Additionally, we design an incremental method to compute boundary voxels from local map and update them to the global boundary map. We also propose an efficient method to load occupancy states from the global boundary map to the local map for fusing the new sensor measurements. 
    Furthermore, we incorporate a sliding mechanism for the local map and propose an incremental method to update the local and global map in the slide-in and slide-out regions.
    These designs ensure efficient updates of the mapping framework, making the overall update time comparable to that of grid-based methods and significantly faster than the octree-based methods. 
    }
    
    \item \textbf{Benchmarking and Real-World Application:}  
    Comprehensive benchmark experiments were conducted, comparing our framework with state-of-the-art methods in terms of memory consumption, update efficiency, query performance, and map accuracy. {Additionally, the proposed framework is successfully deployed in a long-range autonomous navigation task of a micro aerial vehicle (MAV) in a large-scale real-world scenario. Our mapping framework supports the MAV to navigate to long-range goals across multiple floors of a building. The environment spans approximately \(\mathrm{280m} \times \mathrm{125m} \times \mathrm{20m} \), and the total flight distance reached approximately \(\mathrm{1.25km}\). This demonstrates the practical applicability of our mapping framework.}
\end{itemize}

{
The remainder of this paper is organized as follows. Section~\ref{sec:relate} reviews related work on occupancy mapping. Section~\ref{sec:overview} presents an overview of our method, followed by a detailed introduction of the boundary map in Sections~\ref{sec:boundary}, \ref{sec:determine}, \ref{sec:data_and_impl}, and the mapping framework in Section~\ref{sec:framework}. Extensive benchmark experiments on various datasets are presented in Section~\ref{sec:benchmake} and the real-world application is introduced in Section~\ref{sec:real-world}. Finally, discussions are provided in Section~\ref{sec:discuss}, and conclusions are summarized in Section~\ref{sec:conclude}. }

\section{Related Work}
\label{sec:relate}
{In this section, we review previous studies on occupancy mapping, focusing on their occupancy state determination techniques and the underlying data structures they employ.}
\subsection{Occupancy State Determination Techniques}

Robotic applications are frequently safety-critical, especially when they involve operating in cluttered and unknown environments. These tasks demand that the mapping module accurately determine the occupancy state of a query location in the environment—whether it is free, occupied, or unknown—enabling the subsequent planning and control modules to make informed decisions to achieve goals safely. Over the years, various methods have been developed to tackle this challenge. Depending on their underlying assumptions, these approaches can be broadly classified into two main categories: Continuous Occupancy Models and Discrete Occupancy Models.

{Continuous Occupancy Models} assume an implicit spatial correlation within the environment. One prominent approach is the use of Gaussian Processes (GP), as proposed by \cite{o2012gaussian}, to model the relationship between sensor measurements and physical locations. GPs have the ability to predict the occupancy state for any location in the environment, but they come with significant computational overhead, particularly during occupancy state queries, where matrix inversion is required, leading to a time complexity of \( \mathcal{O}(N^3) \), where \(N\) is the number of accumulated sensor measurements since the beginning of the robotic application. This cubic complexity is also present during model training, making the approach computationally expensive. Furthermore, this method requires storing all accumulated sensor measurements, which leads to considerable memory consumption. Several techniques have been proposed to mitigate these issues. For example, \cite{kim2012building} reduces training overhead by clustering data and using Gaussian mixture models. \cite{kim2015gpmap} introduces sparse Gaussian processes to reduce training costs, and \cite{wang2016fast} divides the environment into blocks and maintains an octree structure within each block to improve computational efficiency. Other works \cite{guizilini2018towards, o2018variable, zhi2019continuous} compress sensor measurements into Gaussian distributions or specialized kernels to create more compact environment representations.
However, querying the occupancy state of a location on a continuous occupancy map still involves complex computations, which can hinder real-time performance, particularly on resource-constrained robotic platforms. Additionally, continuous mapping methods often require complex training procedures, further limiting their ability to perform in real time.

In contrast, {Discrete Occupancy Models} such as \cite{moravec1996robot, hornung2013octomap, duberg2020ufomap, ren2023rog} partition the environment into a grid of voxels, assuming that the occupancy state of each voxel is independent of others. Each voxel holds an independent occupancy state of free, occupied or unknown. This allows the occupancy state of a query location to be readily retrieved from the corresponding voxel that the location falls within. As a result, Discrete Occupancy Models typically exhibit significantly lower time complexity in occupancy state querying compared to continuous models, making them well-suited for real-time robotic applications. Moreover, these models construct the occupancy map directly from sensor measurements, bypassing the inference required by continuous models, which enhances their robustness and reliability for robotic tasks.
{However, when a more detailed environmental representation is required, Discrete Occupancy Models necessitate a higher map resolution (i.e., a smaller voxel size), leading to a cubical increase in the number of voxels to maintain. Furthermore, when applying in large-scale environments, it also leads to a substantial number of voxels in map. Both of these scenarios lead to significant memory demands, rendering these models challenging to apply in such extensive scenes.}

Both existing continuous and discrete occupancy models share a critical limitation: they explicitly represent full three-dimensional (3D) volume of free and occupied regions to fully capture the environment. However, in large-scale robotic applications, these volumes can become tremendous, particularly the free regions, which makes these methods challenging to represent such scenes without extensive memory usage. 

To address this, we introduce a novel low-dimensional boundary-based representation for occupancy states, built upon the discrete occupancy models. This representation only explicitly maintains voxels {located} on the boundary of the mapped volume (see Section~\ref{sec:boundary}).
These boundary voxels are significantly fewer in number compared to full volume of voxels, due to the dimensional reduction, and thus substantially reduce memory consumption.

\subsection{Data Structures}

In addition to occupancy state determination techniques, the efficiency of queries and updates, as well as memory consumption, is closely tied to the data structure and associated query methods of a map. Existing approaches can be broadly categorized into grid-based and octree-based methods, while grid-based methods can be further classified as array-based and hash-based methods, based on their underlying data structures.

Array-based methods, such as \cite{roth1989building, elfes1995robot, moravec1996robot}, represent the environment using a uniform grid structure, where all voxels are mapped into a contiguous memory block (i.e., an array). This design enables efficient map operations, including queries and updates, with a constant time complexity of \( \mathcal{O}(1) \). However, its primary drawback is the substantial memory consumption that escalates with increases in environment scale or map resolution, rendering it impractical for large-scale or high-resolution occupancy mapping tasks. To mitigate this memory burden, a recent method ROG-Map~\cite{ren2023rog} proposes maintaining only the uniform grid voxels surrounding the robot. As the robot moves, the map slides accordingly, discarding occupancy information in region that slides out the grid. This map structure and sliding mechanism ensure constant time complexity for map operations and fixed memory usage, as the map size remains constant. Nevertheless, this spatial forgetting approach results in the irreversible loss of information from previously explored areas.

An information-lossless alternative for improving memory efficiency is voxel hashing, namely the hash-based method. It is initially proposed by \cite{niessner2013real} and later adapted for occupancy mapping by \cite{zhou2020ego, zhou2023racer}. This method stores only voxels in the mapped volume (i.e., the free and occupied voxels) in a hash table, avoiding the need to allocate memory for the entire grid. Classified as a grid-based approach, it achieves an average-case time complexity of \( \mathcal{O}(1) \) for map operations, though its worst-case complexity rises to \( \mathcal{O}(n) \) due to potential hash collisions. As the environment size or map resolution increases, the growing number of voxels heightens the risk of hash collisions, potentially degrading the performance of map queries and updates. Additionally, maintaining the mapped volume can still lead to significant memory consumption, especially in large-scale scenarios or at high map resolutions.

To further address memory constraints, octree-based methods, such as Octomap \cite{hornung2013octomap}, employ an octree structure to recursively subdivide the environment into smaller grids. This allows voxels with identical occupancy states to be merged, reducing map size and memory usage. UFOMap \cite{duberg2020ufomap} makes implementation-level enhancements of this octree-based structure, which further improves memory efficiency. However, the trade-off for these octree-based methods is that map operations exhibit a logarithmic time complexity of \( \mathcal{O}(\log (\frac{D}{d})) \), where \(D\) represents the environment scale and \(d\) denotes the map resolution. This logarithmic complexity compromises the efficiency of map operations including queries and updates. A more recent approach, D-Map \cite{cai2023occupancy}, leverages a hybrid structure, storing unknown voxels in an octree and occupied voxels in a separate hash grid map. This method eliminates the need for ray casting—commonly used in the grid-based and octree-based methods—by introducing a novel strategy that utilizes depth images to determine occupancy states during map updates. This update strategy enables occupancy state determination for large grid cells in the octree, avoiding the exhaustive visits of the smallest cells. Moreover, D-Map removes grid cells with determined occupancy state at each update, which reduces redundant grid visits. These strategies collectively contribute to {an} efficient map update in D-Map. 
However, D-Map assumes a static environment, which poses challenges in handling sensor noise and dynamic obstacles compared to ray casting-based methods such as~\cite{moravec1996robot, hornung2013octomap}. As a result, D-Map struggles in dynamic settings and with sensor noise: once a location is marked as occupied, it cannot be reverted to free, even if future scans no longer observe the obstacle. This limitation restricts its applicability in real-world scenarios where environments have frequent changes.

{Unlike existing methods which make trade-offs between memory consumption, map update efficiency and query performance, we present a comprehensive mapping system that ensures both query and update efficiency while significantly reducing the memory consumption. First, we design a data structure to maintain the boundary map, supporting rapid queries. 
This novel data structure integrates a 2D hash-based grid map to store boundary voxels. An efficient algorithmic implementation for occupancy state queries is then proposed based on this data structure, achieving an average query time complexity of \( \mathcal{O}(1) \), comparable to the array- and hash-based methods. In addition, we introduce a real-time global-local mapping framework and a corresponding update method to achieve efficient updates. This framework achieves update efficiency comparable to array-based methods, while inheriting the capability to effectively handle sensor noise and dynamic objects. Most notably, by leveraging the low-dimensional boundary representation, our approach demonstrates significantly superior memory efficiency compared to the methods discussed above, making it highly suitable for robotic applications in large-scale scenes and high-resolution missions.}

\begin{figure*}[t] 
    \centering
    \includegraphics[width=\textwidth]{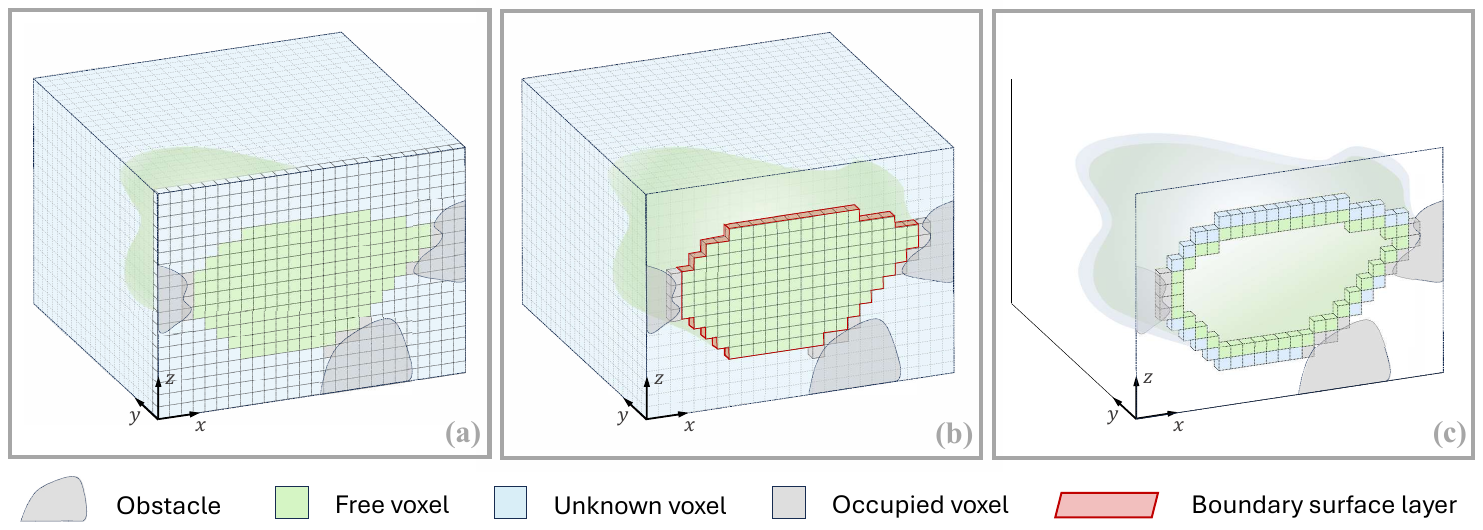}
    \vspace{-10pt} 

    \caption{(a) A uniform occupancy grid where each voxel is classified into one of three states: free, unknown, or occupied, represented by green, blue, and grey, respectively. For clarity, only the voxels along the cross-sectional plane are shown, with the remaining voxels represented by a simplified shape. (b) The boundary surface layer of the free space, highlighted in red, which separates the free regions from the adjacent occupied or unknown regions.  (c) Our proposed boundary map, which includes the free, unknown and occupied voxels that neighboring the boundary surface layer.}

    \label{fig:boundary_map}
\end{figure*}

\section{Overview}
\label{sec:overview}

{In the following sections, we present the detailed designs of our proposed method. To begin, we {introduce} the definition of the boundary map in Section~\ref{sec:boundary}. In Section~\ref{sec:determine}, we present theoretical foundations and methods for determining the occupancy state of an arbitrary location in the 3D environment by the 2D boundary map.
Next, in Section~\ref{sec:data_and_impl}, we describe the data structure designed to maintain the boundary map (see Section~\ref{sec:data_struct}), followed by a detailed algorithm based on this data structure that enables efficient occupancy state queries (see Section~\ref{sec:determine_impl}). Furthermore, we provide a time complexity analysis of this algorithm in Section~\ref{sec:determine_complx}.
Finally, Section~\ref{sec:framework} introduces our real-time global-local mapping framework. In this framework, the local map is a robot-centric occupancy grid map that maintains only the occupancy information around the robot. The global map is the low-dimensional boundary map, maintaining occupancy information located outside the local map region (see Section~\ref{sec:map_structure}). The complete update method of the mapping framework is then detailed in Section~\ref{sec:update}. Given this global-local structure where the global boundary map does not maintain information within the robot-centric region, we introduce a region-based query strategy for determining the occupancy state for different locations in the environment (see Section~\ref{sec:query}).}

\section{The Boundary Map}
\label{sec:boundary}

\begin{figure}[t] 
    \centering
    \includegraphics[width=\linewidth]{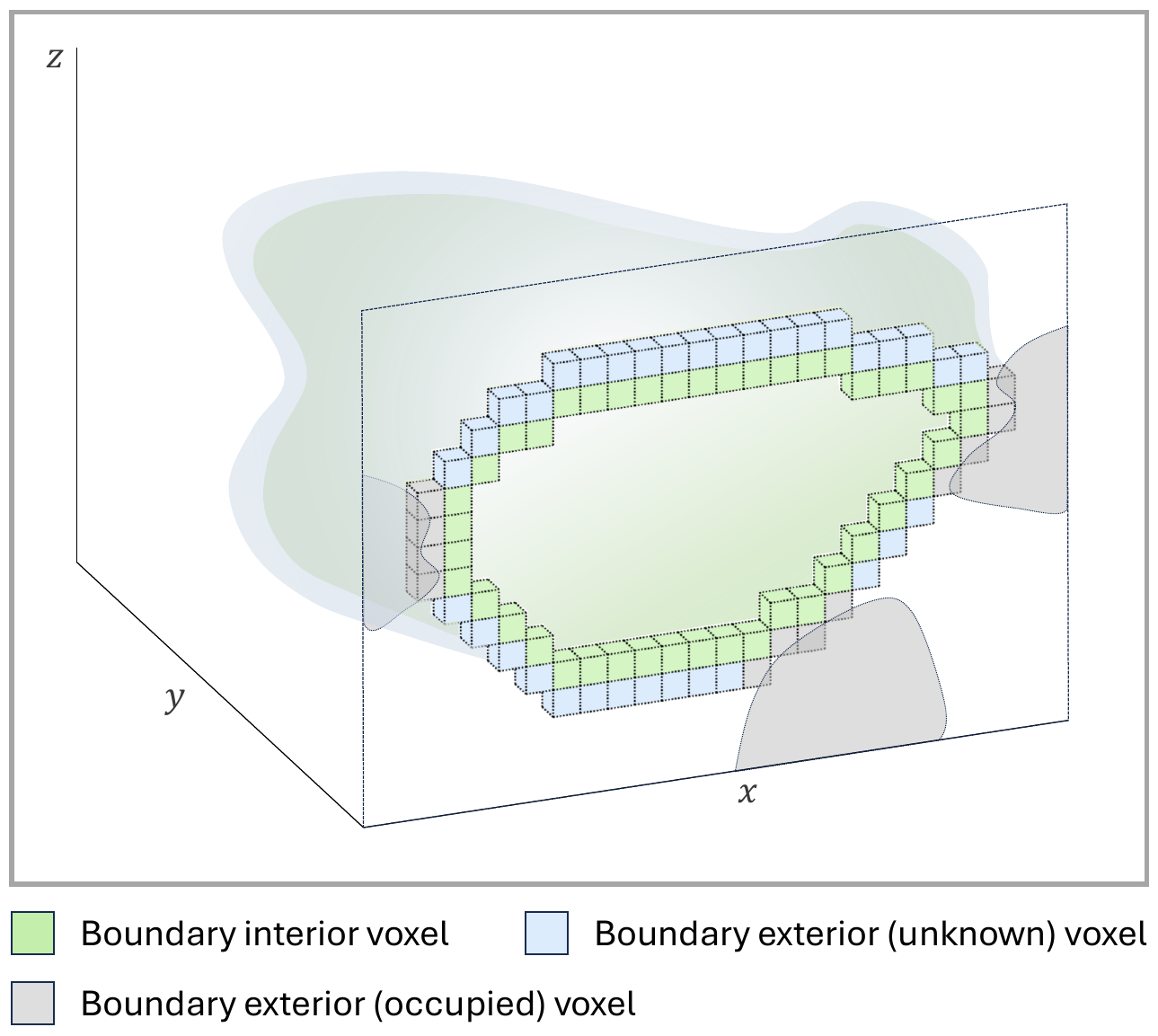}
    \vspace{-10pt} 

    \caption{The boundary map is composed by \textbf{boundary voxels}, which include \textbf{boundary interior voxels} (\(\mathbf{b}_{\mathrm{int}}\)) and \textbf{boundary exterior voxels} (\(\mathbf{b}_{\mathrm{ext}}\)). Boundary interior voxels are free voxels neighboring unknown or occupied voxel(s), and are visualized in green. Boundary exterior voxels include \textbf{boundary exterior (unknown) voxels} (\(\mathbf{b}_{\mathrm{ukn}}\)) and \textbf{boundary exterior (occupied) voxels} (\(\mathbf{b}_{\mathrm{occ}}\)), represented in blue and grey, respectively. Boundary exterior (unknown) voxels have an unknown occupancy state and neighboring free voxel(s). Boundary exterior (occupied) voxels are all voxels whose occupancy state is occupied.}

    \label{fig:boundary_def}
\end{figure}

{We define the voxels that constitute the boundary map as \textbf{boundary voxels}. In the following, we present the categorization of boundary voxels based on their occupancy states and neighboring configurations.}

In occupancy grid mapping, the known regions that are observed by sensor measurements are composed of voxels labeled as either free or occupied, while the remainder are categorized as unknown. We begin by focusing on the representation of occupied voxels. Since a sensor can only capture the surface of 3D objects, only voxels corresponding to the object’s surface are typically marked as occupied. Consequently, occupied voxels inherently exhibit a boundary-like structure. This further implies that occupied voxels constitute only a small subset of the entire map. We therefore explicitly store all occupied voxels in the boundary map, classifying them as one category of boundary voxels.

In contrast, free voxels often constitute the majority of the known regions in the environment. Explicitly storing all of them leads to substantial memory consumption, particularly in large-scale scenarios. To address this limitation, we propose a boundary representation of the free space that encodes only the interface between free and other non-free regions, avoiding the need to store the entire volume of free voxels. To formalize this representation, we first introduce the concept of the {boundary surface layer}. This layer consists of surfaces that separate free voxels from adjacent unknown or occupied voxels, as illustrated in red in Figure~\ref{fig:boundary_map}(b). While representing this boundary surface layer as a continuous geometric shape within a voxel-based framework would be challenging, we instead utilize voxel representations. Specifically, the boundary surface layer is represented by pairs of adjacent voxels. Each such pair consists of two 6-connected neighboring voxels that ``straddle'' the boundary surface layer. These voxel pairs are also categorized as boundary voxels.

We now describe the formal definition of these voxel pairs. The first type of voxel of such pair corresponds to the voxel residing on the {interior} side of the boundary surface layer. We define it as the {boundary interior voxel}, denoted as \(\mathbf{b}_\mathrm{int}\). Specifically, a voxel is classified as a {boundary interior voxel}, if its occupancy state is free and it has at least one 6-connected neighbor that is either unknown or occupied.

The second type lies on the {exterior} side of the boundary surface layer. Specifically, these are voxels whose occupancy state is either unknown or occupied, and that have at least one 6-connected neighbor that is free. Since all occupied voxels are already classified as boundary voxels and explicitly included in the boundary map, we only need to represent the exterior voxels that are in the unknown state. Formally, such voxels are defined as the {boundary exterior (unknown) voxel}, denoted as \(\mathbf{b}_\mathrm{ukn}\).

Notably, the occupied voxels also lie at the exterior of the boundary surface layer. Thus, we name them as the {boundary exterior (occupied) voxel}, denoted as \(\mathbf{b}_\mathrm{occ}\). 
In addition, we denote the boundary exterior (unknown) voxel and the boundary exterior (occupied) voxel together as the {boundary exterior voxel}, denoted as \(\mathbf{b}_\mathrm{ext}\).

In summary, the boundary map is composed of \textbf{boundary voxels}. The boundary voxel is classified as \textbf{boundary interior voxel} \(\mathbf{b}_\mathrm{int}\) and \textbf{boundary exterior voxel} \(\mathbf{b}_\mathrm{ext}\), where the \(\mathbf{b}_\mathrm{ext}\) is further classified into \textbf{boundary exterior (unknown) voxel} \(\mathbf{b}_\mathrm{ukn}\) and \textbf{boundary exterior (occupied) voxel} \(\mathbf{b}_\mathrm{occ}\). {The definition of boundary voxels is formulated as follows:}
\begin{equation}
\label{eqa:boundary_def}
{
\begin{aligned}
&\mathbf{b}_\mathrm{int} \in \left\{ \mathbf{n} \;\middle|\;
\begin{aligned}
&\texttt{occ}(\mathbf{n}) = \textit{free}, \\
&\exists\, \texttt{nbr}_6(\mathbf{n}) \in \{ \textit{unknown}, \textit{occupied} \}
\end{aligned}
\right\}, \\[4pt]
&\mathbf{b}_\mathrm{ukn} \in \left\{ \mathbf{n} \;\middle|\;
\begin{aligned}
&\texttt{occ}(\mathbf{n}) = \textit{unknown}, \\
&\exists\, \texttt{nbr}_6(\mathbf{n}) = \textit{free}
\end{aligned}
\right\}, \\[4pt]
&\mathbf{b}_\mathrm{occ} \in \left\{ \mathbf{n} \;\middle|\;
\texttt{occ}(\mathbf{n}) = \textit{occupied}
\right\}.
\end{aligned}
}
\end{equation}
{where \(\mathbf{n}\) denotes a voxel in the environment, \(\texttt{occ}(\mathbf{n})\) denotes the occupancy state of the voxel \(\mathbf{n}\), and \(\texttt{nbr}_6(\mathbf{n})\) denotes occupancy states of the 6-connected neighbors of the voxel \(\mathbf{n}\).} In addition, we define the \textbf{boundary voxel status} of a voxel \(\mathbf{n}\) to indicate whether it is a boundary voxel and, if so, its specific type.

An illustration of the definition of boundary voxels is presented in Figure~\ref{fig:boundary_def}.

\section{Occupancy State Determination}
\label{sec:determine}
In this section, we present the theoretical foundations and methods for determining the occupancy state of an arbitrary location in the environment by the boundary map. 

\begin{figure}[t] 
    \centering
    \includegraphics[width=\linewidth]{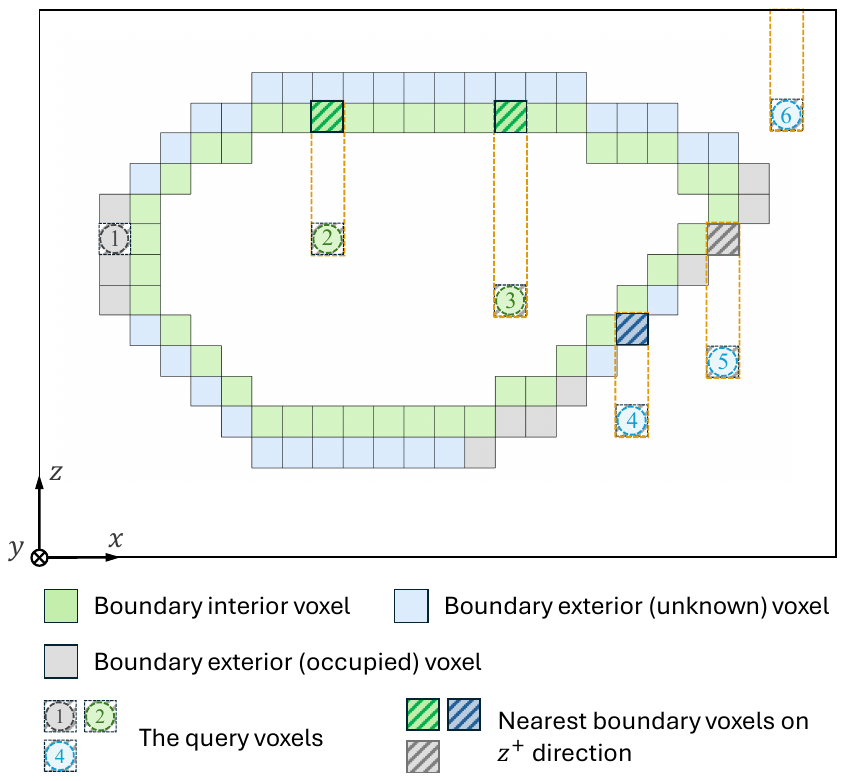}

    \caption{Occupancy state determination based on our boundary map. The diagram shows the determination process for several example query voxels (nos. 1–5). If the query voxel is located exactly on the boundary map, the query voxel state can be determined by type of the boundary voxel directly (query voxel no. 1). Otherwise, we search along the \(z^+\) direction {(i.e., a representative direction for illustration propose)} to find the nearest boundary voxel on the boundary map, denoted as \(\mathbf{b}_{{nn}}\). If \(\mathbf{b}_{{nn}}\) is a boundary interior voxel \(\mathbf{b}_{\mathrm{int}}\), the query voxel is determined as free (query voxels no. 2 and 3). If the \(\mathbf{b}_{{nn}}\) is a boundary exterior voxel \(\mathbf{b}_{\mathrm{ext}}\)(i.e., either unknown \(\mathbf{b}_{\mathrm{ukn}}\) or occupied \(\mathbf{b}_{\mathrm{occ}}\)), the query voxel is determined as unknown (query voxels no. 4 and 5). If \(\mathbf{b}_{{nn}}\) does not exist, the query voxel is determined as unknown too (query voxel no. 6).
}

    \label{fig:query}
\end{figure}

In the following discussions, voxels are represented by their indices. Let the index of a voxel be denoted by 
\begin{equation}
\mathbf{p}_i = (p_i^x, p_i^y, p_i^z) \in \mathbb{Z}^3,
\end{equation}
and its corresponding position in the environment by 
\begin{equation}
\mathbf{p}_d = (p_d^x, p_d^y, p_d^z) \in \mathbb{R}^3.
\end{equation}
The relationship between a voxel's index and its position is given by:
\begin{equation}
\begin{split}
p_i^x &= \text{round}\left( \frac{p_d^x}{d} \right), \\
p_i^y &= \text{round}\left( \frac{p_d^y}{d} \right), \\
p_i^z &= \text{round}\left( \frac{p_d^z}{d} \right),
\end{split}
\label{eqa:round}
\end{equation}
where \( d \) is the map resolution. For brevity, in subsequent discussions we refer to voxels solely by their integer indices and omit the subscript \( i \).

In occupancy grid mapping, determining the occupancy state of a location in the environment corresponds to determin{ing} the occupancy state of the voxel that the location falls within. We refer to this voxel as the query voxel, denoted as \(\mathbf{q} = (q^x, q^y, q^z) \in \mathbb{Z}^3\). In the following, we introduce the method for determining the occupancy state of the query voxel \(\mathbf{q}\). This determination procedure is also outlined in Algorithm~\ref{alg:occupancy_state}.

\begin{algorithm}[t]
 \SetAlgoLined
 \caption{Determine Occupancy State}
 \label{alg:occupancy_state}
 \KwIn{Query voxel $\textbf{q}$, Search direction $\mathcal{E}$}
 \KwOut{Occupancy state of $\textbf{q}$: \textit{free}, \textit{occupied}, or \textit{unknown}}
 \BlankLine
 $\mathbf{b}_{nn}, r_\text{min} \gets \mathtt{FindNearestInDirection}(\mathbf{q}, \mathcal{E})$\;
\If{$\mathbf{b}_{nn} \neq \mathtt{null}$}{
    \If{$\mathbf{b}_{nn} = \mathbf{b}_\mathrm{int}$}{
        \Return \textit{free}\;
    }
    \Else{
        \If{$\mathbf{b}_{nn} = \mathbf{b}_\mathrm{occ}$ \textbf{and} $r_\mathrm{min} = 0$}{
            \Return \textit{occupied}\;
        }
    }
}
\Return \textit{unknown}\;
\end{algorithm}

In the subsequent discussions, we use the superscripts \( ^{+} \) and \( ^{-} \) to denote the positive and negative directions along an axis, respectively. For example, the positive direction along the \( z \)-axis is denoted by \( z^+ \) and the negative direction by \( z^- \).

To determine the occupancy state of a query voxel \(\mathbf{q}\), we begin by searching for its nearest boundary voxel till the spatial extent of the environment along one of these six directions: \(\{x^+, x^-, y^+, y^-, z^+, z^- \} \).
The selected direction is referred to as the search direction \(\mathcal{E}\), and it serves as an input to Algorithm~\ref{alg:occupancy_state}. For illustrative purposes, we select \(z^+\) as the search direction in the following discussion.

We begin by considering the case where this nearest boundary voxel is successfully found along the search direction \(\mathcal{E}\) (e.g., \(z^+\)). We denote this boundary voxel as \(\mathbf{b}_{nn} = (b_{nn}^x, b_{nn}^y, b_{nn}^z) \in \mathbb{Z}^3 \), and define the corresponding nearest distance to the query voxel \(\mathbf{q}\) as \(r_{\text{min}}\), which is computed as:

\begin{equation}
\label{eqa:r_min}
r_{\text{min}} = \left|{b}_{{nn}}^z - {q}^z \right|.
\end{equation}
The above process is named as \(\mathtt{FindNearestInDirection}\) function (Line 1). 

Then, the occupancy state of the query voxel \(\mathbf{q}\) is determined based on the type of \(\mathbf{b}_{{nn}}\) and the nearest distance \( r_{\text{min}} \).
If \(r_{\text{min}} = 0\), indicating that \(\mathbf{q}\) lies exactly on the boundary map, its occupancy state is defined by the boundary voxel definition in Equation~\ref{eqa:boundary_def}. Specifically, in this case, \(\mathbf{q}\) corresponds to one of three types: boundary interior voxel \(\mathbf{b}_{\mathrm{int}}\), boundary exterior (unknown) voxel \(\mathbf{b}_{\mathrm{ukn}}\), or boundary exterior (occupied) voxel \(\mathbf{b}_{\mathrm{occ}}\), with its occupancy state determined as free, unknown, or occupied, respectively.

If \( r_{\text{min}} > 0 \), the following theorem is introduced to determine the occupancy state of \(\mathbf{q}\).

\begin{theorem}
\label{thm:query}

If \( r_{\text{min}} > 0 \), the occupancy state of the query voxel \(\mathbf{q}\) is determined as follows: if \(\mathbf{b}_{{nn}}\) is a boundary interior voxel \(\mathbf{b}_{\mathrm{int}}\), then the occupancy state of \(\mathbf{q}\) is determined as {free}. Conversely, if \(\mathbf{b}_{{nn}}\) is a boundary exterior voxel \(\mathbf{b}_{\mathrm{ext}}\), including both \(\mathbf{b}_{\mathrm{ukn}}\) and \(\mathbf{b}_{\mathrm{occ}}\), the occupancy state of \(\mathbf{q}\) is {unknown}.
\end{theorem}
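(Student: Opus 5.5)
We argue along the one-dimensional column of voxels between $\mathbf{q}$ and $\mathbf{b}_{nn}$. Take $z^+$ as the search direction; the other five directions follow by symmetry. Write $\mathbf{v}_k = (q^x, q^y, q^z + k)$ for $k = 0, 1, \dots, r_{\text{min}}$, so that $\mathbf{v}_0 = \mathbf{q}$ and $\mathbf{v}_{r_{\text{min}}} = \mathbf{b}_{nn}$. Because $\mathbf{b}_{nn}$ is the \emph{nearest} boundary voxel along $z^+$, none of $\mathbf{v}_0, \dots, \mathbf{v}_{r_{\text{min}}-1}$ is a boundary voxel. Consecutive voxels $\mathbf{v}_k$ and $\mathbf{v}_{k+1}$ are 6-connected neighbors. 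The whole proof rests on one propagation lemma: a non-boundary voxel cannot be 6-adjacent to a voxel of a different ``side'' (free versus non-free).

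\textbf{Step 1 (non-boundary voxels are free or unknown, and constant along the column).} From Equation~\ref{eqa:boundary_def}, every occupied voxel is a boundary voxel, so each $\mathbf{v}_k$ with $k < r_{\text{min}}$ is either free or unknown. There are two cases.
\begin{itemize}
\item If $\mathbf{v}_k$ is free and not a boundary voxel, then by the definition of $\mathbf{b}_{\mathrm{int}}$ none of its 6-neighbors is unknown or occupied. So all its 6-neighbors, in particular $\mathbf{v}_{k+1}$, are free.
\item If $\mathbf{v}_k$ is unknown and not a boundary voxel, then by the definition of $\mathbf{b}_{\mathrm{ukn}}$ none of its 6-neighbors is free. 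So $\mathbf{v}_{k+1}$ is unknown or occupied.
\end{itemize}
By induction on $k$, all of $\mathbf{v}_0, \dots, \mathbf{v}_{r_{\text{min}}-1}$ share the same state: all free or all unknown. The transition at the last step tells us which.

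\textbf{Step 2 (case analysis on $\mathbf{b}_{nn}$).} Suppose first that $\mathbf{b}_{nn} = \mathbf{b}_{\mathrm{int}}$, which is free. If the column were all unknown, then $\mathbf{v}_{r_{\text{min}}-1}$ would be an unknown voxel with a free 6-neighbor, i.e.\ a $\mathbf{b}_{\mathrm{ukn}}$. That contradicts minimality of $r_{\text{min}}$. Hence the column is free, and $\mathbf{q}$ is free.

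Suppose instead that $\mathbf{b}_{nn}$ is a $\mathbf{b}_{\mathrm{ext}}$, which is unknown or occupied. If the column were all free, then $\mathbf{v}_{r_{\text{min}}-1}$ would be a free voxel with an unknown or occupied 6-neighbor, i.e.\ a $\mathbf{b}_{\mathrm{int}}$. Again this is a contradiction. Hence the column is unknown, and $\mathbf{q}$ is unknown. Since $r_{\text{min}} > 0$, the voxel $\mathbf{q}$ is indeed one of $\mathbf{v}_0, \dots, \mathbf{v}_{r_{\text{min}}-1}$, so these conclusions apply to it.

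\textbf{Main obstacle.} The delicate step is Step 1, specifically the claim that a non-boundary voxel is never occupied and that the free/non-free side propagates across 6-adjacency. This is exactly where the definitions in Equation~\ref{eqa:boundary_def} must be used. It matters that $\mathbf{b}_{\mathrm{occ}}$ contains \emph{all} occupied voxels: that is what prevents an occupied voxel from appearing silently in the interior of the column. It also matters that the search moves in single axis-aligned steps, so consecutive voxels are genuinely 6-neighbors.
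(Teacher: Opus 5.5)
Your proof is correct and is essentially the paper's argument. Both walk the column between $\mathbf{q}$ and $\mathbf{b}_{nn}$ using the fact that a non-boundary voxel is either free with only free 6-neighbors or unknown with no free 6-neighbor; the only difference is that the paper inducts backward from $\mathbf{b}_{nn}$, fixing the state of $\mathbf{n}_k$ first, whereas you first show the column is constant and then read off its state from the last adjacency.
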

{{An intuitive explanation for the theorem is given below, with a rigorous derivation presented in Appendix~\ref{app:proof_query}.} Traversing a boundary surface layer indicates an occupancy state transition between a free voxel and another occupancy state (i.e., unknown or occupied). 
By construction, there can be no boundary surface layer between \(\mathbf{b}_{{nn}}\) and \(\mathbf{q}\), since any such surface would imply the existence of a closer boundary voxel to \(\mathbf{q}\). It follows that no occupancy state transition occurs along the path from \(\mathbf{b}_{{nn}}\) to \(\mathbf{q}\). Therefore, if \(\mathbf{b}_{{nn}}\) has an occupancy state of free, the query voxel \(\mathbf{q}\) must also be free.  
Conversely, if \(\mathbf{b}_{{nn}}\) has an unknown or occupied state, then \(\mathbf{q}\) is unknown. The reason \(\mathbf{q}\) cannot be occupied is that all occupied voxels are explicitly encoded in the boundary map, and \(\mathbf{q}\) is not a boundary voxel by the condition of the theorem.}

Next, we consider the case where no boundary voxel is found along the search direction \(\mathcal{E}\) within the spatial extent of the environment. Under this condition, the occupancy state of \(\mathbf{q}\) is determined by the following theorem.

\begin{theorem}
\label{thm:query_null}
If no boundary voxel is found along the search direction \(\mathcal{E}\) within the extent of the environment, the occupancy state of the query voxel \(\mathbf{q}\) is determined as unknown.
\end{theorem}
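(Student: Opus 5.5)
We argue by contradiction, using the voxels on the ray from \(\mathbf{q}\) along the search direction \(\mathcal{E}\); we take \(\mathcal{E}=z^+\), and the other five directions are symmetric. Let \(\mathbf{v}_k = (q^x, q^y, q^z + k)\) for \(k = 0, 1, \dots, K\), where \(\mathbf{v}_K\) is the last voxel inside the spatial extent of the environment. The hypothesis says that no \(\mathbf{v}_k\), including \(\mathbf{v}_0=\mathbf{q}\), is a boundary voxel in the sense of Equation~\ref{eqa:boundary_def}. We must show \(\texttt{occ}(\mathbf{q}) = \textit{unknown}\).

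First, \(\mathbf{q}\) is not occupied. Every occupied voxel is a boundary exterior (occupied) voxel \(\mathbf{b}_{\mathrm{occ}}\) by definition, so it would have been found at \(r_{\min}=0\). The same reasoning shows that no \(\mathbf{v}_k\) on the ray is occupied.

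Second, suppose for contradiction that \(\mathbf{q}\) is free. We prove by induction on \(k\) that every \(\mathbf{v}_k\) is free.
\begin{itemize}
\item The base case \(k=0\) is the assumption.
\item For the step, let \(\mathbf{v}_k\) be free. Since \(\mathbf{v}_k\) is not a boundary interior voxel \(\mathbf{b}_{\mathrm{int}}\), none of its 6-connected neighbors is unknown or occupied. Its neighbor \(\mathbf{v}_{k+1}\) lies on the search line, so \(\mathbf{v}_{k+1}\) is free.
\end{itemize}
This is the same ``no state transition without crossing the boundary surface layer'' argument used for Theorem~\ref{thm:query}, now run all the way to the end of the search line.

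The conclusion is that \(\mathbf{v}_K\), the last voxel within the extent of the environment, is free and not a boundary voxel. This is where we need the key modelling convention: the mapped (known) region is bounded inside the environment extent, and every voxel outside that extent, in particular the \(z^+\) neighbor \(\mathbf{v}_{K+1}\) of \(\mathbf{v}_K\), is treated as unknown, since it has never been observed. Under this convention \(\mathbf{v}_K\) is free with an unknown 6-connected neighbor, so it is a \(\mathbf{b}_{\mathrm{int}}\). This contradicts the hypothesis. Hence \(\mathbf{q}\) is neither free nor occupied, so it is unknown.

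The main obstacle is making this boundary convention precise. One must state explicitly that the search extent strictly contains the known region, or equivalently that voxels beyond the extent are unknown, so that the free chain cannot run off the map without producing a boundary voxel. I would state this as an assumption about the environment extent, justified by the fact that the sensor observes only a bounded region. I would also note that it is consistent with Algorithm~\ref{alg:occupancy_state} returning \textit{unknown} when \(\mathbf{b}_{nn}=\mathtt{null}\).
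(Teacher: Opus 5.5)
Your proof is correct and is essentially the paper's argument: both rest on the convention that the voxel immediately beyond the environment extent along $\mathcal{E}$ is \emph{unknown}, and both propagate occupancy state along a chain of non-boundary 6-neighbors on the search line. The only difference is direction and framing. The paper propagates \emph{unknown} backward from that reference voxel to $\mathbf{q}$ as a direct proof. You instead rule out \emph{occupied} separately, then propagate \emph{free} forward from $\mathbf{q}$ and reach a contradiction at $\mathbf{v}_K$.
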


{A rigorous proof of this theorem is provided in Appendix~\ref{app:proof_query_null}. In the following, we present an intuitive understanding of the
proof’s approach. We consider a reference voxel located beyond the spatial extent of the environment. By the condition of the theorem, no boundary surface layer exists between the reference voxel and the query voxel \(\mathbf{q}\), it follows that no transition has occurred between free and other occupancy states. Since this reference voxel lies outside the observable domain, its occupancy state is considered unknown. Moreover, the query voxel \(\mathbf{q}\) cannot be in occupied state as it is not a boundary voxel. Thus, the occupancy state of \(\mathbf{q}\) is unknown.}

In summary, if $\mathbf{b}_{{nn}}$ exists, we first compute its distance to the query voxel \(\mathbf{q}\). This distance is then used to determine whether the query voxel \(\mathbf{q}\) lies on the boundary map. If it does, its occupancy state is directly determined by the type of $\mathbf{b}_{{nn}}$. Otherwise, the occupancy state is determined by Theorem~\ref{thm:query}. If $\mathbf{b}_{{nn}}$ does not exist, the occupancy state of the query voxel \(\mathbf{q}\) is determined as unknown by Theorem~\ref{thm:query_null}. {This procedure is summarized in Algorithm~\ref{alg:occupancy_state} (Lines~2--12), in which some conditional branches are merged for brevity and clarity.} An illustrative example is provided in Figure~\ref{fig:query}.


\section{Data Structure and Implementation}
\label{sec:data_and_impl}
{In this section, we first introduce a novel data structure designed to maintain the boundary map (see Section~\ref{sec:data_struct}). Figure~\ref{fig:hash_2d_grid} presents an illustration of this data structure.
Based on the proposed data structure, we describe the algorithmic implementation for efficiently determining the occupancy state of an arbitrary location in the environment (i.e., performing a map query) (see Section~\ref{sec:determine_impl}). Then, an analysis of time complexity of the algorithm is presented in Section~\ref{sec:determine_complx}.}

\subsection{Data Structure of the Boundary Map}
\label{sec:data_struct}

\begin{figure}[t]
    \centering
    \includegraphics[width=\linewidth]{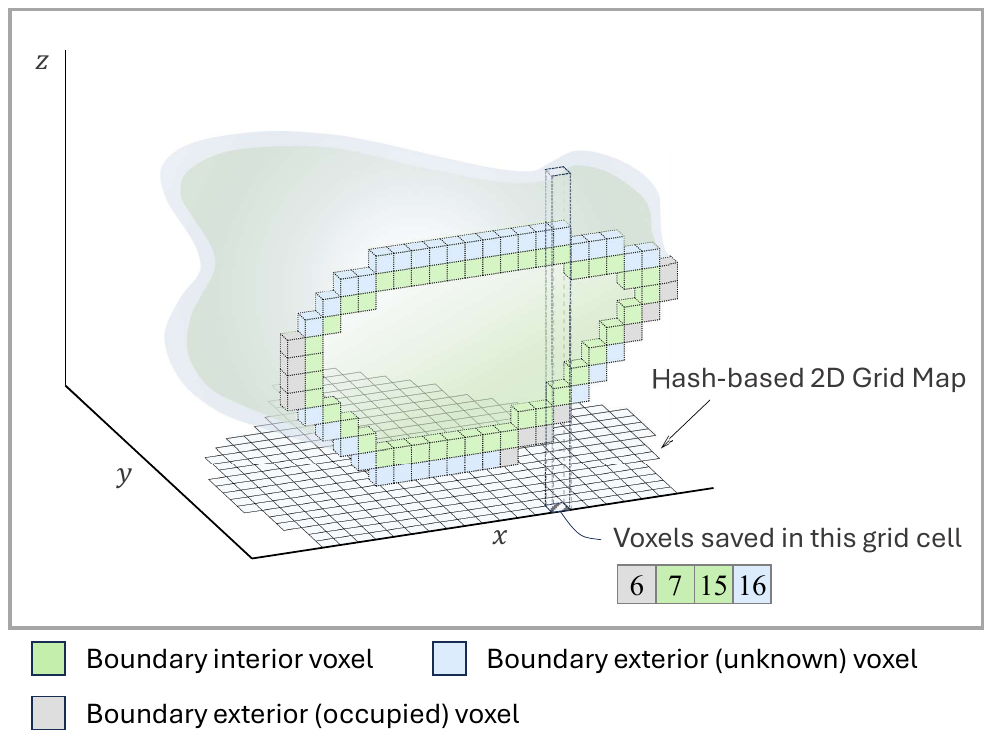}

    \caption{The boundary map, which consists of boundary voxels in the three-dimension (3D), is maintained in a two-dimensional (2D) data structure. Without loss of generality, \(z\)-axis is selected as the projection axis for illustration purpose in the manuscript. The boundary map is projected along the \(z\)-axis onto the \((x, y)\)-plane. Then a hash-based 2D grid map is maintained on the \((x, y)\)-plane where each grid cell stores boundary voxels projected onto that cell. Each boundary voxel in the grid cell is represented by its \(z\)-coordinate and a type indicator indicating its type (i.e., \(\mathbf{b}_{\mathrm{int}}\), \(\mathbf{b}_{\mathrm{ukn}}\), and \(\mathbf{b}_{\mathrm{occ}}\)). {The numbers in the figure correspond to the \(z\)-coordinates of the boundary voxels.}}

    \label{fig:hash_2d_grid}
\end{figure}

We begin by introducing the concepts of the projection axis and the projection plane. One of the \(x\), \(y\), or \(z\) axes is designated as the projection axis. The plane orthogonal to this axis is referred to as the projection plane, defined by the remaining two axes in their natural sequential order. To facilitate explanation, we use \(z\) as the projection axis throughout the remainder of this manuscript, and accordingly, the projection plane corresponds to the \((x, y)\)-plane.

We project all boundary voxels along the \(z\)-axis onto the \((x, y)\)-plane. Then we maintain a 2D grid map on this plane, where each grid cell stores boundary voxels projected onto it. The boundary voxels stored in a 2D grid cell {are} organized by an array and are sorted based on their \( z \)-coordinates. In other words, a grid cell \(\mathbf{c} = (c^x, c^y) \in \mathbb{Z}^2\) of the 2D grid map maintains all boundary voxels that share the same \((x, y)\)-coordinates as \((c^x, c^y)\).

The 2D grid map is implemented based on a hash table. Each grid cell \(\mathbf{c} = (c^x, c^y)\) is indexed by a hash key computed as:
\begin{equation}
\texttt{Hash}(\mathbf{c}) = (\texttt{P} \cdot c^x + c^y) \bmod \texttt{Q},
\end{equation}
where \(\texttt{P}\) is a prime number chosen as 1441 for improved hash distribution, and \(\texttt{Q}\) is the hash table size.


For the boundary voxel saved in the 2D grid cell, it suffices to store its \(z\)-coordinate and an indicator of its type. Each boundary voxel belongs to one of three types: boundary interior voxel \( \mathbf{b}_\mathrm{int} \), boundary exterior (unknown) voxel \( \mathbf{b}_{\mathrm{ukn}} \) or boundary exterior (occupied) voxel \( \mathbf{b}_{\mathrm{occ}} \). We represent each boundary voxel using a {32-bit integer (\(\texttt{Int32}\))}: the lower 30 bits encode the \(z\)-coordinate, and the upper 2 bits are sufficient to encode the boundary voxel type.

{Unlike conventional occupancy grid maps that maintain the entire three-dimensional (3D) volume, the boundary map stores only two-dimensional (2D) boundary voxels, leading to a significantly reduced voxel count. This low-dimensional representation also enables the boundary map to be maintained by a 2D grid. Moreover, each boundary voxel requires only 4 bytes of storage, which further enhances the memory efficiency.} 

\subsection{Implementation of Occupancy State Determination}
\label{sec:determine_impl}

To determine the occupancy state of a query voxel \(\mathbf{q} = (q^x, q^y, q^z) \in \mathbb{Z}^3\), we first lookup the hash table for the 2D grid cell \(\mathbf{c}_{{q}} = (q^x, q^y) \in \mathbb{Z}^2\), which having the same \((x, y)\)-coordinates with the query voxel \(\mathbf{q}\). 
{If the 2D grid cell \(\mathbf{c}_{{q}}\) does not exist in the hash table, it implies that this grid cell \(\mathbf{c}_{{q}}\) contains no boundary voxels, which indicates \(\mathbf{b}_{{nn}}\) does not exist. In this case, we assign a \texttt{null} flag to \(\mathbf{b}_{{nn}}\). On the other hand, if the grid cell \(\mathbf{c}_{{q}}\) is found in the hash table, we proceed by retrieving the array stored in the grid cell \(\mathbf{c}_{{q}}\).} This array contains all boundary voxels in the environment whose \((x, y)\)-coordinates equal \((q^x, q^y)\). Given that the \(z\)-axis is selected as the projection axis, the search direction \(\mathcal{E}\) can be chosen as either \(z^+\) or \(z^-\). Since the voxels in each array are pre-sorted along the \(z\)-axis, the \(\mathbf{b}_{{nn}}\) can be located efficiently using binary search. Specifically, if the search direction is \(z^+\), \(\mathbf{b}_{{nn}}\) is the first voxel in the array whose \(z\)-coordinate is greater than or equal to \(q^z\). Conversely, if the search direction is \(z^-\), \(\mathbf{b}_{{nn}}\) is the first voxel whose \(z\)-coordinate is less than or equal to \(q^z\). {When search direction is $z^+$, the \(\mathbf{b}_{{nn}}\) does not exist if the $z$-coordinates of all boundary voxels saved in this grid cell \(\mathbf{c}_{{q}}\) are less than $q^z$. On the other hand, when the search direction is $z^-$, this is the case when the $z$-coordinates of all boundary voxels saved in \(\mathbf{c}_{{q}}\) are greater than $q^z$. In both cases, we assign a \texttt{null} flag to \(\mathbf{b}_{{nn}}\).}
This procedure corresponds to the implementation of the \texttt{FindNearestInDirection} function in Algorithm~\ref{alg:occupancy_state} (Line 1). Once \(\mathbf{b}_{{nn}}\) is identified, the occupancy state of the query voxel \(\mathbf{q}\) can be determined by following the remaining steps outlined in Algorithm~\ref{alg:occupancy_state} (Line 2--{12}).

In occupancy mapping, determining the occupancy state of a given location in the environment is commonly referred to as a query operation. Accordingly, we refer to the above algorithm as the query algorithm in our method.

\subsection{Time Complexity Analysis}
\label{sec:determine_complx}
The time complexity for the query algorithm consists of two components. (i) Retrieving the boundary voxels in the 2D grid cell through a hash table lookup, which requires \(\mathcal{O}(1)\) time on average. (ii) Locating the voxel \(\mathbf{b}_{{nn}}\) involves a binary search on all boundary voxels saved in the 2D grid cell. The time complexity of this second step depends on the number of such voxels, while this number varies for different query voxel's locations. {Therefore, we derive the average time complexity considering the occupancy state queries across the entire map, as provided below.}

\begin{theorem}
\label{thm:time_complx}
The average time complexity for performing a map query is \(\mathcal{O}(1)\).
\end{theorem}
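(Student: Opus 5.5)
The plan is to average over all query voxels in the mapped environment, treating the hash lookup as $\mathcal{O}(1)$ on average. The cost of a single query at $\mathbf{q}=(q^x,q^y,q^z)$ is then $\mathcal{O}(1)+\mathcal{O}(\log(1+n_{\mathbf{c}}))$, where $n_{\mathbf{c}}$ is the number of boundary voxels stored in the 2D grid cell $\mathbf{c}=(q^x,q^y)$. I would set up the environment as a box of $N_x\times N_y\times N_z$ voxels, with $D$ the environment scale and $d$ the resolution, so that $N_z\sim D/d$. Queries are taken uniformly over these voxels, so each 2D cell receives $N_z$ queries. The average cost is therefore
\[
\bar T = \mathcal{O}(1) + \frac{1}{N_xN_y}\sum_{\mathbf{c}} \mathcal{O}\bigl(\log(1+n_{\mathbf{c}})\bigr).
\]

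The first key step uses concavity of $\log$ (Jensen's inequality). It gives
\[
\frac{1}{N_xN_y}\sum_{\mathbf{c}}\log(1+n_{\mathbf{c}})\le \log\Bigl(1+\frac{N_b}{N_xN_y}\Bigr),
\]
where $N_b=\sum_{\mathbf{c}} n_{\mathbf{c}}$ is the total number of boundary voxels. So it suffices to show that the average number of boundary voxels per column, $N_b/(N_xN_y)$, is bounded by a constant independent of $D/d$.

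The second step, and the main obstacle, is bounding $N_b$ by $\mathcal{O}(N_xN_y)$. I would invoke the low-dimensional property from Section~\ref{sec:boundary}: boundary voxels lie on the 2D boundary surface layer, namely the object surfaces (the occupied voxels) and the free/non-free interface, together with the voxel pairs straddling it. These surfaces have area $A$ independent of resolution, so they contain $\mathcal{O}(A/d^2)$ voxels, and the cross-sectional area of the environment contains $N_xN_y\sim D^2/d^2$ cells. The ratio is $\mathcal{O}(A/D^2)$, a scene-dependent constant reflecting the average number of surface crossings a vertical line makes (a bounded depth complexity). I would state this as the modeling assumption: the number of times a column crosses the boundary surface is bounded on average by a constant $k$, and each crossing contributes at most a constant number of boundary voxels, for example an interior/exterior pair or a small occupied run. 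This assumption is the delicate point. A purely combinatorial bound fails, since adversarial maps can have $\Theta(D/d)$ boundary voxels per column, so the theorem must rest on the surface-area assumption, and I would state it explicitly.

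Combining the two steps gives $\bar T=\mathcal{O}(1)+\mathcal{O}(\log(1+k))=\mathcal{O}(1)$. I would also remark that restricting queries to cells that exist in the hash table does not change this conclusion. If a cell is absent, the query terminates in $\mathcal{O}(1)$ after the lookup. If it is present, the Jensen argument applies to the nonempty cells only, and the average per nonempty column is still $\mathcal{O}(k)$.
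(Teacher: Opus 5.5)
Your argument is correct and follows the paper's route. You average the per-query cost over all voxels of the environment, collapse the column-wise binary-search costs into the total boundary-voxel count $K$ (your $N_b$), and rest the conclusion on the same modeling assumption that $K=\mathcal{O}\bigl((D/d)^2\bigr)$ scales with surface area, which the paper likewise concedes fails for tortuous-corridor worst cases.

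The only difference is in the binary-search term. The paper crudely bounds $\log k(\mathbf{q})$ by $k(\mathbf{q})$ and gets an average of $\mathcal{O}\bigl(1+K/(D/d)^2\bigr)$. Your Jensen step keeps the logarithm and gives $\mathcal{O}\bigl(1+\log(1+N_b/(N_xN_y))\bigr)$. The two agree under the assumption, but yours degrades only to $\mathcal{O}(\log(D/d))$ instead of $\mathcal{O}(D/d)$ when $K$ grows volumetrically.
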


\begin{proof}
See Appendix~\ref{app:proof_time_complx}.
\end{proof}

{Note that the 2D grid map is also implemented by a hash table. In the worst case, query performance can degrade from $\mathcal{O}(1)$ to $\mathcal{O}(n)$ due to hash collisions. However, compared with the hash-based 3D grid map, our approach maintains only a 2D grid, which results in a significantly smaller hash table and therefore fewer collisions.} 

\section{Global-local Mapping Framework}
\label{sec:framework}

We propose a global-local mapping framework designed to support updates of the boundary map from the real-time sensor measurements. The framework integrates a robot-centric local map with the global map. {Specifically, the local map is a fixed-size occupancy grid map centered on the vehicle’s current position, maintaining occupancy information around the robot. The global map is a boundary map, maintaining occupancy information outside the local map region. The detailed map structure is described in Section~\ref{sec:map_structure}.} Based on this global-local structure, we introduce the corresponding update method in Section~\ref{sec:update} and the region-based query strategy in Section~\ref{sec:query}. An overview of the proposed mapping framework is illustrated in Figure~\ref{fig:overview}.

\begin{figure*}[t]
    \centering
    \includegraphics[width=\textwidth]{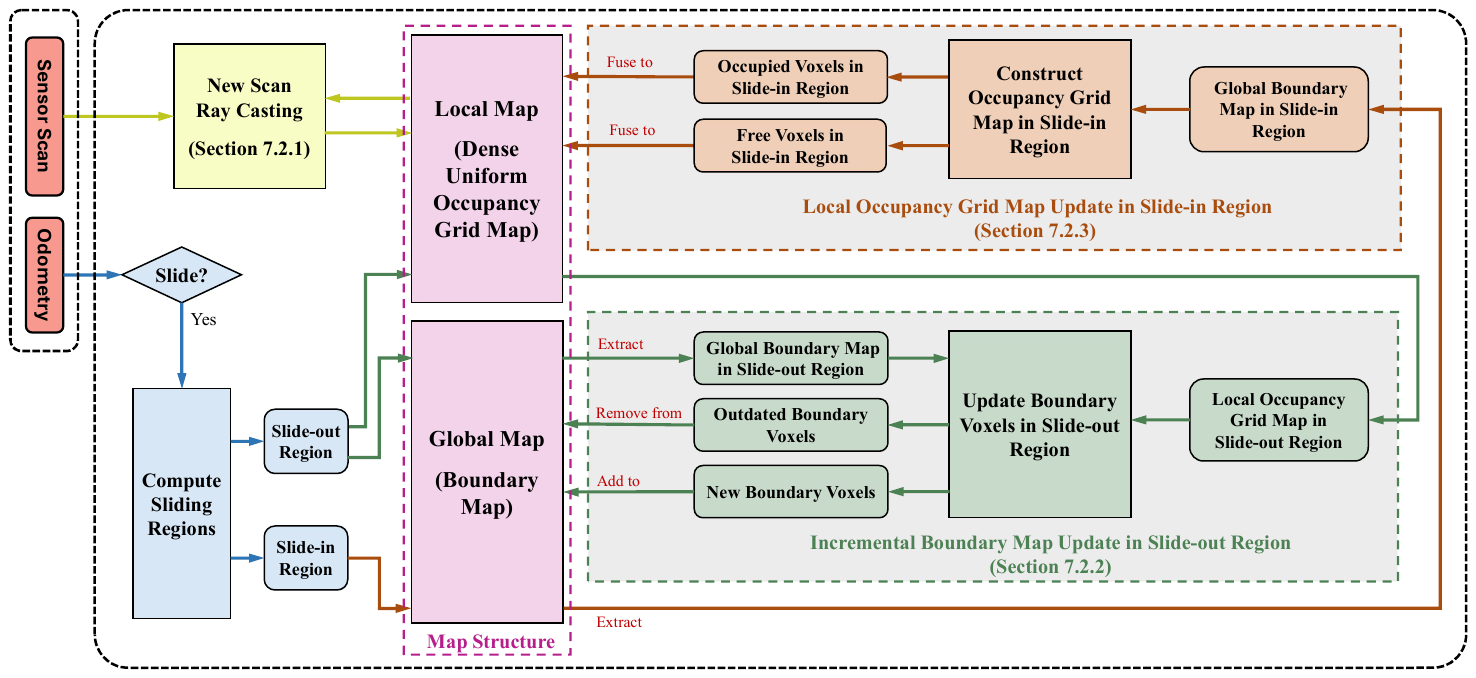}
    \vspace{-10pt}

    \caption{Overview of the proposed global-local mapping framework. The system takes odometry and sensor scans as inputs. The pink block illustrates the map structure, consisting of a global map (the low-dimensional boundary map) and a local map (a dense uniform occupancy grid map). The mapping framework is updated once every new odometry and sensor data arrive. For each point in the new scan, the \textbf{New Scan Ray Casting} performs ray casting and probabilistic occupancy update on the local occupancy grid map. The new odometry is used to determine if the local map needs to slide and, if so, compute the slide-in region \(\mathcal{S_I}\) and slide-out region \(\mathcal{S_O}\) of the local map. If a slide occurs, the occupancy state within the slide-out region of the local map is updated to the global map by the  \textbf{Incremental Boundary Map Update} (green block), while the occupancy state within the slide-in region of the local map is loaded from the global map via the \textbf{Local Occupancy Grid Map Update} (brown block).}

    \label{fig:overview}
\end{figure*}

\subsection{Map Structure}
\label{sec:map_structure}

{Similar to ROG-Map~\cite{ren2023rog}, the local map is implemented as a uniform occupancy grid map, which centers on vehicle's current position and dynamically moves with the vehicle.}
The map size is fixed and is configured based on the sensor's sensing range, ensuring full coverage of the sensing region. Each voxel in grid stores a floating-point value representing its log-odds occupancy probability. The grid is stored in memory as a one-dimensional array. We adopt a similar index mapping scheme used in ROG-Map to convert 3D voxel indices to unique addresses in this array. This mapping scheme ensures that any voxel within the local map is consistently assigned a fixed address, regardless of the local map's sliding, thereby enabling a zero-copy behavior.

The global map complements the local map by maintaining the occupancy information outside the robot-centric region. 
The detailed data structure is described in Section~\ref{sec:data_struct}.

\subsection{Map Update}
\label{sec:update}

\begin{figure*}[t] 
    \centering
    \includegraphics[width=\textwidth]{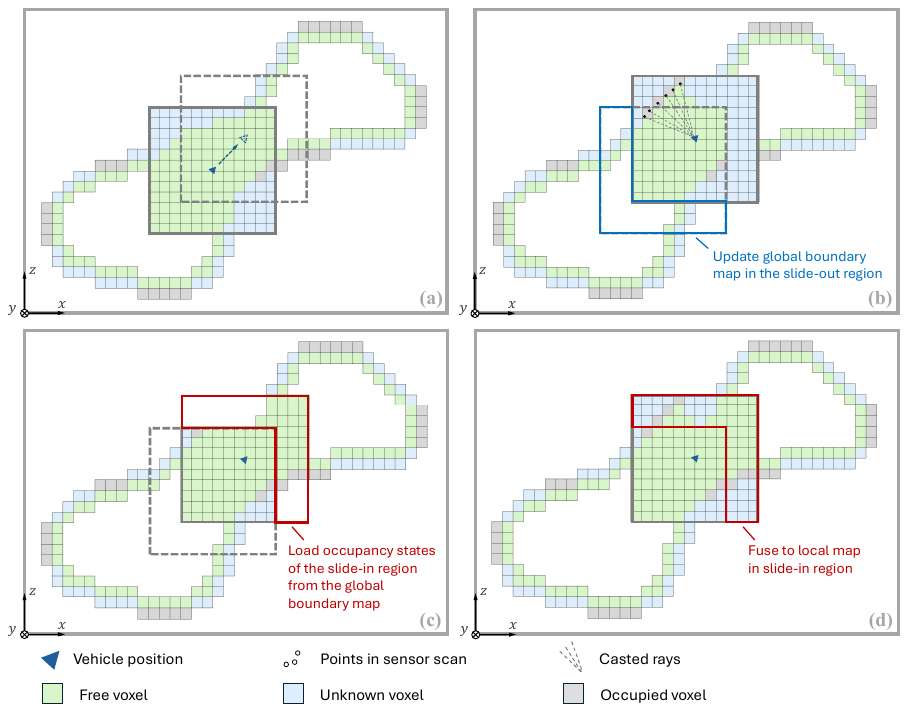}
    \vspace{-10pt} 

    \caption{Illustration of the local and global map update if a slide occurs in Figure~\ref{fig:overview}. (a) The global map (boundary map) and local map (a uniform occupancy grid map) before sliding, and the local map region before and after the sliding. (b) The local map slides to the new position and updates the occupancy state within the map region (including both slide-in region and the overlapped region) by ray casting all points in the new LiDAR scan (\textbf{New Scan Ray Casting}). Meanwhile, boundary voxels in the slide-out region are computed and updated to the global boundary map (\textbf{Incremental Boundary Map Update}). (c) From the global boundary map, occupancy states within the slide-in region are load{ed}. (d) The loaded occupancy states within the slide-in region are fused into the local map (\textbf{Local Occupancy Grid Map Update}). }

    \label{fig:update}
\end{figure*}

When a new sensor scan is received, along with the vehicle's current position, the framework is updated accordingly. The occupancy information in the local map is updated by the new scan via the ray casting technique (see Section~\ref{sec:new_scan}). The received vehicle's position determines whether the local map needs to slide. If the vehicle moves beyond a certain threshold distance from the current local map center, a local map sliding is triggered. 
For the region slides out from the local map, the occupancy information is represented by boundary voxels and incrementally updated into the global map (see Section~\ref{sec:update_global}). For the region slides into the local map, the occupancy states are loaded from the global map and fused into the local map (see Section~\ref{sec:restore}). We denote the slide-out and slide-in regions as \(\mathcal{S_O}\) and \(\mathcal{S_I}\), respectively. 
Notably, the ray casting process and the local map sliding are not synchronously triggered. When a sensor scan arrives, the latest measurements is updated into the local map, but this updated information is not immediately synchronized into the global map. Instead, we store this updated information in a difference logger \(\mathcal{L}\), {which records the difference in occupancy information between the local map and global map.} Specifically, \(\mathcal{L}\) contains local map  voxels with updated occupancy state. Then, when a local map sliding occurs, the difference logger \(\mathcal{L}\) is used to incrementally update the global map in the slide-out region \(\mathcal{S_O}\), synchronizing the occupancy information of global map in \(\mathcal{S_O}\) with that in local map. An illustration of the full update process of this mapping framework is provided in Figure~\ref{fig:update} and described in Algorithm~\ref{alg:map_update}.

\begin{algorithm}[ht]
\caption{Map Update}
\label{alg:map_update}
\small
\SetAlgoLined
\KwIn{Vehicle position $\mathbf{x}_{k}$, Point cloud scan $\mathcal{C}_k$
}

{\textbf{Notation:} Current local map origin \textbf{o}}

\BlankLine
\textbf{Algorithm Start}
\BlankLine

$\mathtt{NewScanRayCasting}(\mathbf{x}_{k}, \mathcal{C}_k)$\;

\If{$\| \mathbf{x}_k - \mathbf{o}\| > t$}{
$\mathcal{S_I}, \mathcal{S_O} \gets \mathtt{ComputeSlidingRegions}(\mathbf{o}, \mathbf{x}_{k})$\;
$\mathtt{IncrementalBoundaryMapUpdate}(\mathcal{S_O})$\;
$\mathtt{LocalGridMapUpdate}(\mathcal{S_I})$\;
$\mathbf{o} \gets \mathbf{x}_k$;
}

\BlankLine
\textbf{Algorithm End}
\BlankLine

\SetKwFunction{FMain}{IncrementalBoundaryMapUpdate}
\SetKwProg{Fn}{Function}{:}{}
\Fn{\FMain{$\mathcal{S_O}$}}{
    $\mathcal{L} \gets \mathtt{GetUpdatedLocalMapVoxels}()$\;
    $\mathcal{U} \gets \varnothing$\;
    \ForEach{voxel $\mathbf{n}_{v} \in \mathcal{L}$}{
        \If{$\mathbf{n}_{v} \in \mathcal{S_O}$}{
            {$\mathbf{E} \gets \{\mathbf{n}_{v}\} \cup$ 6-neighbors of $\mathbf{n}_{v}$}\;
            
\ForEach{{voxel $\mathbf{e} \in \mathbf{E}$}}{
    \If{{$\mathbf{e} \notin \mathcal{U}$}}{
        {$\mathcal{U} \gets \mathcal{U} \cup \{\mathbf{e}\}$}\;
    }
}
            $\mathcal{L} \gets \mathcal{L} \setminus \{\mathbf{n}_{v}\}$\;
        }
    }

    \tcp{\text{Remove outdated boundary voxels}}
    {$\mathcal{S}_{\mathcal{O}}^{2d} \gets \mathtt{Get2DProjectionArea}(\mathcal{S_O}, z)$}\;
    \ForEach{2D grid cell $\mathbf{c}_{g} \in\mathcal{S}_{\mathcal{O}}^{2d}$}{
    {$\mathtt{h}_c \gets \mathtt{GenerateHashKey}(\mathbf{c}_{g})$}\;
        {$\mathbf{B}_c \gets \mathtt{RetrieveValue}(\mathtt{h}_c)$}\;
        \ForEach{voxel $\mathbf{b}_c \in \mathbf{B}_c$}{
            \If{{$\mathbf{b}_c \in \mathcal{U}$}}{
                {$\mathtt{RemoveFrom}(\mathbf{b}_c, \mathbf{B}_c)$}\;
            }
        }
        }
    \tcp{\text{Add new boundary voxels}}
    \ForEach{voxel $\mathbf{n}_{s} \in \mathcal{U}$}{
        $\mathtt{ComputeBoundaryVoxelStatus}(\mathbf{n}_{s})$\;
        \If{$\mathtt{isBoundaryVoxel}(\mathbf{n}_{s})$}{
            {$\mathbf{c}_s \gets \mathtt{Get2DGridCell}(\mathbf{n}_{s})$}\;
            {$\mathtt{h}_s \gets \mathtt{GenerateHashKey}(\mathbf{c}_{s})$}\;
            {$\mathbf{B}_s \gets \mathtt{RetrieveValue}(\mathtt{h}_s)$}\;
            {$\mathtt{AddTo}(\mathbf{n}_s, \mathbf{B}_s)$}\;
        }
    }

    $\mathtt{Sort}(\mathcal{S}_{\mathcal{O}}^{2d})$\;
}
\textbf{End Function}

\SetKwFunction{FRestore}{LocalGridMapUpdate}
\Fn{\FRestore{$\mathcal{S_I}$}}{
    {$\mathcal{S}_{\mathcal{I}}^{2d} \gets \mathtt{Get2DProjectionArea}(\mathcal{S_I}, z)$}\;
    $\mathcal{G} \gets \varnothing$\;
    \ForEach{2D grid cell $\mathbf{c}_a \in \mathcal{S}_{\mathcal{I}}^{2d}$}{
        {$\mathtt{h}_a \gets \mathtt{GenerateHashKey}(\mathbf{c}_a)$}\;
        {$\mathbf{B}_a \gets \mathtt{RetrieveValue}(\mathtt{h}_a)$}\;
        {$\mathbf{B}_d \gets \mathtt{IdentifyBoundaryVoxels}(\mathbf{B}_a, \mathcal{S_I})$}\;
        {$\mathcal{G} \gets \mathcal{G} \cup \mathtt{ExtractFromBoundary}(\mathbf{B}_d)$}\;
        $\mathcal{G} \gets \mathcal{G} \cup \mathtt{ConstructFreeVoxels}(\mathbf{B}_d)$\;
    }

    $\mathtt{Fuse}(\mathcal{G})$\;
}
\textbf{End Function}
\end{algorithm}

\subsubsection{New Scan Ray Casting}
\label{sec:new_scan}
Similar to most occupancy grid mapping methods~\cite{moravec1996robot, hornung2013octomap, duberg2020ufomap, ren2023rog}, when a sensor scan arrives, we {probabilistically} integrate the measurement of the received sensor scan into the local map using a ray casting technique~\cite{amanatides1987fast}. We denote the received sensor scan as \(\mathcal{C}_k\). For each point \( \textbf{p} \in \mathbb{R}^3 \) in the scan \(\mathcal{C}_k\), a ray is cast from the sensor origin to the point \( \textbf{p} \) using the 3D Digital Differential Analyzer (3D-DDA) algorithm. A voxel is considered a \texttt{hit} if the point \( \textbf{p} \) lies within it, and a \texttt{miss} if the ray passes through the voxel.

We denote the log-odds occupancy probability value of a voxel \( \mathbf{n} \) in the local map as \(\texttt{L}(\mathbf{n})\). This value is updated by the sensor scan incrementally as follows:

\begin{equation}
    \texttt{L}(\mathbf{n}) = \texttt{L}(\mathbf{n}) + \delta_\text{n}
\label{eqa:prob_g}
\end{equation}
where \( \delta_\text{n} \) is computed as follows:

\begin{equation}
    \delta_\text{n} = n_{\text{hit}} \cdot l_{\text{hit}} + n_{\text{miss}} \cdot l_{\text{miss}} 
\label{eqa:prob_cnt}    
\end{equation}
where $n_{\text{hit}}$ and $n_{\text{miss}}$ represent the number of times a voxel being \texttt{hit} and \texttt{miss}, respectively, and \( l_{\text{hit}} \) and \( l_{\text{miss}} \) are the corresponding log-odds probability values for \texttt{hit} and \texttt{miss}.

We also adopt a clamping policy which is employed by numerous occupancy grid maps~\cite{hornung2013octomap, duberg2020ufomap, ren2023rog} to robustly handle dynamic environments. Specifically, the log-odds occupancy probability value of each voxel is constrained between a lower bound \(l_{\text{min}}\), and an upper bound \(l_{\text{max}}\). This constraint enables the map to adapt rapidly to environmental changes. This clamping policy is written as:

\begin{equation}
    \texttt{L}(\mathbf{n}) = \max\left(\min\left(\texttt{L}(\mathbf{n}), l_{\text{max}}\right), l_{\text{min}}\right)
\label{eqa:prob_clmap}
\end{equation}

The discrete occupancy state (i.e., free, occupied, unknown) is computed from the floating-value occupancy probability by thresholding:

\begin{equation}
\texttt{occ}(\textbf{n}) = 
\begin{cases} 
free & \text{if } \texttt{L}(\textbf{n}) \leq l_{\text{free}} \\
occupied & \text{if } \texttt{L}(\textbf{n}) \geq l_{\text{occ}} \\
unknown & \text{otherwise}
\end{cases}
\label{eqa:state}
\end{equation}
where \(l_{\text{free}}\) and \(l_{\text{occ}}\) are refer{red} to as the log-odds threshold values for the free and occupied states, respectively.

{
The difference logger \(\mathcal{L}\) is updated at each ray casting process. Specifically, for each voxel modified during the ray casting, we compare its current occupancy state to its state before the ray casting. Voxels whose occupancy state changes are added to \(\mathcal{L}\). }

\subsubsection{Incremental Boundary Map Update}
\label{sec:update_global}
{In this section, we describe the incremental update process for the global boundary map in the slide-out region \(\mathcal{S_O}\). This procedure is illustrated in green blocks in Figure~\ref{fig:overview} and detailed in Algorithm~\ref{alg:map_update} (see function \texttt{IncrementalBoundaryMapUpdate}, Lines {11--45}). }

{First, we evaluate the boundary voxel status (see Equation~\ref{eqa:boundary_def}) of the global map voxels located in \(\mathcal{S_O}\). If a boundary voxel's status has changed, it is marked as outdated and removed from the global map. Otherwise, it is retained. The detailed procedure is as follows. The difference logger \(\mathcal{L}\) maintains local map voxels whose occupancy states have changed during updates. We then identify those voxels in \(\mathcal{L}\) that lie within the slide-out region \(\mathcal{S_O}\). For each identified voxel, we compute its six neighbors and add both the voxel and its neighbors into a container \(\mathcal{U}\), {if they are not already present in \(\mathcal{U}\) (Lines 16--21).} After processing, such identified voxel is removed from \(\mathcal{L}\) (Line {22}). Then, the slide-out region \( \mathcal{S_O} \) is projected onto the \((x, y)\) plane along the projection axis \( z \), resulting in a 2D area denoted as \( \mathcal{S}_\mathcal{O}^{2d} \) (Line {25}). Note that in the previous section, the \(z\)-axis is chosen as the projection axis; thus, we maintain this setting here. 

Next, we iterate over each 2D grid cell {$\mathbf{c}_g = (g^x, g^y) \in \mathbb{Z}^2$} within the area $\mathcal{S}_\mathcal{O}^{2d}$.
{For each cell $\mathbf{c}_g$, we generate its hash key $\mathtt{h}_c$, namely \texttt{GenerateHashKey} (Line~27). We then index the 2D grid map (i.e., the global boundary map) and retrieve the value at key $\mathtt{h}_c$, which corresponds to an array of boundary voxels, denoted as $\mathbf{B}_c$. This operation is referred to as \texttt{RetrieveValue} (Line~28).
Subsequently, we examine each boundary voxel $\mathbf{b}_c \in \mathbf{B}_c$.
If the boundary voxel $\mathbf{b}_c$ is not contained in $\mathcal{U}$, its {boundary voxel status} remains unchanged, as formally proven in Appendix~\ref{app:proof_upd}.
Otherwise, its {boundary voxel status} may change, and we remove it from $\mathbf{B}_c$, namely the \texttt{RemoveFrom} operation (Line~31).}

{Subsequently, for each voxel \(\mathbf{n}_s = (s^x, s^y, s^z) \in \mathbb{Z}^3\) in \(\mathcal{U}\), we compute its new boundary voxel status (Line 36).
If \(\mathbf{n}_s\) is classified as a boundary voxel (Line 37), we first determine its corresponding 2D grid cell on the \((x, y)\)-plane, which is \(\mathbf{c}_s = (s^x, s^y) \in \mathbb{Z}^2\). This operation is referred to as \texttt{Get2DGridCell} (Line 38). Then, we generate the hash key of cell \(\mathbf{c}_s\), denoted as \(\mathtt{h}_s\) (Line 39). We then retrieve the value at key \(\mathtt{h}_s\), which corresponds to an array of boundary voxels, denoted as \(\mathbf{B}_s\) (Lines 40). Finally, we add \(\mathbf{n}_s\) to this array \(\mathbf{B}_s\), namely \texttt{AddTo} operation (Line 41).}

{Finally, to enable efficient occupancy state queries, the arrays of all updated 2D grid cells in global map within \(\mathcal{S}_\mathcal{O}^{2d}\) are sorted by the \(z\)-coordinates of the boundary voxels. This ensures that future occupancy state queries can leverage binary search for rapid access. This procedure is referred to as the \texttt{Sort} operation (Line {44}). {In addition, this operation can be further improved by sorting only modified columns in $\mathcal{S}^{2d}_\mathcal{O}$, rather than all columns.}}

\subsubsection{Local Occupancy Grid Map Update}
\label{sec:restore}
\begin{figure*}[t] 
    \centering
    \includegraphics[width=\textwidth]{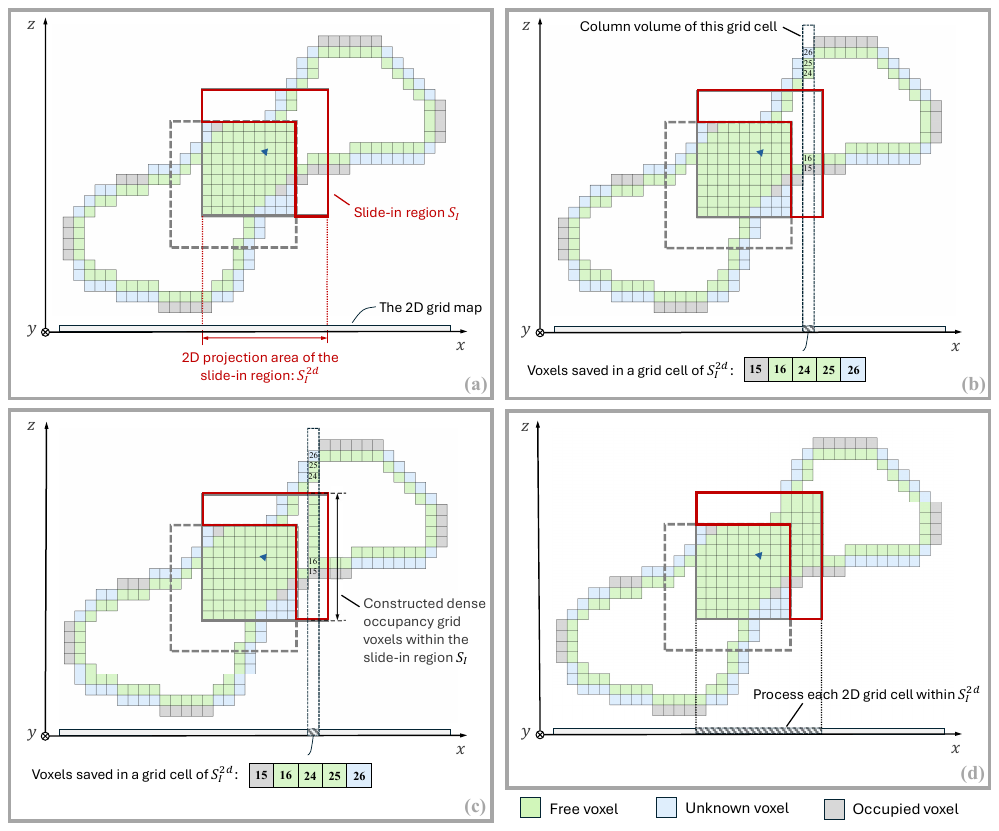}
    
    \caption{
    Illustration of step (c) in Figure~\ref{fig:update}: loading occupancy states within the slide-in region from the global boundary map. (a) Projecting the slide-in region \(\mathcal{S_I}\) to the \((x, y)\)-plane, denoted as \(\mathcal{S}_\mathcal{I}^{2d}\). (b) For each 2D grid cell within \(\mathcal{S}_\mathcal{I}^{2d}\), retrieve the boundary voxels saved in the global boundary map. (c) Constructing a dense occupancy grid tile consisting of free voxels and occupied voxels above the 2D grid cell but within the slide-in region. Voxels not encoded in the tile are unknown voxels requiring no processing. (d) This procedure is repeated for all 2D grid cells within \(\mathcal{S}_\mathcal{I}^{2d}\), resulting in a dense occupancy grid map within the slide-in region \(\mathcal{S_I}\). This map contains only free and occupied voxels and is finally fused to the local map in step (d) in Figure~\ref{fig:update}.
    }
    \label{fig:construct}
\end{figure*}

In this section, we introduce the procedure for the local map update process, which constructs the dense occupancy grid voxels from the boundary map in the slide-in region \( \mathcal{S_I} \) and then fuses them into the local map. 

A straightforward approach would be to query the global map for the occupancy state of every voxel within \( \mathcal{S_I} \). However, this requires traversing all voxels in \( \mathcal{S_I} \), including the unknown ones. Notably, the unknown voxels constitute a substantial portion of \( \mathcal{S_I} \), contributing to considerable computational overhead. To improve efficiency, we propose a construction method that only traverses the free and occupied voxels in \( \mathcal{S_I} \). {This construction process is illustrated in brown blocks in Figure~\ref{fig:overview}. The detailed procedure is provided below and outlined in Algorithm~\ref{alg:map_update} (see function \texttt{LocalGridMapUpdate}, Line {46--57}).}

Initially, the slide-in region \( \mathcal{S_I} \) is projected along the \( z \)-axis onto the \( (x, y) \) plane, resulting in a two-dimensional projected area denoted as \( \mathcal{S}_\mathcal{I}^{2d} \). This process is namely {\texttt{Get2DProjectionArea}} function (Line {47}). An illustration of this process is shown in Figure~\ref{fig:construct}(a).
We define the column volume of a grid cell \(\mathbf{c}_a\) in the area \( \mathcal{S}_\mathcal{I}^{2d} \), as the vertical region formed by voxels that share the same \((x, y)\)-coordinates as \(\mathbf{c}_a\), and whose \(z\)-coordinates span the full height of the environment. This column volume of \(\mathbf{c}_a\) is denoted as \(\mathcal{V}_a\). An illustration of the column volume is provided in Figure~\ref{fig:construct}(b). We first focus on constructing dense occupancy grid voxels—both free and occupied—within the overlapped region between \(\mathcal{V}_a\) and \(\mathcal{S_I}\), denoted as \(\mathcal{V}_a \cap \mathcal{S_I}\). 

{In the following, we describe the detailed procedures for constructing dense occupancy grid voxels in the region \(\mathcal{V}_a \cap \mathcal{S_I}\). For each 2D grid cell \(\mathbf{c}_a = (a^x, a^y) \in \mathcal{S}_{\mathcal{I}}^{2d}\), we first retrieve all boundary voxels that stored in the cell, and denoted them as \(\mathbf{B}_a\) (Line 50--51).
Next, we filter the boundary voxel in \(\mathbf{B}_a\)} that located within the slide-in region \(\mathcal{S_I}\), and denoted them as \(\mathbf{B}_d\). This process is referred to as \texttt{IdentifyBoundaryVoxels} (Line {52}). As occupancy state of the boundary voxel can be directly determined from its definition (see Equation~\ref{eqa:boundary_def}), we first filter the boundary voxels classified as either free or occupied and add them to a container \(\mathcal{G}\). This step is referred to as \texttt{ExtractFromBoundary} function (Line {53}). Since all occupied voxels are explicitly stored in the boundary map, this step ensures that all occupied voxels in region \(\mathcal{V}_a \cap \mathcal{S_I}\) are identified and added to \( \mathcal{G} \). In other words, occupied voxels do not require further construction.

Subsequently, we focus on constructing the free voxels within the region \(\mathcal{V}_a \cap \mathcal{S_I}\) that are not explicitly stored in the boundary map. This procedure, referred to as \texttt{ConstructFreeVoxels} (Line {54}), is described in detail below.

First, we discuss the case where there is no boundary voxel exists in region \(\mathcal{V}_a \cap \mathcal{S_I}\) (i.e., \(\mathbf{B}_d\) is an empty set). This case arises either when \(\mathbf{c}_a\) is not exist in the global map, or when none of the boundary voxels stored in \(\mathbf{c}_a\) lie within the slide-in region \(\mathcal{S_I}\). We introduce the following theorem:

\begin{theorem} 
\label{thm:restore_null}

If \(\mathbf{B}_d\) is an empty set, all voxels in region \(\mathcal{V}_a \cap \mathcal{S_I}\) have the same occupancy state.
\end{theorem}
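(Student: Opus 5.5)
The idea is to show that along the column segment $\mathcal{V}_a \cap \mathcal{S_I}$ there can be no transition between free and non-free voxels, and no occupied voxel at all. The state is then constant on each connected stretch of the segment, and the segment is a single stretch.

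\textbf{Step 1: the segment is contiguous.} The slide-in region $\mathcal{S_I}$ is the set difference of two axis-aligned boxes: the new local-map box minus the old one. Intersecting it with a vertical column $\{(a^x,a^y,z)\}$ gives either one contiguous $z$-interval or, in degenerate cases, two intervals. Two intervals can occur only if the old box lies strictly inside the new one in $z$. Since a slide moves the box by a bounded offset, I would first argue that the intersection is one interval $[z_0,z_1]$. If that fails, I would apply the argument below to each interval and connect them through the column's continuity in the same way.

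\textbf{Step 2: no occupied voxels and no state transitions.} By the definition in Equation~\ref{eqa:boundary_def}, every occupied voxel is a boundary voxel. Since $\mathbf{B}_d=\varnothing$, no voxel in the segment is occupied, so every voxel is free or unknown. Now suppose two $z$-adjacent voxels $\mathbf{n}=(a^x,a^y,z)$ and $\mathbf{n}'=(a^x,a^y,z+1)$, both in the segment, had different states. Then one is free and the other unknown. They are 6-connected neighbours, so the free one is a $\mathbf{b}_{\mathrm{int}}$ and the unknown one is a $\mathbf{b}_{\mathrm{ukn}}$. Both would be stored in cell $\mathbf{c}_a$ and lie in $\mathcal{S_I}$, contradicting $\mathbf{B}_d=\varnothing$. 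Hence adjacent voxels always share a state, and by induction on $z$ from $z_0$ to $z_1$ all voxels in $\mathcal{V}_a \cap \mathcal{S_I}$ have the same state.

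\textbf{Main obstacle.} The key subtlety is Step 2's reliance on the boundary map being exact inside $\mathcal{S_I}$. This holds because $\mathcal{S_I}$ lies outside the current local-map region, so the global map is synchronized there. This exactness is the same consistency property used for Theorem~\ref{thm:query}, so I would invoke that same setting. The argument is essentially the one-dimensional version of the proof of Theorem~\ref{thm:query} restricted to the $z^{\pm}$ search directions: applying Theorem~\ref{thm:query} from any segment voxel toward $z^+$ would give the same conclusion, but the direct adjacency argument above is cleaner. The only remaining issue is the geometric contiguity check in Step 1, which I expect to be routine.
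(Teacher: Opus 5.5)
Your argument is correct and follows the paper's proof. Every occupied voxel is a boundary voxel, so the segment holds only free or unknown voxels. A state change between $z$-adjacent voxels would put a boundary voxel in $\mathcal{V}_a \cap \mathcal{S_I}$, contradicting $\mathbf{B}_d=\varnothing$, so induction along the column gives one state.

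Your Step~1 makes explicit a contiguity assumption the paper uses without comment, but your reason for it is not the right one. The segment is a single interval because the local map has fixed size, so the old $z$-range can never lie strictly inside the new one; a bounded offset alone would not guarantee this. Your fallback of joining two separate pieces would fail, since the gap between them lies outside $\mathcal{S_I}$ and may contain boundary voxels.
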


\begin{proof}
see Appendix~\ref{app:proof_restore_null}.
\end{proof}

Based on the above theorem, to determine the occupancy state of voxels in region \(\mathcal{V}_a \cap \mathcal{S_I}\), we arbitrarily select a voxel within it as a representative and query its occupancy state. Since all occupied voxels reside exclusively on the boundary map, the queried voxel must be either free or unknown. The corresponding occupancy state is then propagated to all voxels in \(\mathcal{V}_a \cap \mathcal{S_I}\): if free, these voxels are added to \(\mathcal{G}\).

Next, we consider the case where \(\mathbf{B}_d\) is not an empty set.
We focus on the boundary interior voxels \(\mathbf{b}_\mathrm{int} \in \mathbf{B}_d\). For each such voxel, denoted as \(\mathbf{b}_i\), we perform a scan in both the \(z^+\) and \(z^-\) directions, marking each voxel along the vertical line until either (i) another boundary voxel is encountered, or (ii) the traversal exits the slide-in region \(\mathcal{S_I}\). This process yields two voxel tiles, denoted as \(\mathcal{T}^+_i\) and \(\mathcal{T}^-_i\), respectively. These tiles are then added to a voxel tiles set \(\mathbf{T}\).
Note that either \(\mathcal{T}^+_i\) or \(\mathcal{T}^-_i\) may be empty—this occurs when \(\mathbf{b}_i\) has an adjacent boundary voxel in the corresponding direction (i.e., \(z^+\) or \(z^-\)). An example is shown in Figure~\ref{fig:construct}(b) and (c), where the set \(\mathbf{B}_d\) contains two boundary voxels: a boundary exterior (occupied) voxel \(\mathbf{b}_\mathrm{occ}\) at \(z = 15\), and a boundary interior voxel \(\mathbf{b}_\mathrm{int}\) at \(z = 16\). In this case, \(\mathcal{T}^+_i\) for the \(\mathbf{b}_\mathrm{int}\) voxel at \(z = 16\) contains voxels with \(z\)-coordinates ranging from 17 to 22, since the voxel with \(z = 23\) falls outside the slide-in region \(\mathcal{S_I}\). Meanwhile, \(\mathcal{T}^-_i\) is empty as the \(\mathbf{b}_\mathrm{int}\) has an adjacent boundary voxel \(\mathbf{b}_\mathrm{occ}\) in \(z^-\) direction. After constructing the voxel tiles set \(\mathbf{T}\). The occupancy state of voxels in region \(\mathcal{V}_a \cap \mathcal{S_I}\) is determined by the following theorem.

\begin{theorem} 
\label{thm:restore_free}

All voxels encoded in the constructed tiles in \(\mathbf{T}\) are determined as free, while any remaining non-boundary voxels that are not included in \(\mathbf{T}\) are considered unknown.

\end{theorem}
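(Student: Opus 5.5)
The plan is to reduce the statement to Theorem~\ref{thm:query}, Theorem~\ref{thm:query_null} and Theorem~\ref{thm:restore_null}, applied column by column inside $\mathcal{V}_a \cap \mathcal{S_I}$. Fix a cell $\mathbf{c}_a$ and order the voxels of $\mathcal{V}_a \cap \mathcal{S_I}$ by their $z$-coordinate. I will assume, as in the example of Figure~\ref{fig:construct}, that this intersection is a contiguous vertical segment $\{z_{\min},\dots,z_{\max}\}$, since $\mathcal{S_I}$ is a difference of axis-aligned boxes. I would state this assumption explicitly. If it fails, the argument below is applied to each maximal contiguous segment separately.

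The boundary voxels of $\mathbf{B}_d$ split this segment into maximal runs of consecutive non-boundary voxels. The key step is a one-dimensional ``no transition'' lemma. Inside a run, two vertically adjacent voxels $\mathbf{u},\mathbf{u}+e_z$ cannot have one free and the other non-free. If they did, the free one would be a $\mathbf{b}_\mathrm{int}$ and the other a $\mathbf{b}_\mathrm{ukn}$ or $\mathbf{b}_\mathrm{occ}$ by Equation~\ref{eqa:boundary_def}, and both lie in $\mathcal{S_I}$, contradicting the fact that the run contains no element of $\mathbf{B}_d$. Moreover, no voxel in a run is occupied, since every occupied voxel is a boundary voxel. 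Hence every run is entirely free or entirely unknown. This is the same mechanism underlying Theorem~\ref{thm:query} and Theorem~\ref{thm:restore_null}, and I expect to cite that argument rather than redo it.

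Next I classify each run by its endpoints. A run is adjacent to at most two voxels of $\mathbf{B}_d$: one immediately below and one immediately above. Alternatively, an end of the run may be an end of the segment, i.e.\ where the traversal exits $\mathcal{S_I}$.
\begin{itemize}
\item \emph{Some neighbour is a $\mathbf{b}_\mathrm{int}$, say $\mathbf{b}_i$.} Then $\mathbf{b}_i$ is free and adjacent to the run's end voxel, so by the lemma (applied to that adjacent pair) the end voxel is free and the whole run is free. This run is exactly $\mathcal{T}^+_i$ or $\mathcal{T}^-_i$ as produced by the scan, which stops at the next boundary voxel or at the exit from $\mathcal{S_I}$. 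Thus each tile consists of free voxels, proving the first claim.
\item \emph{Both neighbours are boundary exterior voxels, or exits of the segment.} Suppose first that at least one neighbour is a $\mathbf{b}_\mathrm{ext}$ adjacent to the run. It is non-free, so the adjacent run voxel cannot be free without itself being a $\mathbf{b}_\mathrm{int}$. The run is therefore unknown. Equivalently, Theorem~\ref{thm:query} applies to any run voxel with $\mathbf{b}_{nn}$ equal to that $\mathbf{b}_\mathrm{ext}$.
\end{itemize}

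The main obstacle is the residual case where $\mathbf{B}_d \neq \varnothing$ but a run touches exits of $\mathcal{S_I}$ on both sides. This is impossible when the segment is contiguous and $\mathbf{B}_d$ is non-empty, because every run is then bounded on at least one side by an element of $\mathbf{B}_d$. The runs touching only one exit fall into the cases above. The genuinely delicate point is therefore the contiguity assumption. If $\mathcal{V}_a \cap \mathcal{S_I}$ has several components, a component with no boundary voxel is handled by Theorem~\ref{thm:restore_null} applied to it together with a representative query, which falls outside the tile construction. I would flag this and restrict the theorem to the contiguous case or to components meeting $\mathbf{B}_d$.

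Finally, the runs adjacent to a $\mathbf{b}_\mathrm{int}$ are exactly the tiles in $\mathbf{T}$. All remaining non-boundary voxels therefore lie in runs whose neighbours are only exterior voxels, and these are unknown. This completes both assertions.
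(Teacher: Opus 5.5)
Your argument is correct and is essentially the paper's. The paper gets the first claim by applying Theorem~\ref{thm:query} to each tile voxel: its nearest boundary voxel in the direction back toward the seed is the seed $\mathbf{b}_\mathrm{int}$ itself. It gets the second claim by contradiction: a free non-tile voxel would, again by Theorem~\ref{thm:query}, have a $\mathbf{b}_\mathrm{int}$ as its nearest boundary voxel, and that voxel's scan would sweep it. This is your run-by-run classification phrased contrapositively.

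The contiguity you flag is used only tacitly by the paper, and it in fact always holds. The old and new local-map boxes have the same size, so the $z$-range of any column inside $\mathcal{S_I}$ is an interval with at most an equal-length translate of itself removed, hence a single interval.
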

\begin{proof}
see Appendix~\ref{app:proof_restore_RG}.
\end{proof}

Based on this theorem, all voxels included in \(\mathbf{T}\) are marked as free and added to \(\mathcal{G}\). For the example shown in Figure~\ref{fig:construct}(c), voxels with \(z\)-coordinates ranging from 17 to 22 are encoded in \(\mathbf{T}\), thus are marked as free. Recall that boundary voxels classified as either free or occupied have already been incorporated into \(\mathcal{G}\). {Thus, at this point, the construction process of the dense occupancy grid voxels for the region \(\mathcal{V}_a \cap \mathcal{S_I}\) is complete. }

This construction process is then repeated for every 2D grid cell in the area \(\mathcal{S}_\mathcal{I}^{2d}\), thereby constructing all {free} and {occupied} voxels within the slide-in region \(\mathcal{S}_\mathcal{I}\), which are added into \(\mathcal{G}\). An illustration is provided in Figure~\ref{fig:construct}(d). We also refer to \(\mathcal{G}\) as a constructed dense occupancy grid map in the slide-in region \(\mathcal{S_I}\), which is then fused into the local map. The fusing process is described in detail as follows.

For each voxel in \(\mathcal{G}\), we update the occupancy probability values of the corresponding voxel in the local map based on the constructed occupancy state (i.e., free or occupied), using the following rule:
\begin{equation}
\mathtt{L}(\mathbf{n}) = 
\begin{cases}
\mathtt{L}(\mathbf{n}) + l_{\text{free}}, & \text{if voxel is \textit{free}} \\
\mathtt{L}(\mathbf{n}) + l_{\text{occ}}, & \text{if voxel is \textit{occupied}}
\end{cases}
\label{eqa:log_odds_init}
\end{equation}
Here, \( l_{\text{free}} \) and \( l_{\text{occ}} \) denote the log-odds threshold values for the {free} and {occupied} states, respectively, as defined in Equation~\ref{eqa:state}. 
After this update, we apply a clamping process (see Equation~\ref{eqa:prob_clmap}) to ensure that the updated log-odds occupancy values remain within the range bounded by \(l_{\text{min}}\) and \(l_{\text{max}}\). This operation integrates the occupancy states constructed from the global boundary map into the local map, which we refer to as the \texttt{Fuse} function (see Line {56}). {Note that when a voxel slides out of the local map, its log-odds probability is converted into a discrete occupancy state and represented in the global boundary map. When the voxel re-enters the local map, the discrete occupancy state is converted back into a log-odds probability (see Equation~\ref{eqa:log_odds_init}). This conversion process introduces a minor degradation in accuracy compared with the baseline grid-based and octree-based methods. 
We further quantify and report this degradation in our benchmark experiments (see Section~\ref{sec:exp_acc}).
}

{Since the New Scan Ray Casting process integrates the most recent occupancy information from new sensor measurements, it is executed prior to the loading and fusing of occupancy information constructed from the global map in the slide-in region \(\mathcal{S}_\mathcal{I}\). Thus, the difference logger \(\mathcal{L}\) within the slide-in region \(\mathcal{S}_\mathcal{I}\) is also updated after the fusion is completed. Specifically,
we first obtain the voxel's occupancy state after the \texttt{Fuse} operation.
This state is then compared with the occupancy state that {is} constructed from global map, which is maintained by \(\mathcal{G}\). Voxels whose occupancy states differ are added to \(\mathcal{L}\).}

\subsection{Map Query}
\label{sec:query}

\begin{figure}[t] 
    \centering
    \includegraphics[width=\linewidth]{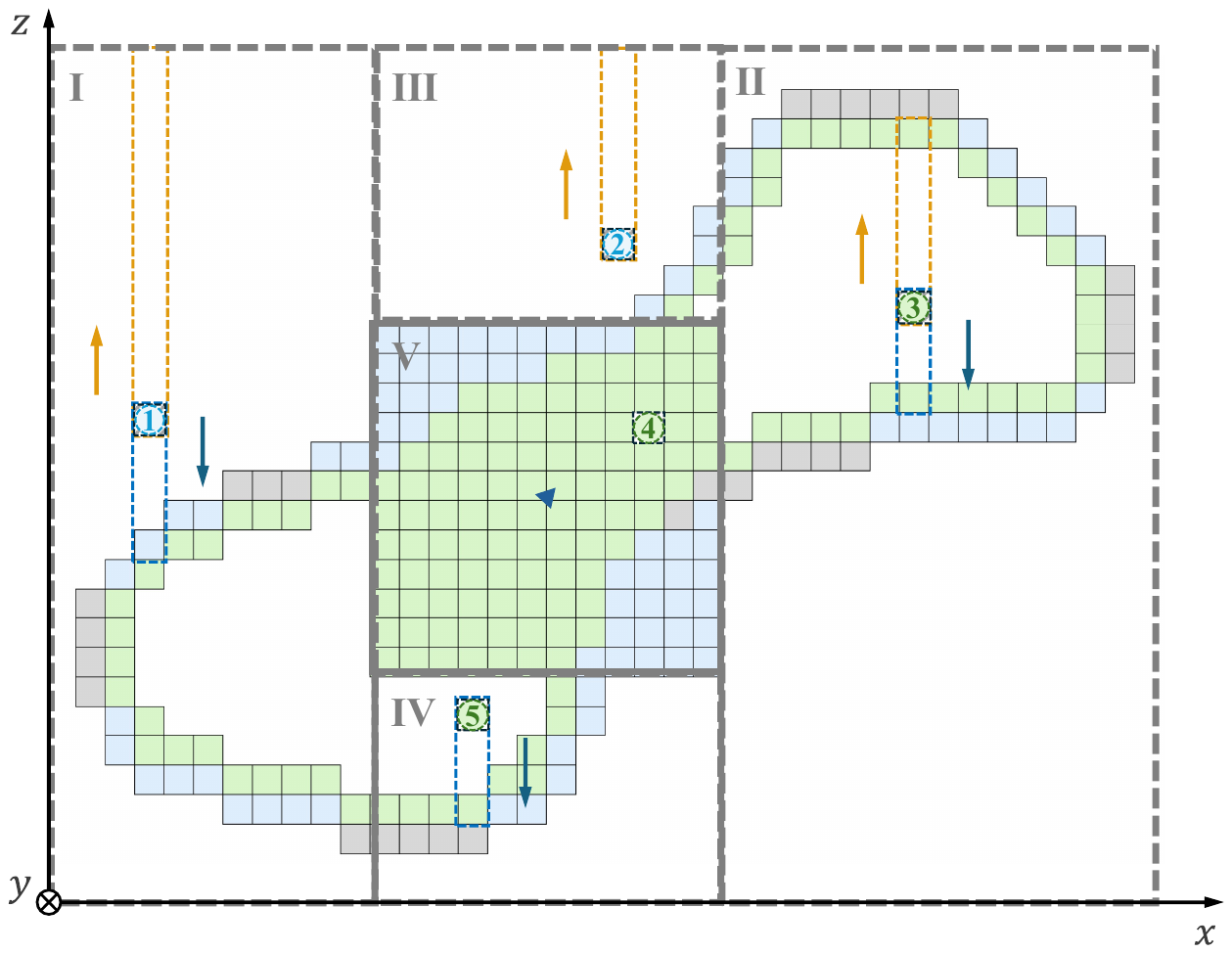} 
    \vspace{-10pt} 
    \caption{Illustration of the search direction selection in occupancy state querying in the proposed global-local mapping framework. The search direction is selected such that it does not intersect with the local map. 
    For query voxels in regions \MakeUppercase{\romannumeral 1} and \MakeUppercase{\romannumeral 2} (voxel no. 1 and 3), the search direction can be selected as \(z^+\) or \(z^-\), as neither direction intersects with the local map. For query voxels in regions \MakeUppercase{\romannumeral 3} and \MakeUppercase{\romannumeral 4} (voxel no. 2 and 5), only one direction can be chosen, which is the one not intersecting with the local map. For query voxels in region \MakeUppercase{\romannumeral 5} (voxel no. 4), which lie within the local map, the occupancy state is directly retrieved from the local map.}
    \label{fig:framework_query}
\end{figure}

Based on this global-local map structure, to query the occupancy state of a query voxel \( \mathbf{q} = (q^x, q^y, q^z) \in \mathbb{Z}^3 \) in the environment, we follow the procedures outlined below.

First, we check whether \( \mathbf{q} \) lies within the local map. If so, its occupancy state is directly obtained.

If \( \mathbf{q} \) lies outside the local map, its occupancy state is determined using the global boundary map. {In this case, the procedure of determination of the occupancy state is outlined in Algorithm~\ref{alg:occupancy_state}, which takes both the query voxel \( \mathbf{q} \) and the search direction \(\mathcal{E}\) as inputs. 
The search direction can be arbitrarily selected from any of the six directions \(\{x^+, x^-, y^+, y^-, z^+, z^-\}\), when the boundary map is spatially complete.
However, under the global-local mapping framework, where the global boundary map only maintains occupancy information outside the local map region, the selection of the search direction \(\mathcal{E}\) requires additional consideration. Specifically, the search direction must be chosen such that it does not intersect the local map. Furthermore, given the data structure of the boundary map and the \(z\)-axis is designated as the projection axis, selecting either the \(z^+\) or \(z^-\) as the search direction enables efficient occupancy state queries. Consequently, the search direction is selected as either \(z^+\) or \(z^-\), depending on which direction avoids intersecting the local map.
Specifically, when the \(z\)-coordinate of the query voxel exceeds the local map's upper bound, the search direction is set to \(z^+\). Conversely, when the \(z\)-coordinate falls below the local map's lower bound, the search direction is set to \(z^-\). This ensures that the search direction avoids intersecting the local map. Once the search direction \(\mathcal{E}\) is specified, the occupancy state of \(\mathbf{q}\) is then determined by Algorithm~\ref{alg:occupancy_state}.

To summarize, this query strategy is region-based. For a query voxel within the local map, its occupancy state is directly obtained. For a query voxel in the global map region, the search direction is determined based on its position relative to the local map. An illustrative example of this query strategy is shown in Figure~\ref{fig:framework_query}.
}

\section{Benchmark Experiments}
\label{sec:benchmake}

\begin{figure*}[t] 
    \setlength{\abovecaptionskip}{0pt}   
    \setlength{\belowcaptionskip}{5pt}    
    \centering
    \includegraphics[width=\linewidth]{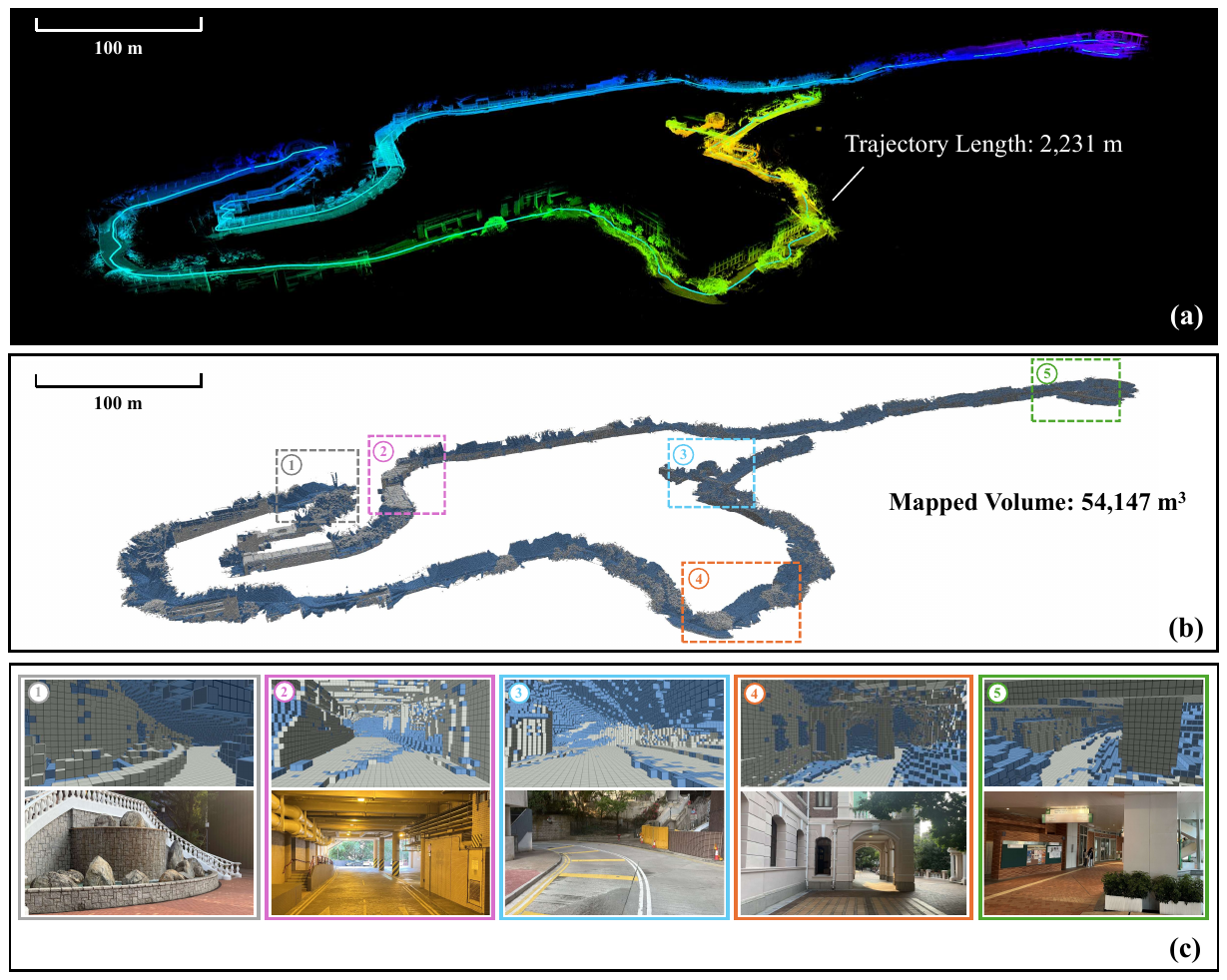}
    \vspace{-10pt}

    \caption{The HKU private dataset \textit{hku\_campus}. (a) The trajectory and accumulated point cloud scans. The mapping results of our method are shown in (b) and (c), focusing on the boundary map (i.e., the global map). The mapped volume is referred to the volume enclosed by the boundary. (b) provides an overview of the boundary map, while (c) presents interior views alongside the corresponding real-world image. For visualization purposes, only boundary exterior voxels are rendered, including boundary exterior (unknown) voxels (colored in blue) and boundary exterior (occupied) voxels (colored in grey). Boundary interior voxels, which are adjacent to the boundary exterior voxels, are omitted to improve visual clarity.}

    \label{fig:hku_campus}
\end{figure*}

\begin{table*}[h]
	\setlength{\tabcolsep}{5.2pt}
	\centering
	\caption{Details of datasets used in the benchmark experiments.}
	\label{tab:dataset}	
	\begin{threeparttable}
		\begin{tabular}{@{}cccccccc@{}}
			\toprule
			Sequence & Environment Scale & Traveled & Number & Average Points & Mapped & {Sensor} & {Local Map} \\
            & (Bounding Box) & Distance & of & Number & Volume & {Range} & {Size}\\
			& ($\mathrm{m}^3$) & ($\mathrm{m}$)  & Scans & (per scan)    &  ($\mathrm{m}^3$) & {($\mathrm{m}$)} & {($\mathrm{m}^3$)} \\ \midrule
			\textit{ford\_1}      & $\mathrm{11,416\times8,060\times94}$ & 16,639   & 7,757       & 41,108 & 3,843,031 & {65} & {$\mathrm{130\times130\times16}$} \\
			\textit{ford\_2}     & $11,584\times8,027\times102$ & 24,064  & 10,713       & 40,241 & 4,781,935   & {65} & {$\mathrm{130\times130\times16}$} \\
			\textit{ford\_3}     & $6,659\times3,860\times85$   & 9,558  & 8,692     & 42,501 & 2,905,112 & {65} & {$\mathrm{130\times130\times16}$} \\
			\textit{kitti\_00}   & $653\times586\times58$     & 3,724  & 4,541     & 121,495 & 243,903 & {45} & {$\mathrm{90\times90\times6}$}\\
			\textit{kitti\_02}    & $1,035\times688\times94$   & 5,067  & 4,661      & 125,628 & 328,737 & {45} & {$\mathrm{90\times90\times6}$} \\
			\textit{hku\_campus}    & $708\times254\times86$  & 2,231  & 15,854     & 6,136  & 54,147 & {10} & {$\mathrm{20\times20\times20}$} \\
			\textit{uav\_flight}    & $242\times182\times22$  & 502  & 3,307     & 5,369  & 17,863  & {10} & {$\mathrm{20\times20\times20}$}   \\ \bottomrule
		\end{tabular}
	\end{threeparttable}
 \vspace{-0.2cm}
\end{table*}

{
Extensive benchmark experiments were conducted to evaluate the performance of our mapping framework against several state-of-the-art methods, including Uniform Grid (UG) \cite{moravec1996robot}, Hash Grid (HG) \cite{niessner2013real}, Octomap (Octo) \cite{hornung2013octomap}, UFOMap (UFO) \cite{duberg2020ufomap}, and D-Map \cite{cai2023occupancy}.} Uniform Grid and Hash Grid are grid-based methods. Uniform Grid maintains a full 3D grid structure that covers the entire environment. It allocates a contiguous memory block (i.e., an array) to store all voxels within this grid. Hash Grid utilizes voxel hashing techniques to enhance memory efficiency. It maintains all voxels in the mapped volume in a hash table. Octomap and UFOMap are octree-based methods that represent the environment using a hierarchical octree structure. UFOMap extends Octomap with implementation-level enhancements, offering improved memory and computational efficiency. D-Map~\cite{cai2023occupancy} features a hybrid structure, maintaining unknown voxels in an octree and storing occupied voxels in a hash-based grid map. Unlike the other methods, D-Map eliminates ray casting during map updates. For Octomap, UFOMap, and D-Map, we used their open-source implementations available on GitHub repositories\footnote{https://github.com/OctoMap/octomap}\footnote{https://github.com/UnknownFreeOccupied/ufomap}\footnote{https://github.com/hku-mars/D-Map}. For Hash Grid, we employ the hash table implemented by the standard C++ library. {The datasets used in the benchmark experiments are detailed in Section~\ref{sec:dataset}, and the experimental setup is described in Section~\ref{sec:setup}. The performance of each method is evaluated in terms of memory consumption (see Section~\ref{sec:exp_mem}), update efficiency (see Section~\ref{sec:exp_upd}), query efficiency (see Section~\ref{sec:exp_query}), and map accuracy (see Section~\ref{sec:exp_acc}).}

\subsection{Datasets}
\label{sec:dataset}
We conducted experiments on two public datasets and two private datasets. The first public dataset is the Ford AV dataset~\cite{agarwal2020ford}, collected by Ford vehicles equipped with a 32-line rotating 3D laser scanner (Velodyne HDL-32E). From this dataset, three large-scale sequences: \textit{ford\_1}, \textit{ford\_2}, and \textit{ford\_3} were selected for evaluation. The second public dataset is the KITTI dataset \cite{geiger2013vision}, captured using a 64-line rotating 3D laser scanner (Velodyne HDL-64E). Two large-scale sequences \textit{kitti\_00} and \textit{kitti\_02} were selected for evaluation. 

The first private dataset was collected at The University of Hong Kong using a handheld device equipped with a semi-solid-state 3D LiDAR (Livox MID-360). The visualization of accumulated scans and the trajectory are presented in Figure~\ref{fig:hku_campus}(a). In this dataset, the odometry estimation is provided by FAST-LIO2~\cite{xu2022fast}. The second private dataset was collected by the onboard Livox MID-360 LiDAR installed on a MAV during a flight in an unconstructed outdoor field. A visualization of this dataset is provided in~\cite{ren2023rog}. More information for these datasets sequences, including the environment scale, travel distance, total number of scans, average points number per scan and mapped volume, is listed in Table~\ref{tab:dataset}. The mapped volume is referred to as the total volume of all known voxels (i.e., all free and occupied voxels). It also represents the volume enclosed by the boundary.

\begin{figure*}[t] 
    \centering
    \includegraphics[width=\textwidth]{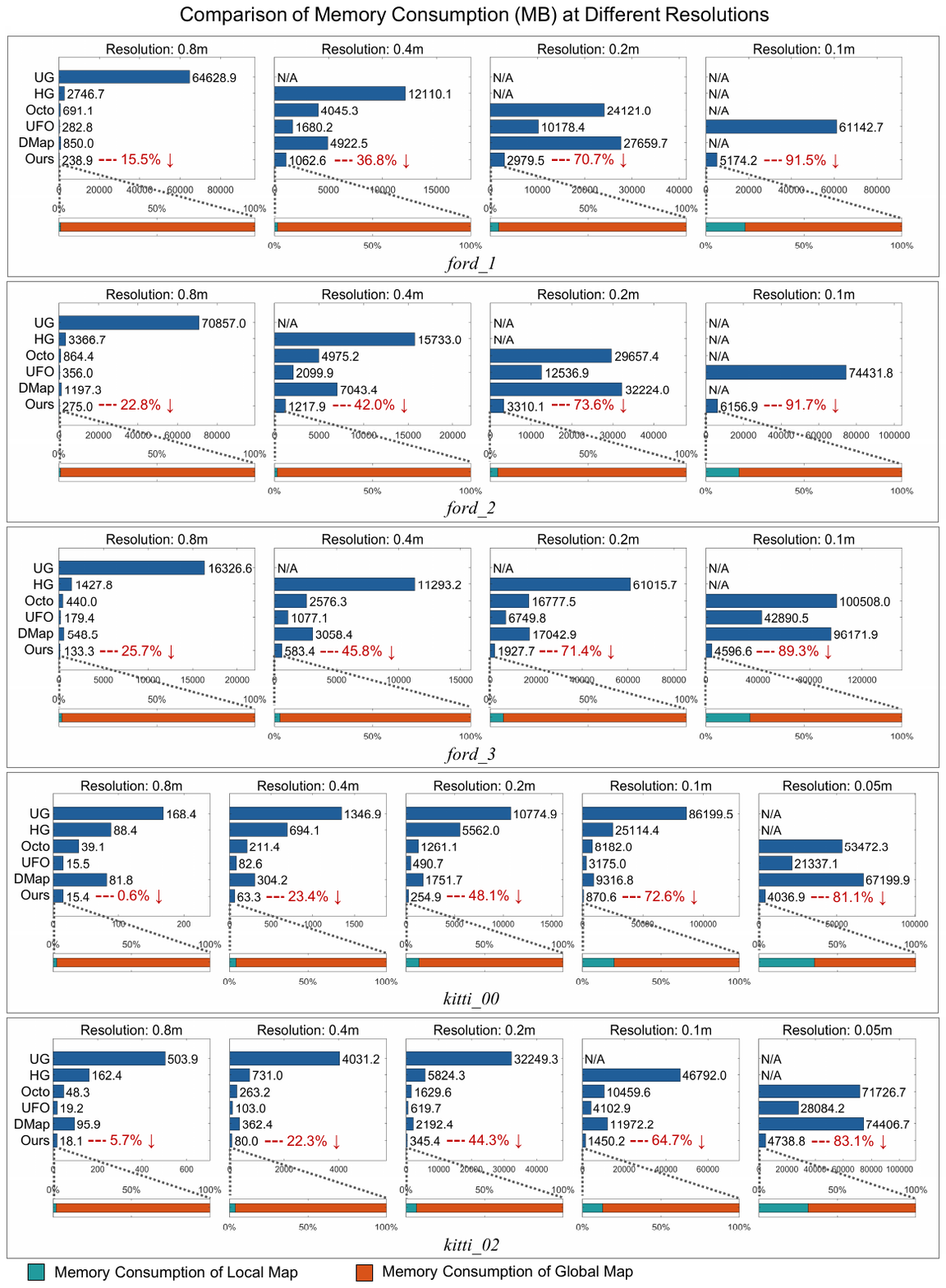}
    \vspace{-10pt} 
    \caption{Memory consumption (in MB) of UG (Uniform Grid), HG (Hash Grid), Octo (Octomap), UFO (UFOMap), D-Map, and our method {on the \textit{ford\_1}, \textit{ford\_2}, \textit{ford\_3}, \textit{kitti\_00}, and \textit{kitti\_02} sequences}. The percentage shown in red indicates the memory reduction of our method relative to the best of the rest methods in comparison. For our method, we also show the memory breakdown of the local and global maps, respectively. N/A indicates that the method failed due to memory consumption exceeding the limit.}
    \label{fig:exp_mem_1}
\end{figure*}

\begin{figure*}[t] 
    \centering
    \includegraphics[width=\textwidth]{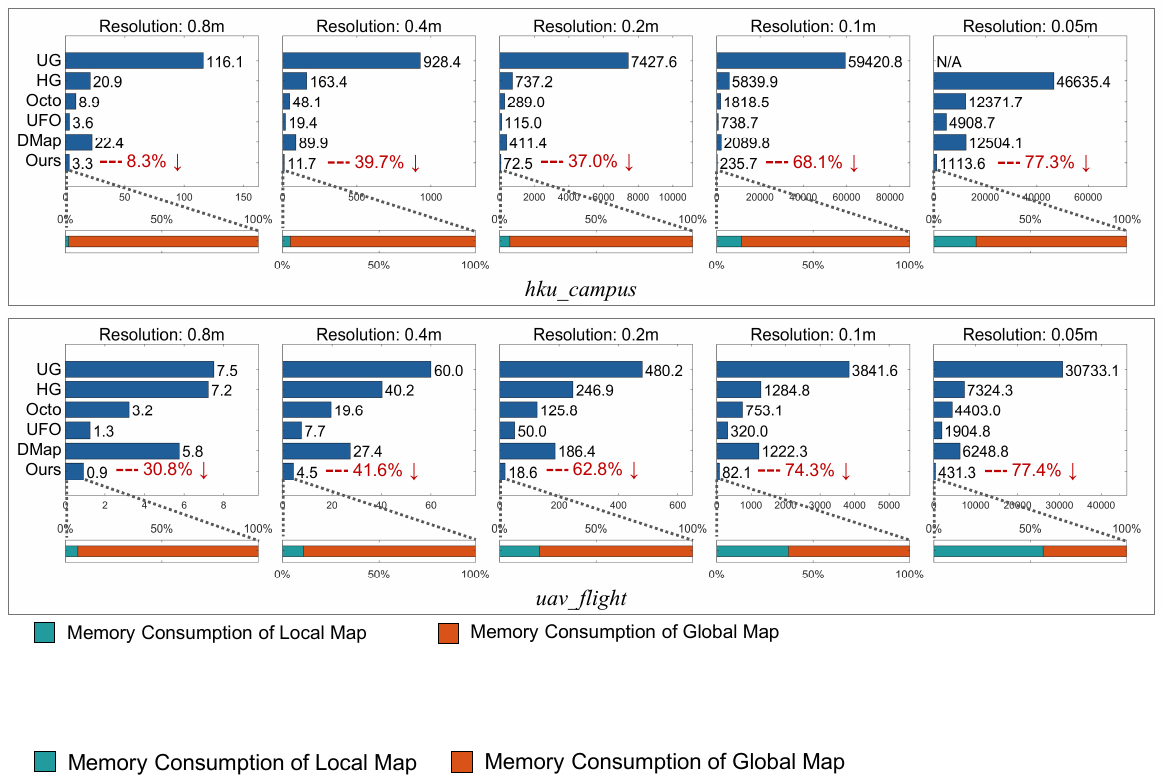}
    \vspace{-10pt} 
    \caption{{Continuation of Figure~\ref{fig:exp_mem_1}: results on the remaining sequences, \textit{hku\_campus} and \textit{uav\_flight}.}}

    \label{fig:exp_mem_2}
\end{figure*}

\subsection{Experiment Setup}
\label{sec:setup}
The experiments were conducted on a platform equipped with an Intel i7-1260P CPU, 64 GB of RAM, and 64 GB of swap space allocated on a Solid State Drive (SSD). 

The benchmark experiments were conducted across various map resolutions. For the \textit{kitti\_00}, \textit{kitti\_02}, \textit{hku\_campus}, and \textit{uav\_flight} sequences, the map resolution ranges from 0.8m to 0.05m. For the \textit{ford\_1}, \textit{ford\_2}, and \textit{ford\_3} sequences, the range is from 0.8m to 0.1m.

The probabilistic parameters for the ray casting updates were set as follows: the probabilities for a voxel \texttt{hit} and \texttt{miss} are \( p_{\text{hit}} = 0.8 \) and \( p_{\text{miss}} = 0.48 \). The clamping probabilities are \( p_{\text{max}} = 0.97 \) and \( p_{\text{min}} = 0.05 \). The threshold probabilities for the free and occupied states are \( p_{\text{free}} = 0.2 \) and \( p_{\text{occ}} = 0.8\). 
The sensor's sensing range \( R \) was determined according to the LiDAR user manual. 

For the octree-based method Octomap and UFOMap, the octree depth is set to the default value of 16 for the \textit{kitti\_00}, \textit{kitti\_02}, \textit{uav\_flight}, and \textit{hku\_campus} sequences under all map resolutions. However, for the Ford AV dataset sequences \textit{ford\_1}, \textit{ford\_2}, and \textit{ford\_3}, this default setting is insufficient under map resolution of 0.2m and 0.1m (i.e., the maximum map scale supported by the octree depth of 16 cannot accommodate the scale of these sequences). To address this limitation, we increase the octree depth to 18 for these sequences under map resolutions of 0.2m and 0.1m.

{For our method, in all experiments, the \( z \)-axis was selected as the projection axis. In typical robotic applications, the spatial extent along the \( z \)-axis is generally smaller than along the \( x \) and \( y \)-axes, making the \( z \)-axis a good choice for the projection axis to enhance computational efficiency. Notably, for specific scenarios, such as mapping tasks for vertical structures like buildings or sculptures, the \( x \)- or \( y \)-axis may be chosen as the projection axis instead.}

For every sensor scan, we define a sensing sphere \(\mathcal{R}\) which is centered at the sensor's origin, and with its radius equal to the sensor's sensing range \(R\). We then define the mapping space \(\mathcal{M}\) as the union of all such sensing spheres, which can be expressed as:

\begin{equation}
\mathcal{M} = \bigcup_{i=0}^{m} \mathcal{R}_i
\label{eqa:map_space}
\end{equation}
where \(\mathcal{R}_i\) denotes the sensing sphere corresponding to the \(i\)-th scan, and \(m\) is the total number of scans in the sequence.

The mapping space \(\mathcal{M}\) defines the region that may be updated by the sensor measurements. Any location outside this region remains unobserved and is considered unknown. For map query efficiency evaluation, random query samples are generated within this mapping space \(\mathcal{M}\). For map accuracy evaluation, we compare the occupancy state of voxels in \(\mathcal{M}\) determined by our method with that given by the baseline, and compute an accuracy percentage.

\subsection{Memory Consumption Evaluation}
\label{sec:exp_mem}

We present the memory consumption results of ours and all baseline methods in Figure~\ref{fig:exp_mem_1} and Figure~\ref{fig:exp_mem_2}. Our framework consists of a robo-centric local map (i.e., dense uniform occupancy grid map) and a global map (i.e., boundary map). A detailed memory breakdown for these two maps is also provided in the figure. Across all tested sequences and resolutions, our method consistently achieves the lowest memory consumption among all methods evaluated.

Among the baselines, octree-based methods (i.e., Octomap and UFOMap) generally exhibit better memory efficiency than grid-based methods. The UFOMap is the most memory-efficient method among the baselines, then followed by Octomap. 
In comparison with UFOMap, our method still achieves substantial memory reductions, especially in large-scale environments and at higher map resolutions. For instance, in the \textit{ford\_2} sequence at a map resolution of 0.1m, our method improves memory efficiency by 12.1 times (i.e., reduces memory consumption by \(91.7\%\)). Octomap exceeds the memory limit (i.e., 120GB) in this case. 
In the \textit{ford\_3} sequence at a map resolution of 0.1m, our method outperforms by 9.3 times (i.e., \(89.3\%\) memory reduction) compared to UFOMap, and 21.9 times (i.e., \(95.4\%\) memory reduction) compared to Octomap. 
D-Map employs a hybrid data structure and generally showcases a marginally higher memory consumption than Octomap. 

Compared to grid-based methods (i.e., Hash Grid and Uniform Grid), our method offers even more dramatic improvements. For instance, in the \textit{hku\_campus} sequence at a map resolution 0.05m, our method improves the memory efficiency by 41.9 times (i.e., \(97.6\%\) reduction) compare to Hash Grid. In the \textit{kitti\_00} sequence at a map resolution 0.1m, our method outperforms by 28.8 times (i.e., \(96.5\%\) reduction) compar{ed} to Hash Grid, and 99.0 times (i.e., \(99.0\%\) reduction) compared to Uniform Grid. 

As shown in the experimental results, our method achieves more substantial memory reduction when map resolution increases. In large-scale sequences, the memory consumption of our method grows approximately quadratically with increased resolution, while the baseline methods exhibit near-cubic growth. This divergence arises from the fact that our method stores the two-dimensional (2D) boundary voxels in the global map, whereas the baselines maintain the entire three-dimensional (3D) volume. Although our local map is also a dense uniform occupancy grid map that represents full 3D volume, it is robo-centric and maintains only the region surrounding the robot. As a result, in large-scale environments, the global map dominates overall memory usage, and the use of a 2D representation yields a near-quadratic growth.

As shown in the memory breakdown in Figure~\ref{fig:exp_mem_1} and Figure~\ref{fig:exp_mem_2}, the global map constitutes the majority of memory usage in large-scale sequences, including the Ford AV sequences (i.e., \textit{ford\_1}, \textit{ford\_2} and \textit{ford\_3}), KITTI sequences (i.e., \textit{kitti\_00} and \textit{kitti\_02}), and the \textit{hku\_campus} sequence. However, in the \textit{uav\_flight} sequence at a map resolution of 0.05m, the local map constitutes a larger portion of the total memory consumption. This is attributed to the relatively limited spatial scale of this sequence. This highlights that our method is more suitable for large-scale occupancy mapping tasks, where memory reduction is considerably more pronounced.

{Under low-resolution or small-scale environments, the memory consumption of occupancy grid maps is typically modest, allowing most existing methods to operate smoothly even on memory-constrained platforms. However, when scaling to larger environments or adopting finer resolutions, the memory demands of conventional methods grow rapidly, often exceeding practical hardware limits.
Therefore, in these challenging scenarios, the ability of our method to substantially reduce memory consumption is especially valuable.
Furthermore, this scalability suggests strong potential for even more large-scale and high-resolution applications beyond those evaluated in our experiments. As future mapping tasks push toward more challenging scene, the ability of our approach to maintain bounded memory usage through low-dimensional representation becomes an increasingly valuable asset.}

\begin{table*}[ht]
	\centering
	\caption{Comparison of map update time (ms) at different resolutions. The best and second-best results in each setting are highlighted using distinct tint colors.}
	\label{tab:update}
 \renewcommand*{\arraystretch}{1.0}
 \fontsize{8}{10}\selectfont 
	\begin{threeparttable}
		\begin{tabularx}{\textwidth}{@{}c 
		        >{\centering\arraybackslash}X
		        >{\centering\arraybackslash}X
		        >{\centering\arraybackslash}X
		        >{\centering\arraybackslash}X
		        >{\centering\arraybackslash}X
		        >{\centering\arraybackslash}X
		        >{\centering\arraybackslash}X
		        >{\centering\arraybackslash}X
		        >{\centering\arraybackslash}X
		        >{\centering\arraybackslash}X
		        >{\centering\arraybackslash}X
		        c@{}}
			\toprule
			\multicolumn{1}{l}{} & \multicolumn{4}{c}{\textit{ford\_1}} & \multicolumn{4}{c}{\textit{ford\_2}} & \multicolumn{4}{c}{\textit{ford\_3}} \\
			\cmidrule(l){2-5} \cmidrule(l){6-9} \cmidrule(l){10-13}
			Resolution (m) & 0.8 & 0.4 & 0.2 & 0.1 & 0.8 & 0.4 & 0.2 & 0.1 & 0.8 & 0.4 & 0.2 & 0.1 \\
			\midrule
    Uniform Grid & \cellcolor{myblue!25}11.56 & $\times$ & $\times$ & $\times$ & 11.89 & $\times$ & $\times$ & $\times$ & \cellcolor{myblue!85}\textbf{9.74} & $\times$ & $\times$ & $\times$ \\
    Hash Grid & 12.08 & \cellcolor{myblue!25}33.16 & $\times$ & $\times$ & \cellcolor{myblue!25}11.38 & \cellcolor{myblue!25}30.46 & $\times$ & $\times$ & 12.36 & \cellcolor{myblue!25}36.04 & 196.23 & $\times$ \\
    Octomap & 80.01 & 176.94 & 498.39 & $\times$ & 77.14 & 163.72 & 461.71 & $\times$ & 80.80 & 173.28 & 466.49 & 1353.29 \\
    UFOMap & 61.69 & 119.20 & 246.68 & \cellcolor{myblue!25}511.86 & 58.39 & 113.10 & 278.57 & \cellcolor{myblue!25}480.39 & 62.60 & 122.40 & 249.38 & 528.85 \\
    D-Map & 13.60 & 53.83 & \cellcolor{myblue!25}109.16 & $\times$ & 13.38 & 49.55 & \cellcolor{myblue!25}92.71 & $\times$ & 14.18 & 57.60 & \cellcolor{myblue!25}117.86 & \cellcolor{myblue!25}493.75 \\
    Ours & \cellcolor{myblue!85}\textbf{10.98} & \cellcolor{myblue!85}\textbf{24.89} & \cellcolor{myblue!85}\textbf{71.34} & \cellcolor{myblue!85}\textbf{219.59} & \cellcolor{myblue!85}\textbf{10.58} & \cellcolor{myblue!85}\textbf{23.61} & \cellcolor{myblue!85}\textbf{67.69} & \cellcolor{myblue!85}\textbf{207.78} & \cellcolor{myblue!25}12.04 & \cellcolor{myblue!85}\textbf{26.59} & \cellcolor{myblue!85}\textbf{75.50} & \cellcolor{myblue!85}\textbf{209.37} \\
		\end{tabularx}
	\end{threeparttable}	

	\begin{threeparttable}
		      \begin{tabularx}{\textwidth}{@{}c 
		        >{\centering\arraybackslash}X
		        >{\centering\arraybackslash}X
		        >{\centering\arraybackslash}X
		        >{\centering\arraybackslash}X
		        >{\centering\arraybackslash}X
		        >{\centering\arraybackslash}X
		        >{\centering\arraybackslash}X
		        >{\centering\arraybackslash}X
		        >{\centering\arraybackslash}X
		        c@{}}
            \toprule
            \multicolumn{1}{l}{} & \multicolumn{5}{c}{\textit{kitti\_00}} & \multicolumn{5}{c}{\textit{kitti\_02}} \\ \cmidrule(l){2-6} \cmidrule(l){7-11} 
            Resolution (m) & 0.8 & 0.4 & 0.2 & 0.1 & 0.05 & 0.8 & 0.4 & 0.2 & 0.1 & 0.05 \\ 
            \midrule
    Uniform Grid & \cellcolor{myblue!25}20.42 & \cellcolor{myblue!25}35.41 & \cellcolor{myblue!25}67.39 & \cellcolor{myblue!85}\textbf{209.19} & $\times$ & \cellcolor{myblue!25}19.92 & \cellcolor{myblue!25}35.80 & \cellcolor{myblue!25}71.61 & $\times$ & $\times$ \\
    Hash Grid & 21.97 & 40.37 & 96.56 & 342.80 & $\times$ & 21.70 & 42.56 & 89.32 & 274.34 & $\times$ \\
    Octomap & 133.20 & 251.77 & 509.20 & 1157.82 & 3355.46 & 139.04 & 261.07 & 541.39 & 1240.35 & 3706.38 \\
    UFOMap & 120.81 & 228.12 & 438.72 & 867.68 & 1701.42 & 124.23 & 236.93 & 457.62 & 906.31 & 1771.99 \\
    D-Map & \cellcolor{myblue!85}\textbf{11.86} & \cellcolor{myblue!85}\textbf{25.61} & \cellcolor{myblue!85}\textbf{48.89} & \cellcolor{myblue!25}226.12 & \cellcolor{myblue!25}1166.79 & \cellcolor{myblue!85}\textbf{11.78} & \cellcolor{myblue!85}\textbf{28.56} & \cellcolor{myblue!85}\textbf{55.68} & \cellcolor{myblue!85}\textbf{222.26} & \cellcolor{myblue!25}{938.94} \\
    Ours & 21.09 & 36.54 & 79.47 & 245.07 & \cellcolor{myblue!85}\textbf{794.19} & 20.72 & 38.13 & 83.76 & \cellcolor{myblue!25}263.03 & \cellcolor{myblue!85}\textbf{860.78} \\
        \end{tabularx}
	\end{threeparttable}	

	\begin{threeparttable}
		      \begin{tabularx}{\textwidth}{@{}c 
		        >{\centering\arraybackslash}X
		        >{\centering\arraybackslash}X
		        >{\centering\arraybackslash}X
		        >{\centering\arraybackslash}X
		        >{\centering\arraybackslash}X
		        >{\centering\arraybackslash}X
		        >{\centering\arraybackslash}X
		        >{\centering\arraybackslash}X
		        >{\centering\arraybackslash}X
		        c@{}}
            \toprule
            \multicolumn{1}{l}{} & \multicolumn{5}{c}{\textit{hku\_campus}} & \multicolumn{5}{c}{\textit{uav\_flight}} \\ \cmidrule(l){2-6} \cmidrule(l){7-11} 
            Resolution (m) & 0.8 & 0.4 & 0.2 & 0.1 & 0.05 & 0.8 & 0.4 & 0.2 & 0.1 & 0.05 \\ 
            \midrule
    Uniform Grid & \cellcolor{myblue!85}\textbf{0.68} & \cellcolor{myblue!85}\textbf{1.08} & \cellcolor{myblue!85}\textbf{1.87} & \cellcolor{myblue!85}\textbf{3.68} & $\times$ & \cellcolor{myblue!85}\textbf{0.71} & \cellcolor{myblue!85}\textbf{1.16} & \cellcolor{myblue!85}\textbf{2.04} & \cellcolor{myblue!85}\textbf{6.25} & \cellcolor{myblue!85}\textbf{12.37} \\
    Hash Grid & 1.02 & 1.71 & 3.17 & 6.74 & \cellcolor{myblue!25}20.80 & 1.10 & 1.92 & 3.68 & 9.20 & 30.78 \\
    Octomap & 4.02 & 6.88 & 13.92 & 34.78 & 120.09 & 4.56 & 7.91 & 15.93 & 41.73 & 137.84 \\
    UFOMap & 3.03 & 5.52 & 10.34 & 19.93 & 40.71 & 3.14 & 5.79 & 10.90 & 21.47 & 47.75 \\
    D-Map & 0.80 & 1.40 & 3.48 & 14.78 & 48.11 & 0.76 & 1.64 & 3.94 & 14.13 & 62.68 \\
    Ours & \cellcolor{myblue!25}0.69 & \cellcolor{myblue!25}1.18 & \cellcolor{myblue!25}2.32 & \cellcolor{myblue!25}6.21 & \cellcolor{myblue!85}\textbf{17.62} & \cellcolor{myblue!25}0.72 & \cellcolor{myblue!25}1.31 & \cellcolor{myblue!25}2.85 & \cellcolor{myblue!25}7.53 & \cellcolor{myblue!25}22.36 \\
     \bottomrule
        \end{tabularx}
	\end{threeparttable}

     \begin{tablenotes}
      \small
      \item \textit{Note:} $\times$ indicates that the method failed due to memory consumption exceeding the limit.
    \end{tablenotes}
    
 \vspace{-0.3cm}    
\end{table*}

\begin{figure*}[htbp] 
    \centering
    \includegraphics[width=\textwidth]{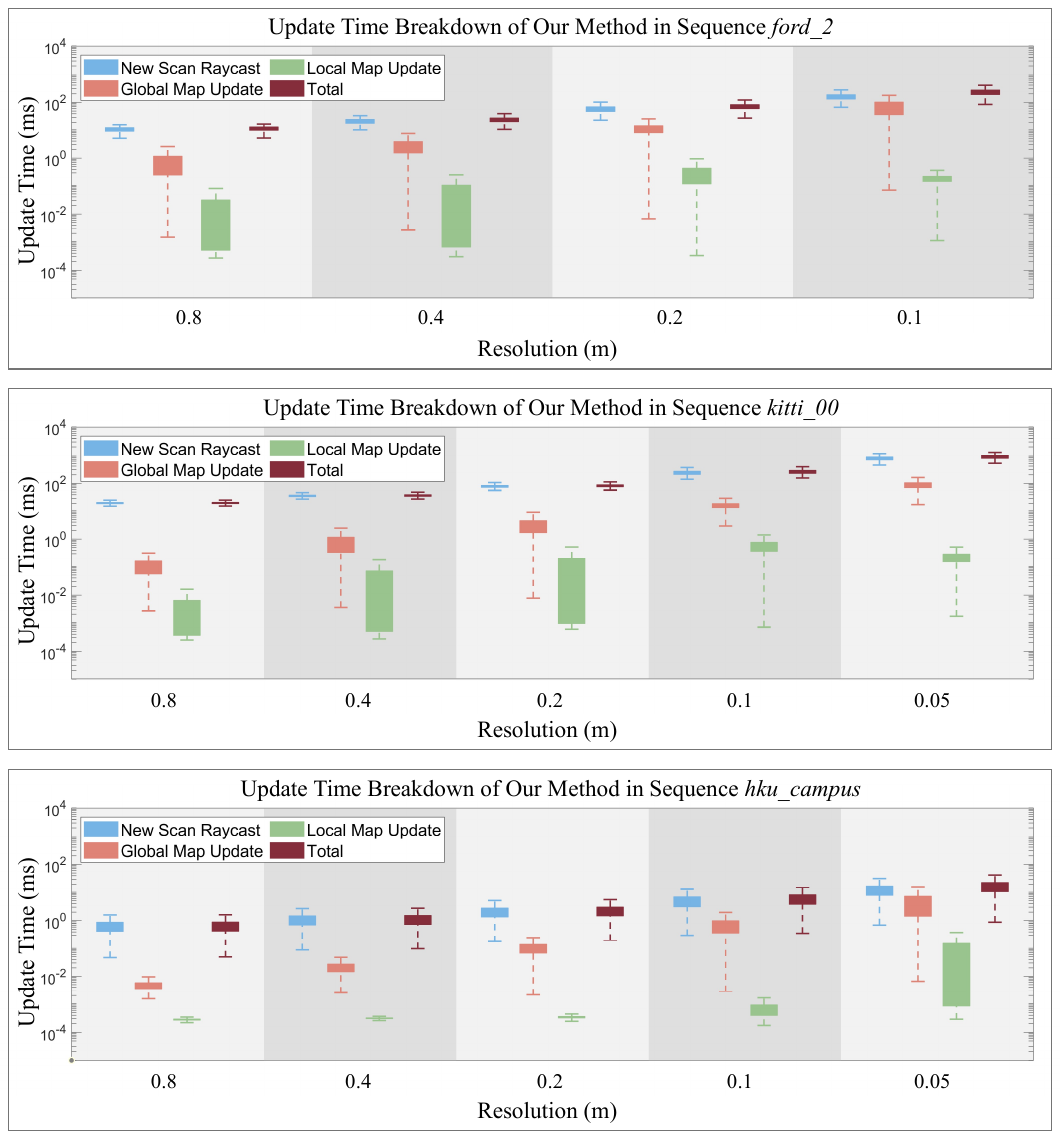}
    \vspace{-10pt} 
    \caption{{Breakdown of map update time of our method in sequence \textit{ford\_2}, \textit{kitti\_00} and \textit{hku\_campus}, respectively.}}
    \label{fig:breakdown_upd}
\end{figure*}

\subsection{Update Efficiency Evaluation}
\label{sec:exp_upd}

We report the benchmark results for map update times in Table~\ref{tab:update}. Additionally, we break down the overall map update time to analyze the contribution of the three components: the New Scan Ray Casting process, the Incremental Boundary Map Update, and the Local Occupancy Grid Map Update. Representative results are presented in Figure~\ref{fig:breakdown_upd}.

As shown in Table~\ref{tab:update}, in the KITTI sequences \textit{kitti\_00} and \textit{kitti\_02}, D-Map achieves the fastest update time in most of the resolution settings. Uniform Grid follows, with our method closely matching its performance. Hash Grid comes after with slightly slower performance, though still within a comparable range. In the Ford AV sequences \textit{ford\_1}, \textit{ford\_2}, and \textit{ford\_3}, our method achieves the best update performance, except in \textit{ford\_3} sequence at a map resolution of 0.8m. Notably, in the \textit{ford\_1} and \textit{ford\_2} sequences, Uniform Grid shows lower update efficiency compared to our method. This is attributed to Uniform Grid exhausting the available system RAM and starting to use swap space on the SSD, where memory access is slower than in physical RAM.
Hash Grid also demonstrates a marginally slower update time compared to our method in these sequences.
D-Map exhibits noticeably degraded performance in Ford AV sequences, generally ranking below the Hash Grid. In the \textit{hku\_campus} and \textit{uav\_flight} sequences, D-Map's update efficiency deteriorates further, with its update time approaching that of octree-based methods at higher map resolutions.

The octree-based methods, UFOMap and Octomap, show significantly lower update efficiency compared to our method. For instance, in the \textit{kitti\_02} sequence at a map resolution of 0.4m, our method outperforms UFOMap and Octomap by 6.2 times and 6.8 times, respectively.

In the following, we provide a comprehensive analysis {of} the results above.
For Uniform Grid, the map update process utilizes the ray casting technique. The complexity of this update process is analyzed as follows. The time complexity of updating a single voxel is \( \mathcal{O}(1) \). The number of voxels traversed by each ray can be approximated as \(\frac{R}{d}\), where \( R \) represents the sensor sensing range, and \( d \) denotes the map resolution. Consequently, the overall update process has a time complexity of \( \mathcal{O}(p\frac{R}{d}) \), where \( p \) is the total number of points in a sensor scan. The other grid-based method, Hash Grid, employs the same ray casting process. In the average case, the time complexity of updating a single voxel is also \( \mathcal{O}(1) \), making the overall update time complexity the same \( \mathcal{O}(p\frac{R}{d}) \). However, in the worst-case scenario, updating a single voxel in Hash Grid may degrade to \( \mathcal{O}(n) \), where \(n\) represents the number of all voxels maintained, due to hash collisions.

The local map in our mapping framework is implemented as a uniform occupancy grid map, resulting in the New Scan Ray Casting process having the same time complexity as the Uniform Grid as \( \mathcal{O}(p\frac{R}{d}) \). As shown in Figure~\ref{fig:breakdown_upd}, we observe that the overall map update time in our method is primarily dominated by the New Scan Ray Casting process. The Incremental Global Map Update and Local Occupancy Grid Map Update contribute only a minor portion to the total update time. This is due to several factors. i) The global map utilizes the low-dimensional boundary voxel representation, leading to fewer voxels that need to be processed during global map updates. Besides, the incremental global map update strategy also enhances the efficiency. ii) The efficient dense occupancy grid voxels construction method in local map updates. iii) The sliding mechanism and the incremental update method, which confine the global and local map updates to only the slide-in and slide-out regions. Consequently, our method demonstrates similar overall update efficiency to that of Uniform Grid.
As another grid-based method, Hash Grid exhibits slower update times compared to Uniform Grid and our method. This gap is largely due to hash collisions.

For octree-based methods such as Octomap and UFOMap, the time complexity of updating a single voxel is \(\mathcal{O}(\log(\frac{D}{d}))\), where \(D\) represents the scale of the mapping environment and \(d\) is the map resolution. These methods also employ ray casting for map updates, resulting in an overall time complexity of \(\mathcal{O}(p(\frac{R}{d}) \log(\frac{D}{d})))\). In comparison to ours and grid-based methods, this logarithmic overhead results in a significantly lower update efficiency, as demonstrated in our experimental results.

Theoretically, the map update time complexity of D-Map is \(\mathcal{O}(p(\frac{R}{d}) \log(\frac{D}{d})))\)~\cite{cai2023occupancy}, which is the same as the octree-based methods. {However, this time complexity corresponds to the worst-case scenario, and is less likely to occur in practical scenes compared to the octree-based methods. Furthermore, D-Map exhibits a decremental property that continuously removes voxels with a determined occupancy state, which also enhances its update efficiency.} As observed from the experiments, the update performance of D-Map heavily depends on the number of points in a sensor scan. For the KITTI sequences, the average number of points per scan is around 120k (as shown in Table~\ref{tab:dataset}), and in these sequences, D-Map outperforms both the grid-based methods and our method. However, in the Ford AV sequences, where the average number of points per scan is around 40k, D-Map performs worse than both the grid-based methods and our approach. In the private datasets \textit{hku\_campus} and \textit{uav\_flight}, where the average number of points per scan is around 6k, D-Map's performance further deteriorates. At higher map resolutions, its performance approaches that of the octree-based methods. {This indicates that D-Map is less efficient when handling sensor scans with low points density. This points density-related performance of D-Map is mainly due to its depth image-based method in determining the occupancy states.}

\begin{table*}[t]
	\centering
	\caption{Comparison of average map query time (ns) at different resolutions.}
	\label{tab:query_restrict}
 \renewcommand*{\arraystretch}{1.0}
 \fontsize{8}{10}\selectfont 
	\begin{threeparttable}
		\begin{tabularx}{\textwidth}{@{}c 
		        >{\centering\arraybackslash}X
		        >{\centering\arraybackslash}X
		        >{\centering\arraybackslash}X
		        >{\centering\arraybackslash}X
		        >{\centering\arraybackslash}X
		        >{\centering\arraybackslash}X
		        >{\centering\arraybackslash}X
		        >{\centering\arraybackslash}X
		        >{\centering\arraybackslash}X
		        >{\centering\arraybackslash}X
		        >{\centering\arraybackslash}X
		        c@{}}
			\toprule
			\multicolumn{1}{l}{} & \multicolumn{4}{c}{\textit{ford\_1}} & \multicolumn{4}{c}{\textit{ford\_2}} & \multicolumn{4}{c}{\textit{ford\_3}} \\
			\cmidrule(l){2-5} \cmidrule(l){6-9} \cmidrule(l){10-13}
			Resolution (m) & 0.8 & 0.4 & 0.2 & 0.1 & 0.8 & 0.4 & 0.2 & 0.1 & 0.8 & 0.4 & 0.2 & 0.1 \\
			\midrule
            Uniform Grid & {10510.9} & $\times$ & $\times$ & $\times$ & {25133.8} & $\times$ & $\times$ & $\times$ & \cellcolor{myblue!85}\textbf{41.18} & $\times$ & $\times$ & $\times$ \\
            Hash Grid & \cellcolor{myblue!25}86.28 & \cellcolor{myblue!25}123.12 & $\times$ & $\times$ & \cellcolor{myblue!25}84.11 & \cellcolor{myblue!25}132.83 & $\times$ & $\times$ & 101.73 & \cellcolor{myblue!25}288.19 & 173169.0 & $\times$ \\
            Octomap & 373.56 & 512.90 & 606.18 & $\times$ & 370.64 & 490.14 & 634.95 & $\times$ & 425.05 & 487.36 & 636.77 & 137039.0 \\
            UFOMap & 174.16 & 227.98 & \cellcolor{myblue!25}314.85 & \cellcolor{myblue!25}39516.2 & 182.43 & 234.03 & \cellcolor{myblue!25}313.96 & \cellcolor{myblue!25}82606.9 & 155.77 & 211.11 & \cellcolor{myblue!25}295.76 & \cellcolor{myblue!25}567.18 \\
            D-Map & 654.38 & 784.49 & 903.17 & $\times$ & 680.72 & 820.07 & 899.19 & $\times$ & 616.85 & 758.78 & 859.91 & 142607.3 \\
            Ours & \cellcolor{myblue!85}\textbf{78.86} & \cellcolor{myblue!85}\textbf{83.20} & \cellcolor{myblue!85}\textbf{81.32} & \cellcolor{myblue!85}\textbf{45.13} & \cellcolor{myblue!85}\textbf{80.36} & \cellcolor{myblue!85}\textbf{79.83} & \cellcolor{myblue!85}\textbf{77.39} & \cellcolor{myblue!85}\textbf{45.85} & \cellcolor{myblue!25}79.07 & \cellcolor{myblue!85}\textbf{88.78} & \cellcolor{myblue!85}\textbf{98.03} & \cellcolor{myblue!85}\textbf{56.86} \\
		\end{tabularx}
	\end{threeparttable}	

	\begin{threeparttable}
        \begin{tabularx}{\textwidth}{@{}c 
		        >{\centering\arraybackslash}X
		        >{\centering\arraybackslash}X
		        >{\centering\arraybackslash}X
		        >{\centering\arraybackslash}X
		        >{\centering\arraybackslash}X
		        >{\centering\arraybackslash}X
		        >{\centering\arraybackslash}X
		        >{\centering\arraybackslash}X
		        >{\centering\arraybackslash}X
		        c@{}}
            \toprule
            \multicolumn{1}{l}{} & \multicolumn{5}{c}{\textit{kitti\_00}} & \multicolumn{5}{c}{\textit{kitti\_02}} \\ \cmidrule(l){2-6} \cmidrule(l){7-11} 
            Resolution (m) & 0.8 & 0.4 & 0.2 & 0.1 & 0.05 & 0.8 & 0.4 & 0.2 & 0.1 & 0.05 \\  
            \midrule
            Uniform Grid & \cellcolor{myblue!85}\textbf{30.87} & \cellcolor{myblue!85}\textbf{30.43} & \cellcolor{myblue!85}\textbf{36.30} & 28773.4 & $\times$ & \cellcolor{myblue!85}\textbf{32.58} & \cellcolor{myblue!85}\textbf{41.82} & \cellcolor{myblue!85}\textbf{46.95} & $\times$ & $\times$ \\
            Hash Grid & 65.81 & 75.88 & 87.83 & \cellcolor{myblue!25}139.59 & $\times$ & 63.97 & 65.10 & 90.04 & \cellcolor{myblue!25}151.89 & $\times$ \\
            Octomap & 203.19 & 253.33 & 318.83 & 431.84 & 469013.6 & 183.22 & 227.12 & 304.98 & 380.92 & 638444.8 \\
            UFOMap & 86.53 & 128.51 & 159.48 & 203.38 & \cellcolor{myblue!25}264.28 & 91.67 & 127.28 & 149.64 & 205.21 & \cellcolor{myblue!25}254.30 \\
            D-Map & 496.90 & 477.45 & 546.25 & 666.67 & 84924.8 & 372.27 & 423.10 & 479.59 & 597.92 & 683656.5 \\
            Ours & \cellcolor{myblue!25}32.67 & \cellcolor{myblue!25}36.04 & \cellcolor{myblue!25}{36.42} & \cellcolor{myblue!85}\textbf{44.28} & \cellcolor{myblue!85}\textbf{48.23} & \cellcolor{myblue!25}55.42 & \cellcolor{myblue!25}62.96 & \cellcolor{myblue!25}62.62 & \cellcolor{myblue!85}\textbf{56.78} & \cellcolor{myblue!85}\textbf{58.80} \\
        \end{tabularx}
	\end{threeparttable}

	\begin{threeparttable}
        \begin{tabularx}{\textwidth}{@{}c 
		        >{\centering\arraybackslash}X
		        >{\centering\arraybackslash}X
		        >{\centering\arraybackslash}X
		        >{\centering\arraybackslash}X
		        >{\centering\arraybackslash}X
		        >{\centering\arraybackslash}X
		        >{\centering\arraybackslash}X
		        >{\centering\arraybackslash}X
		        >{\centering\arraybackslash}X
		        c@{}}
            \toprule
            \multicolumn{1}{l}{} & \multicolumn{5}{c}{\textit{hku\_campus}} & \multicolumn{5}{c}{\textit{uav\_flight}} \\ \cmidrule(l){2-6} \cmidrule(l){7-11} 
            Resolution (m) & 0.8 & 0.4 & 0.2 & 0.1 & 0.05 & 0.8 & 0.4 & 0.2 & 0.1 & 0.05 \\ 
            \midrule
            Uniform Grid & \cellcolor{myblue!85}\textbf{25.84} & \cellcolor{myblue!85}\textbf{30.36} & \cellcolor{myblue!85}\textbf{33.95} & \cellcolor{myblue!85}\textbf{42.08} & $\times$ & \cellcolor{myblue!85}\textbf{17.50} & \cellcolor{myblue!85}\textbf{23.60}  & \cellcolor{myblue!85}\textbf{26.46} & \cellcolor{myblue!85}\textbf{39.93} & \cellcolor{myblue!85}\textbf{40.89} \\
            Hash Grid & 51.01 & 70.05 & \cellcolor{myblue!25}71.24 & \cellcolor{myblue!25}79.34 & \cellcolor{myblue!25}108.86 & 34.55 & 48.13 & 61.13 & 84.94 & 94.57 \\
            Octomap & 129.77 & 154.54 & 178.84 & 218.77 & 285.32 & 86.86 & 205.26 & 269.57 & 269.94 & 354.00 \\
            UFOMap & 61.92 & 104.98 & 111.51 & 141.67 & 192.17 & 48.45 & 67.06 & 97.82 & 113.24 & 182.95 \\
            D-Map & 405.62 & 387.69 & 436.67 & 448.43 & 508.45 & 304.03 & 371.23 & 445.26 & 500.90 & 554.55 \\
            Ours & \cellcolor{myblue!25}36.67 & \cellcolor{myblue!25}62.87 & 76.52 & 80.97 & \cellcolor{myblue!85}\textbf{89.31} & \cellcolor{myblue!25}28.33 & \cellcolor{myblue!25}30.22 & \cellcolor{myblue!25}49.96 & \cellcolor{myblue!25}75.68 & \cellcolor{myblue!25}80.04 \\
            \bottomrule
        \end{tabularx}
	\end{threeparttable}
    
     \begin{tablenotes}
      \small
      \item \textit{Note:} $\times$ indicates that the method failed due to memory consumption exceeding the limit.
    \end{tablenotes}
    
 \vspace{-0.3cm}   
\end{table*}

\subsection{Query Efficiency Evaluation}
\label{sec:exp_query}

We randomly generate 100,000 query locations within the mapping space \(\mathcal{M}\) to evaluate the query efficiency of both our method and the baseline occupancy maps. The total query time is recorded, and the average query time per location is then computed. The results are summarized in Table~\ref{tab:query_restrict}. Note that in some high-resolution settings, certain methods demonstrate extremely high query times. This occurs because the method exhausts the available RAM and begins utilizing swap space on the SSD.

Theoretically, the average query time complexity for our method, as well as for the grid-based baselines Uniform Grid and Hash Grid, is \( \mathcal{O}(1) \). Despite this theoretical equivalence, there are subtle differences in their runtime performance. As shown in Table~\ref{tab:query_restrict}, Uniform Grid generally achieves the fastest query time when it is able to run (i.e., when it does not exceed the memory limit). Our method follows behind, with Hash Grid typically exhibiting slower performance compared to our method.

The runtime performance gap between our method and Uniform Grid is primarily attributed to two factors. First, while Uniform Grid is implemented as a contiguous array, our global map utilizes a hash-based 2D grid map, which introduces potential overhead due to hash collisions. Second, our method requires an additional binary search to determine the occupancy state apart from looking up the hash table.

As for the runtime performance gap between our method and Hash Grid, this gap arises from two main factors. First, our method employs a uniform occupancy grid for the local map, which is implemented as a contiguous array. Second, although both our global map and Hash Grid utilize a hash table, our global map maintains only a 2D grid rather than a 3D one. This significantly reduces the number of elements that are maintained in the hash table and minimizes hash collisions. As a result, despite the small overhead from the additional binary search in our method, the overall query time remains lower compared to Hash Grid. This performance advantage becomes more pronounced in large-scale, high-resolution scenarios.
For example, in the \textit{kitti\_02} sequence at a resolution of 0.1m, our method demonstrates a query speed that is 2.7 times faster than Hash Grid.
This is because combination of high map resolution and a large-scale environment results in an extensive number of voxels that are maintained in the hash table, leading to more frequent hash collisions.

The octree-based methods (i.e., Octomap and UFOMap) and D-Map all require traversing their hierarchical octree structures during occupancy state queries, which introduces a time complexity of \(\mathcal{O}(\log(\frac{D}{d}))\). As a result, the query efficiency of these methods is significantly lower than that of our method and grid-based approaches. This is also evidenced by the experimental results. {For example, in the \textit{ford\_3} sequence at a resolution of 0.1m, our method outperforms UFOMap by a factor of 10.0 in query efficiency, with UFOMap being the best-performing octree-based method.}
Notably, in terms of query efficiency, D-Map generally performs even worse than the octree-based methods. This is because D-Map employs a hybrid structure: it uses an octree to store unknown voxels and a separate hash-based grid map to store occupied voxels. As a result, querying the occupancy state in D-Map requires sequentially querying both the octree and the hash-based grid map, leading to additional overhead compared to the octree-based methods.

\begin{table*}[t]
	\centering
	\caption{Map Accuracy (\%) of our method and D-Map with Octomap as the ground truth.}
	\label{tab:acc_full}
 \renewcommand*{\arraystretch}{1.0}
 \fontsize{8}{10}\selectfont 
	\begin{threeparttable}
		\begin{tabularx}{\textwidth}{@{}c 
		        >{\centering\arraybackslash}X
		        >{\centering\arraybackslash}X
		        >{\centering\arraybackslash}X
		        >{\centering\arraybackslash}X
		        >{\centering\arraybackslash}X
		        >{\centering\arraybackslash}X
		        >{\centering\arraybackslash}X
		        >{\centering\arraybackslash}X
		        >{\centering\arraybackslash}X
		        >{\centering\arraybackslash}X
		        >{\centering\arraybackslash}X
		        c@{}}
			\toprule
			\multicolumn{1}{l}{} & \multicolumn{4}{c}{\textit{ford\_1}} & \multicolumn{4}{c}{\textit{ford\_2}} & \multicolumn{4}{c}{\textit{ford\_3}} \\
			\cmidrule(l){2-5} \cmidrule(l){6-9} \cmidrule(l){10-13}
			Resolution (m) & 0.8 & 0.4 & 0.2 & 0.1 & 0.8 & 0.4 & 0.2 & 0.1 & 0.8 & 0.4 & 0.2 & 0.1 \\
			\midrule
            Ours & \textbf{99.99} & \textbf{99.99} & \textbf{99.98} & \textbf{99.94} & \textbf{99.98} & \textbf{99.99} & \textbf{99.98} & \textbf{99.94} & \textbf{99.97} & \textbf{99.98} & \textbf{99.98} & \textbf{99.92} \\
            D-Map & 94.27 & 95.66 & 96.06 & $\times$ & 95.10 & 96.48 &96.55 & $\times$ & 93.66 & 95.42 & 95.19 & 94.70 \\
		\end{tabularx}
	\end{threeparttable}	

	\begin{threeparttable}
         \begin{tabularx}{\textwidth}{@{}c 
		        >{\centering\arraybackslash}X
		        >{\centering\arraybackslash}X
		        >{\centering\arraybackslash}X
		        >{\centering\arraybackslash}X
		        >{\centering\arraybackslash}X
		        >{\centering\arraybackslash}X
		        >{\centering\arraybackslash}X
		        >{\centering\arraybackslash}X
		        >{\centering\arraybackslash}X
		        c@{}}
            \toprule
            \multicolumn{1}{l}{} & \multicolumn{5}{c}{\textit{kitti\_00}} & \multicolumn{5}{c}{\textit{kitti\_02}} \\ \cmidrule(l){2-6} \cmidrule(l){7-11} 
            Resolution (m) & 0.8 & 0.4 & 0.2 & 0.1 & 0.05 & 0.8 & 0.4 & 0.2 & 0.1 & 0.05 \\  
            \midrule
            Ours & \textbf{99.95} & \textbf{99.95} & \textbf{99.95} & \textbf{99.91} & \textbf{99.57} & \textbf{99.98} & \textbf{99.99} & \textbf{99.99} & \textbf{99.96} & \textbf{99.67} \\
            D-Map & 96.98 & 97.97 & 98.66 & 98.56 & 98.69 & 97.94 & 98.60 & 99.05 & 98.99 & 98.93 \\
        \end{tabularx}
	\end{threeparttable}

    	\setlength{\tabcolsep}{12.5pt}
	\begin{threeparttable}
                 \begin{tabularx}{\textwidth}{@{}c 
		        >{\centering\arraybackslash}X
		        >{\centering\arraybackslash}X
		        >{\centering\arraybackslash}X
		        >{\centering\arraybackslash}X
		        >{\centering\arraybackslash}X
		        >{\centering\arraybackslash}X
		        >{\centering\arraybackslash}X
		        >{\centering\arraybackslash}X
		        >{\centering\arraybackslash}X
		        c@{}}
            \toprule
            \multicolumn{1}{l}{} & \multicolumn{5}{c}{\textit{hku\_campus}} & \multicolumn{5}{c}{\textit{uav\_flight}} \\ \cmidrule(l){2-6} \cmidrule(l){7-11} 
            Resolution (m) & 0.8 & 0.4 & 0.2 & 0.1 & 0.05 & 0.8 & 0.4 & 0.2 & 0.1 & 0.05 \\ 
            \midrule
            Ours & \textbf{99.86} & \textbf{99.90} & \textbf{99.92} & \textbf{99.92} & \textbf{99.89} & \textbf{99.28} & \textbf{99.63} & \textbf{99.76} & \textbf{99.84} & \textbf{99.86} \\
            D-Map & 85.14 & 89.93 & 92.22 & 92.42 & 91.00 & 71.53 & 71.22 & 77.49 & 83.05 & 85.41 \\
            \bottomrule
        \end{tabularx}
	\end{threeparttable}
    
     \begin{tablenotes}
      \small
      \item \textit{Note:} $\times$ indicates that D-Map failed due to memory consumption exceeding the limit.
    \end{tablenotes}
    
 \vspace{-0.3cm}   
\end{table*}

\subsection{Map Accuracy Evaluation}
\label{sec:exp_acc}

We evaluate the map accuracy of our method by comparing its results within the mapping space \(\mathcal{M}\) with those of Octomap (i.e., the ground-truth). Notably, to produce the ground-truth results, Octomap is executed on another platform with sufficient memory resources to run all sequences at all resolutions. Since Uniform Grid, Hash Grid, and UFOMap adopt the same map update process as Octomap, they are expected to produce identical mapping results. Similarly, we also compare the accuracy of D-Map against Octomap. The results are presented in Table~\ref{tab:acc_full}.

As shown, our method achieves mapping results that closely match those of Octomap. {The minor accuracy degradation observed arises from the conversion of discrete occupancy states to log-odds probabilities (see Equation~\ref{eqa:log_odds_init}) in the \texttt{Fuse} step in the Local Occupancy Grid Map Update process (see Section~\ref{sec:restore}). }

{In contrast, D-Map demonstrates a notable degradation in accuracy. This degradation arises from two primary factors. First, its depth image-based strategy can introduce errors in determining the voxel's occupancy state. Second, D-Map is built on the assumption of a static environment. Based on this assumption, it omits ray casting and probabilistic updates and adopts the more efficient update strategy. However, this design introduces a critical limitation: once a voxel is marked as occupied, it cannot be reverted to free—even if future scans no longer observe the obstacle—resulting in persistent false positives. Consequently, D-Map performs poorly in dynamic environments. This limitation is particularly evident in the \textit{uav\_flight} sequence. In this sequence, a MAV flies through unconstructed outdoor trails, stirring up large quantities of dust particles, which act as dynamic, transient tiny objects. A significant drop in map accuracy of D-Map is observed in this sequence. In contrast, both our method and Octomap incorporate the ray casting and probabilistic updates. Such mechanism is capable for handling such dynamic scenarios: voxel currently marked as occupied due to the dynamic objects can be later corrected to free when rays of subsequent sensor scans pass through it. As a result, our method still maintains high map accuracy in such scenario.}

\section{Real-world Applications}
\label{sec:real-world}

\begin{figure*}[t]
    \centering
    \includegraphics[width=\linewidth]{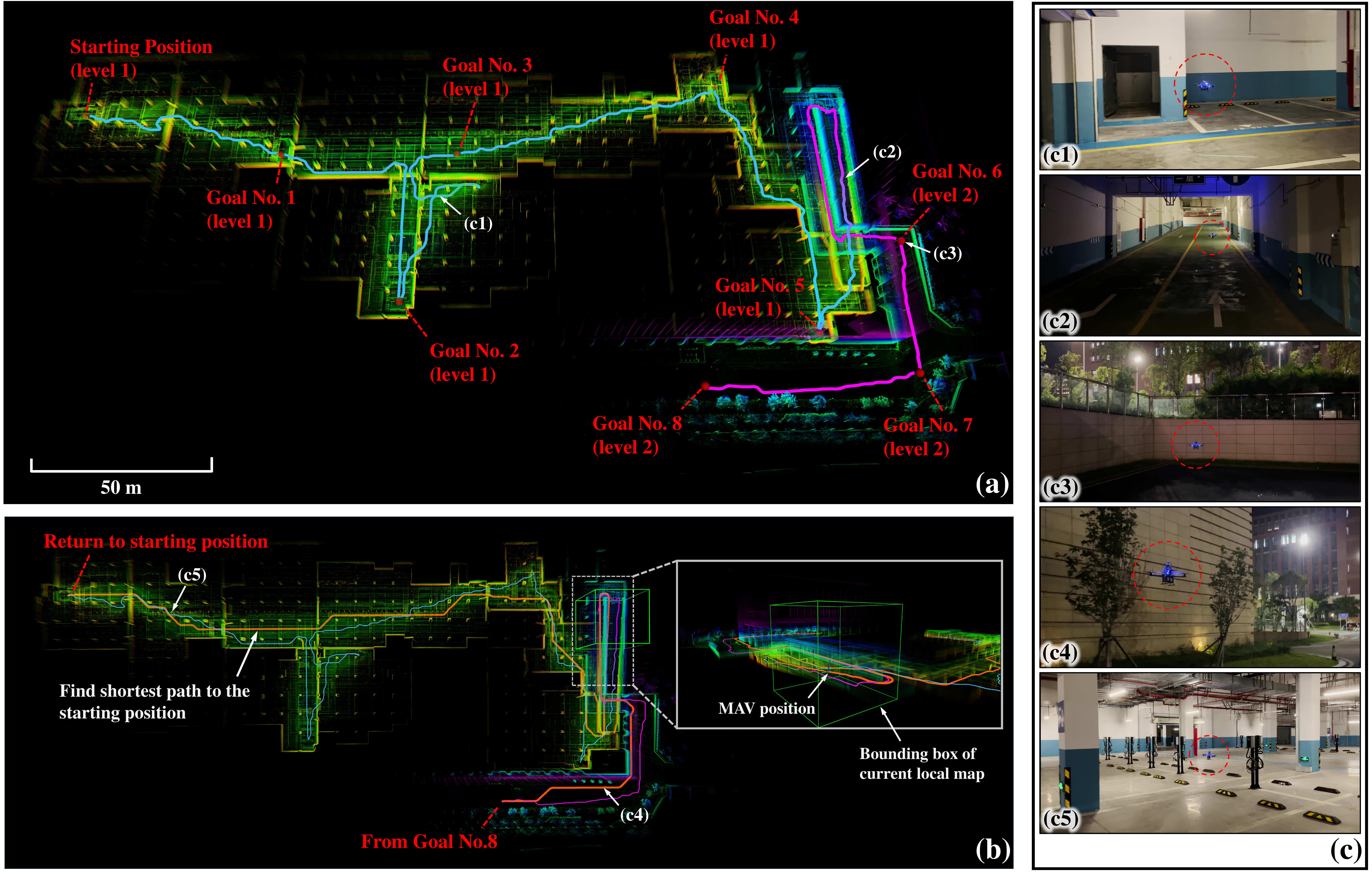}
    \caption{Illustration of a long-range autonomous navigation task conducted in a multi-level building. Starting from an initial position, the MAV sequentially navigated to a series of manually defined goals, each specified after the previous one was reached. All goals were intentionally placed beyond the MAV’s current local map to enforce long-range navigation. In total, eight goals were assigned and visited. After reaching the final goal (Goal No.8), the MAV was instructed to autonomously plan and execute a return trajectory to the starting position. (a) Locations of the eight goals. Goals No.1–5 are located on level 1, while Goals No.6–8 are on level 2. The figure also demonstrates the accumulated point cloud and the MAV trajectory upon reaching Goal No.8, with the trajectory color-coded by altitude. (b) Return trajectory from Goal No.8 to the starting position, highlighted in orange. The MAV followed the shortest path computed using the A* search algorithm based on the proposed global-local mapping framework. The figure also illustrates the local map at a position selected on the return trajectory, highlighting its limited spatial range compared to the global map. (c) Sample images of the MAV captured during the mission, with their corresponding locations on the trajectory marked in (a) and (b). Images (c1)–(c3) were taken during the outbound flight to Goal No.8, with locations annotated in (a). Images (c4) and (c5) were captured during the return flight and correspond to locations indicated in (b).}
    \label{fig:real_seg_pc}
\end{figure*}

\begin{figure}[t]
    \centering
    \includegraphics[width=\linewidth]{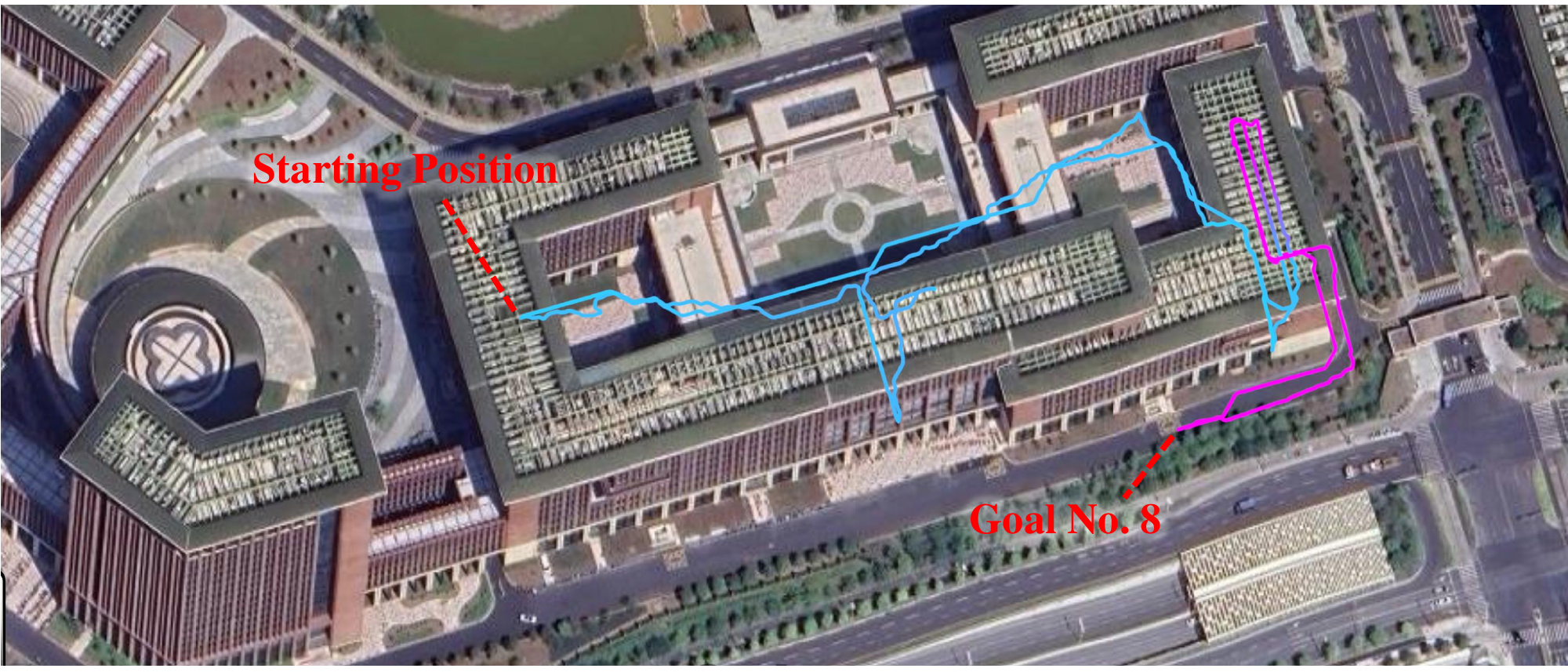}
    \caption{Complete MAV flight trajectory, including the outbound trajectory to Goal No.8 and the return to the starting position, overlaid on a Google Maps satellite image. The trajectory is color-coded by altitude.}
    \label{fig:real_google}
\end{figure}

\begin{figure*}[t]
    \centering
    \includegraphics[width=\linewidth]{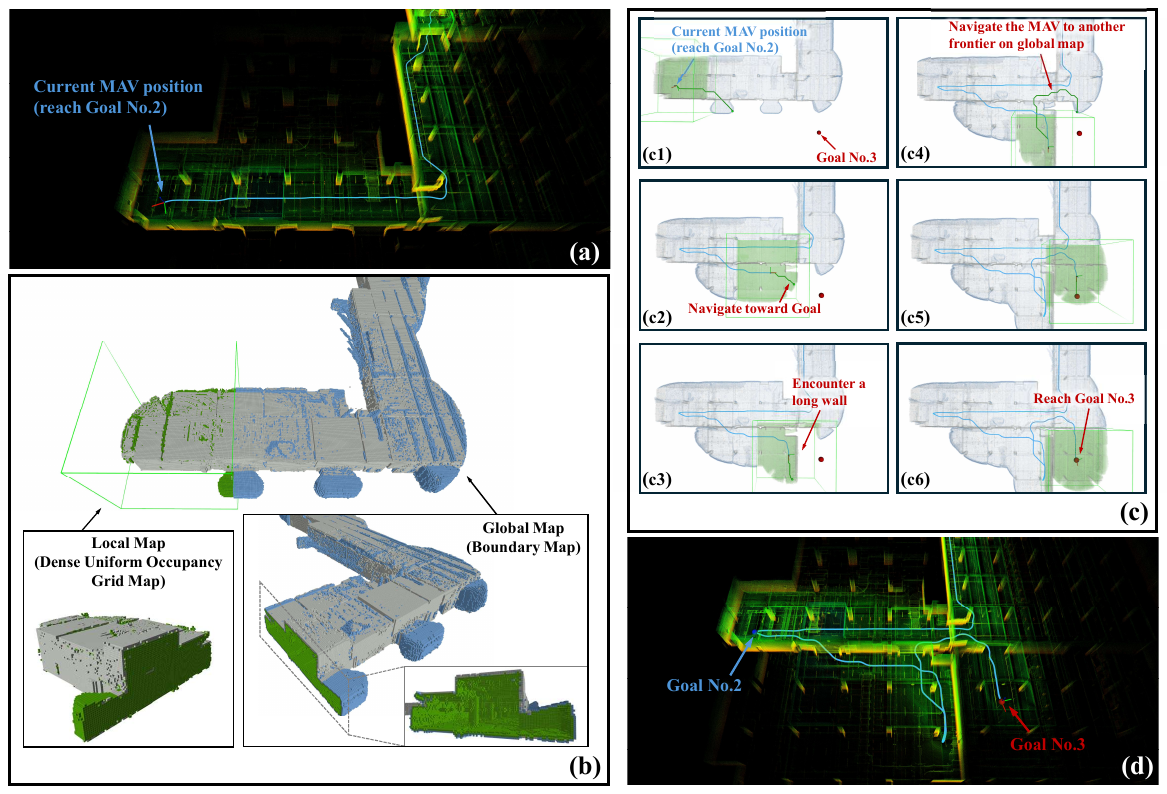}
    \caption{Illustration of how the MAV navigates itself to a long-range goal, using the segment from Goal No.2 to Goal No.3 as an example. (a) The accumulated point cloud and trajectory when reaching Goal No.2. (b) Visualization of the corresponding mapping framework at this moment, including the global map (i.e., boundary map) and the local map. The free, unknown and occupied voxels are colored by green, blue and grey, respectively. (c) Autonomous navigation process toward Goal No.3. The red dot marks the location of Goal No.3. The green dot indicates the selected target on the frontier. The green line shows the computed A* path to the frontier target. For clarity, both the global and local maps are rendered semi-transparent. (d) Accumulated point cloud and trajectory upon reaching Goal No.3.
}
    \label{fig:real_plan_g3}
\end{figure*}

\begin{figure}[ht]
    \centering
    \includegraphics[width=\linewidth]{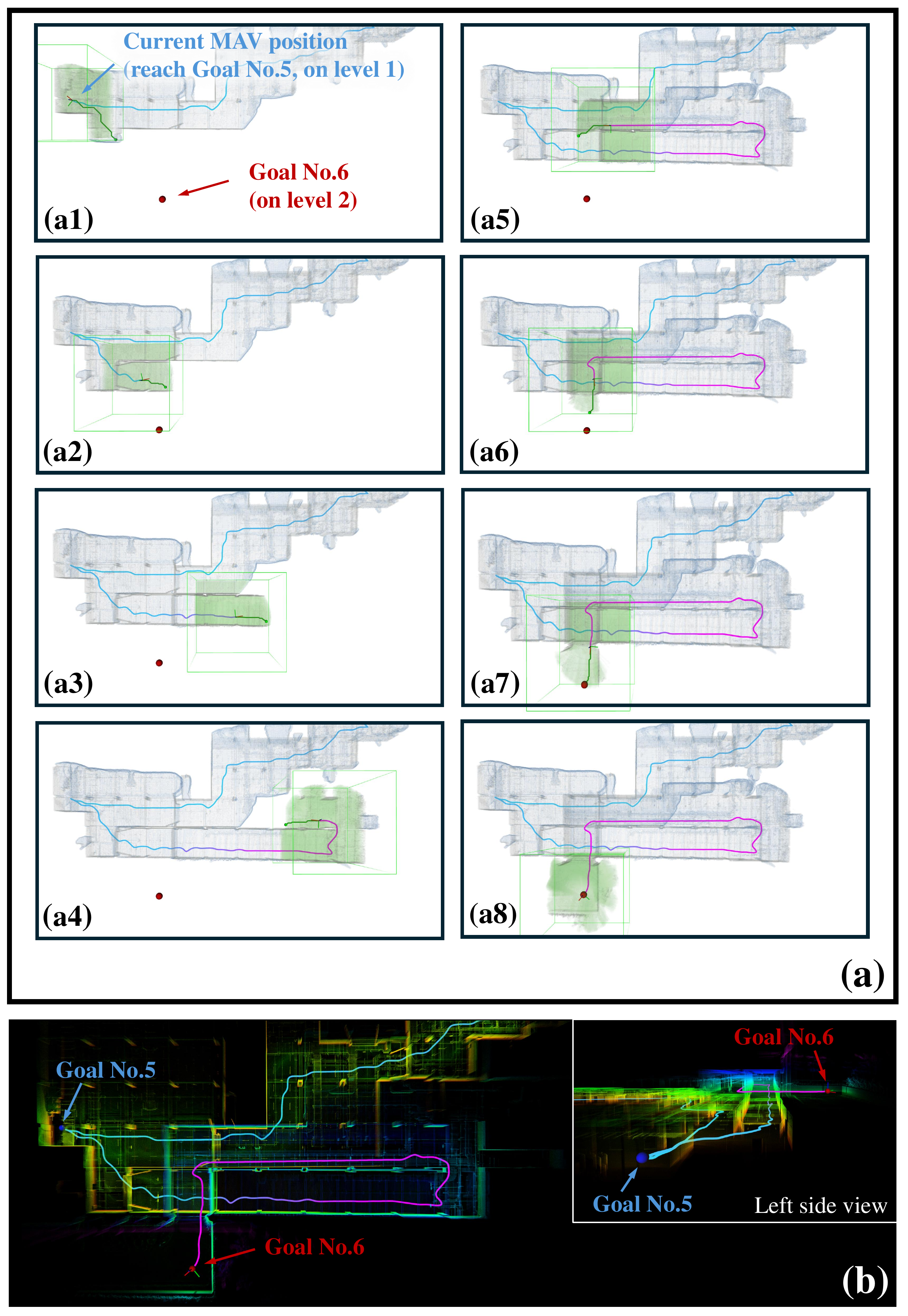}
    \caption{(a) Illustration of the autonomous navigation process from Goal No.5 toward Goal No.6. (b) Visualization of the accumulated point cloud and the MAV trajectory upon reaching Goal No.6. The trajectory is color-coded by altitude.}
    \label{fig:real_plan_g6}
\end{figure}

Long-range autonomous navigation tasks are highly challenging, particularly when the goal lies beyond the coverage of the local map. In such cases, the vehicle must maintain a global occupancy map to prevent falling into local traps, where the vehicle may become stuck in back-and-forth movements, hindering its ability to efficiently reach the goal. However, conventional global occupancy grid map requires extensive memory, which poses challenges to memory-constrained onboard platforms. Our mapping framework offers an effective and memory-efficient solution for addressing these challenges.

\subsection{System Design}
First, we present the strategy employed to tackle the long-range autonomous navigation task. If the goal lies within the free region, a collision-free A* path is computed from the current vehicle position to the goal. The A* path is entirely contained within known free region to ensure safety. If the goal is located within the unknown region, the planner instead navigates the vehicle toward a frontier—the interface between free and unknown space. To select an optimized frontier, we design a cost function inspired by~\cite{bircher2016receding}, which balances the cost of reaching the frontier with the frontier's proximity to the goal. An A* path is then computed from the current vehicle position to the selected frontier, confined within the free region, thereby ensuring a safe flight. After obtaining the A* path, we generate local trajectories for the vehicle along this path to ensure smooth and energy-efficient motion, following the approach outlined in~\cite{ren2025safety}.

In conventional occupancy grid maps, identifying frontiers typically requires scanning the entire map, which is computationally expensive. In contrast, our mapping framework naturally exposes frontier candidates through boundary exterior (unknown) voxels (i.e., \(\mathbf{b}_{\mathrm{ukn}}\)), which inherently indicate the interface between free and unknown regions. These voxels are explicitly maintained in the global map and can be directly retrieved. Thus, in our framework, scanning and frontier detection {are} required only in the local map. Since the local map typically covers a significantly smaller area than the global map in long-range navigation tasks, our approach enables substantially more efficient frontier identification compared to conventional methods.

Additional system components are integrated as follows. For MAV localization, we utilize a modified version \cite{zhu2022decentralized} of FAST-LIO2~\cite{xu2022fast}, which provides high-accuracy state estimation at a frequency of 100 Hz. For trajectory tracking and control, we employ the on-manifold model predictive controller proposed in~\cite{lu2022manifold}.

\subsection{Experiment Setup}

Our MAV platform is equipped with a semi-solid state LiDAR, the Livox MID-360, a flight control unit (FCU) running PX4 Autopilot~\cite{liu2024omninxt}, and an Intel NUC which has an Intel i7-1260P CPU and 64 GB of RAM. The extrinsic parameters between the LiDAR and the built-in IMU of the FCU are calibrated using LI-Init~\cite{zhu2022robust}.

The experiments were conducted in a two-level underground parking lot with an environment scale of \(277\text{m} \times 123\text{m} \times 19\text{m}\). {Several long-range goals were assigned during the mission. Specifically, when the vehicle reaches one goal, the next goal is {randomly set}. All of the goals lie outside the current local map (i.e., \(22\text{m} \times 22\text{m} \times 22\text{m}\) )}, with the farthest goal requiring the vehicle to travel a distance of \(\text{520.4m}\) to reach it.

\subsection{Results and Analysis}

At the end of the experiment, the MAV traveled a total distance of \(1,247.5\) meters, and mapped a total volume of \({29,153.4}\, \text{m}^3\).  The goals positions, complete flight trajectory, and accumulated point cloud scans are shown in Figure~\ref{fig:real_seg_pc}. An overview of the trajectory overlaid on a Google Maps satellite image is provided in Figure~\ref{fig:real_google}. Representative segments of the planning process are illustrated in Figure~\ref{fig:real_plan_g3} and Figure~\ref{fig:real_plan_g6}, including the visualization of the global and local maps, planned A* paths, and executed trajectories. Additional information for this flight is summarized in Table~\ref{tab:real_flight}. A multimedia demonstration of the full planning process is provided in Extension~1. 

Furthermore, we performed a performance evaluation of our method against several benchmarked methods, including Hash Grid, Octomap, UFOMap, and D-Map, for this flight. Figure~\ref{fig:mem_real} shows the memory consumption (in MB) of our method and the baselines during real-world flights under various map resolutions.
A notable steep increase in memory consumption is observed for Hash Grid, which occurs due to the rehashing process being triggered. A similar behavior is observed in our method and D-Map, as they also utilize hash-based grid maps in their schemes.

The results for map update times during real-world flights are presented in Figure~\ref{fig:upd_real}. At map resolution of 0.2m, our method, Hash Grid, and D-Map exhibit comparable map update efficiency. In contrast, the octree-based methods, including Octomap and UFOMap, demonstrate a significant decline in performance. At a map resolution of 0.05m, our method continues to maintain high update efficiency, while D-Map shows clear degradation in performance under this setting.

\begin{figure}[t]
    \centering
    \includegraphics[width=\linewidth]{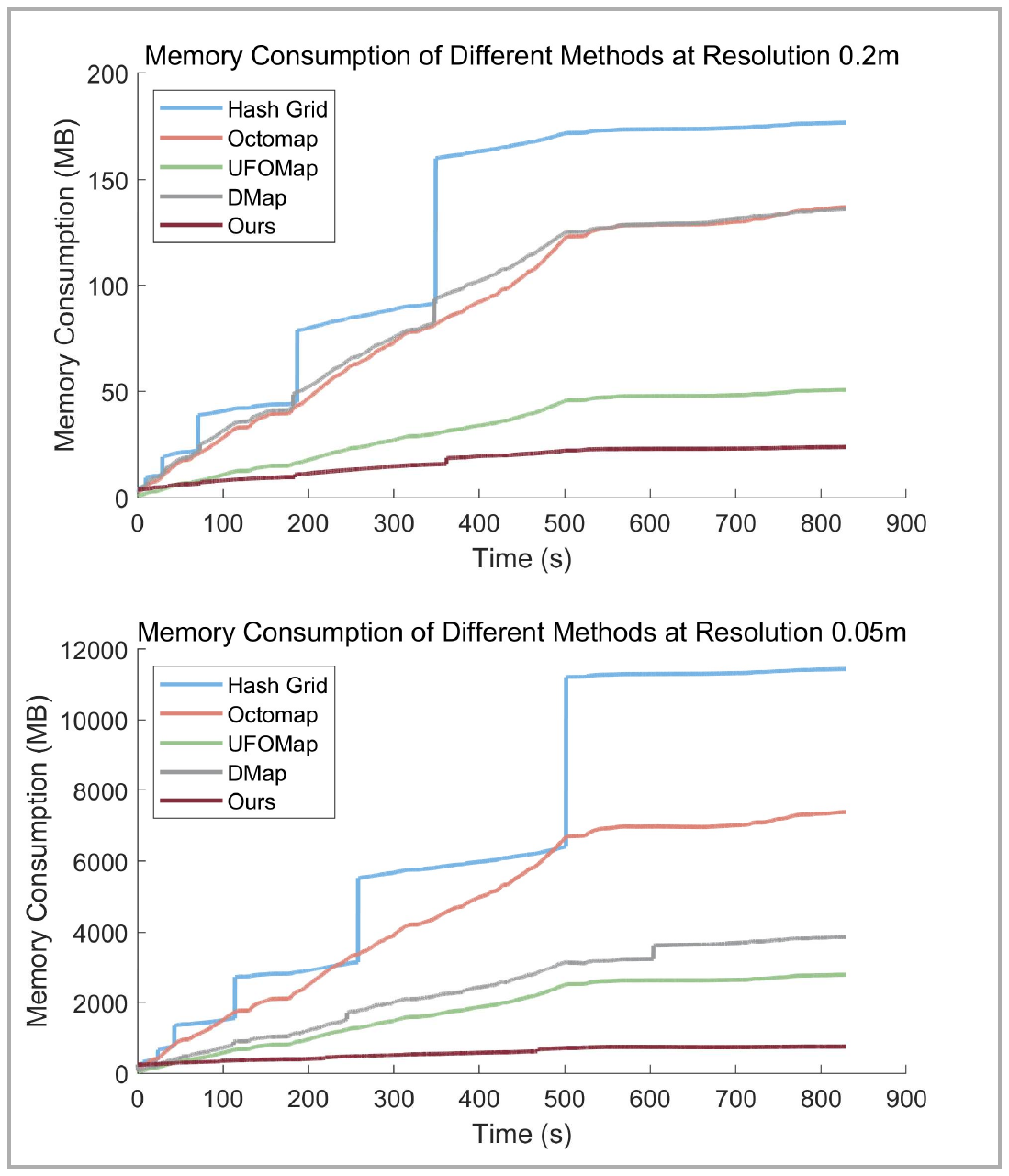}
    \caption{Memory consumption (MB) of our method and other baselines during the flight.}
    \label{fig:mem_real}
\end{figure}

\begin{table}[t]
	\setlength{\tabcolsep}{8.0pt}
	\centering
	\caption{Details of the Real-world Flight}
	\label{tab:real_flight}	
	\begin{threeparttable}
		\begin{tabular}{@{}ll@{}}
			\toprule
		Environment Scale (Bounding Box) ($\mathrm{m}^3$) & $277\times123\times19$ \\
            Travel Distance ($\mathrm{m}$) & 1,247.5 \\
            Mapped Volume ($\mathrm{m}^3$) & 29153.4 \\
            Number of Scans & 8,291\\
            Average Point Number (per scan) & 16,252.1 \\
			   \bottomrule
		\end{tabular}
	\end{threeparttable}
 \vspace{-0.2cm}
\end{table}

\begin{figure}[t]
    \centering
    \includegraphics[width=\linewidth]{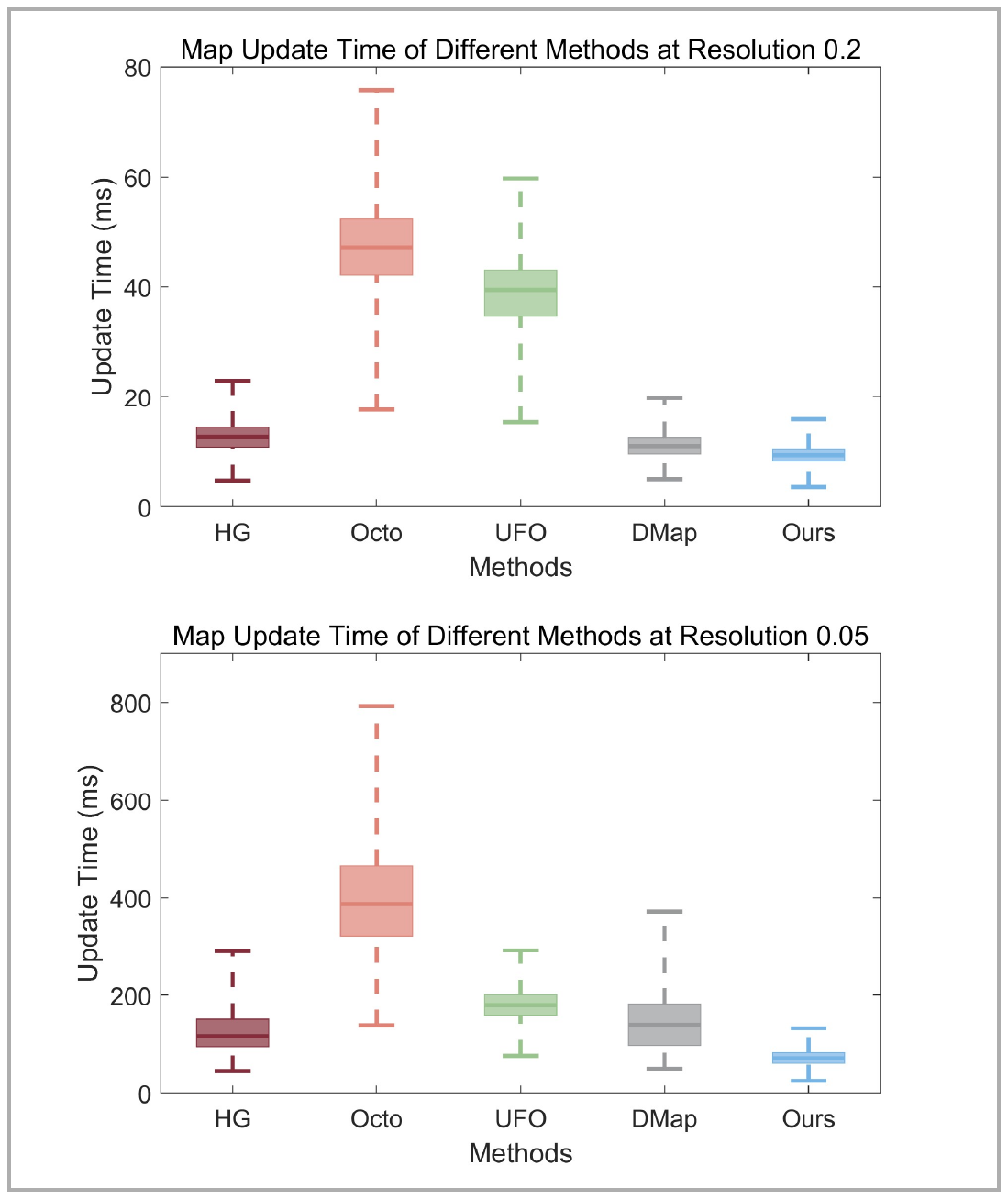}
    \caption{Map update time of our method and other baselines during the flight.}
    \label{fig:upd_real}
\end{figure}

\section{Discussion}
\label{sec:discuss}
{In this section, we first discuss the impact of different projection axis choices in various environments, and discuss a projection axis selection strategy (see~Section~\ref{sec:discuss:axis}). Then, we discuss how our map framework handles dynamic environments and further evaluate it in an environment that contains richer dynamic objects (see~Section~\ref{sec:discuss:dyn}). Finally, we address the limitations of our method and outline potential directions for future work (see~Section~\ref{sec:discuss:limit}).}

\subsection{{Projection Axis Selection}}
\label{sec:discuss:axis}
{Consider an environment with dimensions $a \times b \times c$, with $a$, $b$, and $c$ representing the scales along the $x$-, $y$-, and $z$-axes, respectively, and satisfying $a < b < c$. In this case, the $z$-axis corresponds to the largest spatial dimension of the environment. Projecting along the $z$-axis can cause many boundary voxels to be projected onto the same 2D grid cell. Since the map query involves performing a binary search over the boundary voxels stored in the cell, this concentration can increase the query time. In contrast, projecting along the $x$-axis—which corresponds to the smallest dimension—distributes the boundary voxels more evenly across the 2D grid cells, thereby improving query efficiency. However, this comes at the cost of a larger 2D grid map and, consequently, higher memory consumption.
Conversely, for an environment where $a > b > c$, selecting the $z$-axis as the projection axis results in faster query performance but higher memory usage, whereas choosing the $x$-axis leads to lower memory usage but longer query times. Overall, the choice of projection axis involves a trade-off between memory usage and query efficiency.}

{We conducted an additional experiment to further analyze this trade-off under different projection axis choices.
The test was performed on the \textit{kitti\_02} sequence at a representative map resolution of 0.1m. The \textit{kitti\_02} sequence captures a vehicle traversing urban roads, where the $z$ dimension exhibits the smallest spatial extent. Specifically, the environment scale of the sequence is $1,035\text{m} \times 688\text{m} \times 94\text{m}$, as presented in Table~\ref{tab:dataset}. In our original benchmark experiments (see Section~\ref{sec:benchmake}), the $z$-axis was selected as the projection axis. Here, we additionally evaluate cases where the $x$- and $y$-axes are used instead. We measured and reported the memory consumption and query performance of our mapping framework under different choices of projection axis. In addition, we measure the average number of boundary voxels that are processed by the binary search per query, denoted as $\bar{N}_{\text{bv/q}}$. This value directly influences the query time and reflects the density of boundary voxels stored in a 2D grid cell. The results are summarized in Table~\ref{tab:proj_axis}.}

\begin{table}[t]
\centering
\caption{{Impact of different projection axis choices on memory consumption and query time for the \textit{kitti\_02} sequence at a map resolution of 0.1m. The benchmark experiments in Section~\ref{sec:benchmake} use the $z$-axis as the projection axis.}}
\label{tab:proj_axis}
\begin{tabular}{c c c c}
\toprule
{Projection Axis} & {Memory (MB)} & {Query (ns)} & $\bar{N}_{\text{bv/q}}$\\
\midrule

$x$ & \textbf{637.48} & 82.84 & 33.47 \\
$y$ & 669.44 & 73.45 & 21.82 \\
$z$ & 1450.18 & \textbf{56.78} & \textbf{3.02} \\

\bottomrule
\end{tabular}
\end{table}

{As confirmed by the results, selecting a projection axis with a larger spatial extent (i.e., $x$- or $y$-axis) further reduces memory consumption. However, this choice also causes more boundary voxels being projected onto the same 2D grid cell, as indicated by the increase of $\bar{N}_{\text{bv/q}}$, thereby resulting higher map query time.
In addition, We note that when choosing $x$ or $y$ as the projection axis, $\bar{N}_{\text{bv/q}}$ significantly increases. However, the query time does not increase dramatically. This is because the binary search has a logarithmic time complexity, $\mathcal{O}(\log(n))$, where $n$ represents the number of boundary voxels stored in the 2D grid cell, rather than a linear complexity of $\mathcal{O}(n)$. Thus, the map query process still maintains efficiency under such scenarios.
}

{Based on the results, we discuss a {practical} axis selection strategy to maximize the performance {for deployment}. When initiating the mapping process, the user specifies the approximate environment scale $a \times b \times c$. This information is then used to determine the optimal projection axis. Generally, when the user prioritizes efficient query performance, the projection axis should be aligned with the direction of the smallest spatial extent. Conversely, when more compact memory consumption is desired, the projection axis should be selected as the direction with the larger spatial extent. It is worth noting that, once the projection axis is determined and the mapping process begins, our current framework does not support switching the projection axis during operation. This is because such a change would require reprojecting all boundary voxels in the environment onto a new 2D grid along the newly selected projection axis, which can be time-consuming, especially in large-scale scenarios. Therefore, to fully exploit the potential performance of our framework (e.g., maximizing memory efficiency or minimizing query time), the user is advised to provide an approximate estimation of the environment scale before starting the mapping process. To adaptively change the projection axis during operation, a potential solution is to partition the environment into multiple subspaces, each adopting a projection axis that best fits its local spatial characteristics, which can be further explored in the future work.}

\subsection{{Dynamic Environments}}
\label{sec:discuss:dyn}
{In this section, we first provide a detailed discussion of how our map framework handles dynamic objects. Then, we include an additional sequence with extensive dynamic objects to further demonstrate the robustness of our method in dynamic environments.}

{Our mapping framework inherently handles transient obstacles through its update mechanisms.
Specifically, each voxel in the local map stores log-odds occupancy probability. For dynamic objects within the local map, when such objects leave the voxel and subsequent sensor rays pass through it, the log-odds value is updated linearly towards a free state. This corresponds to a logarithmic change in the occupancy probability. This logarithmic relationship is illustrated in Figure~\ref{fig:logodds_prob}. To ensure that the state of such dynamic voxels can be updated rapidly, we employ a clamping policy. Specifically, an upper clamping threshold $p_{\text{max}}$ constrains each voxel's maximum occupancy probability. This prevents the log-odds value from becoming excessively high, allowing the voxel to quickly transit back to a free state. A lower clamping threshold $p_{\text{min}}$ ensures that a free voxel can rapidly adapt to an occupied state when a dynamic object appears. This clamping policy is also widely adopted in the baseline grid-based and octree-based mapping frameworks. For dynamic objects that slide out of the local map, their last observed occupancy is recorded by the global boundary map. When these regions re-enter the local map, they are re-initialized with a marginal occupancy probability (i.e., set to the threshold value $p_{\text{occ}}$ for the occupied state, and $p_{\text{free}}$ for the free state, respectively). With new sensor measurements, such voxels can also be rapidly updated to the opposite state.}

\begin{figure} [t]
	\centering
	\includegraphics[width=\linewidth]{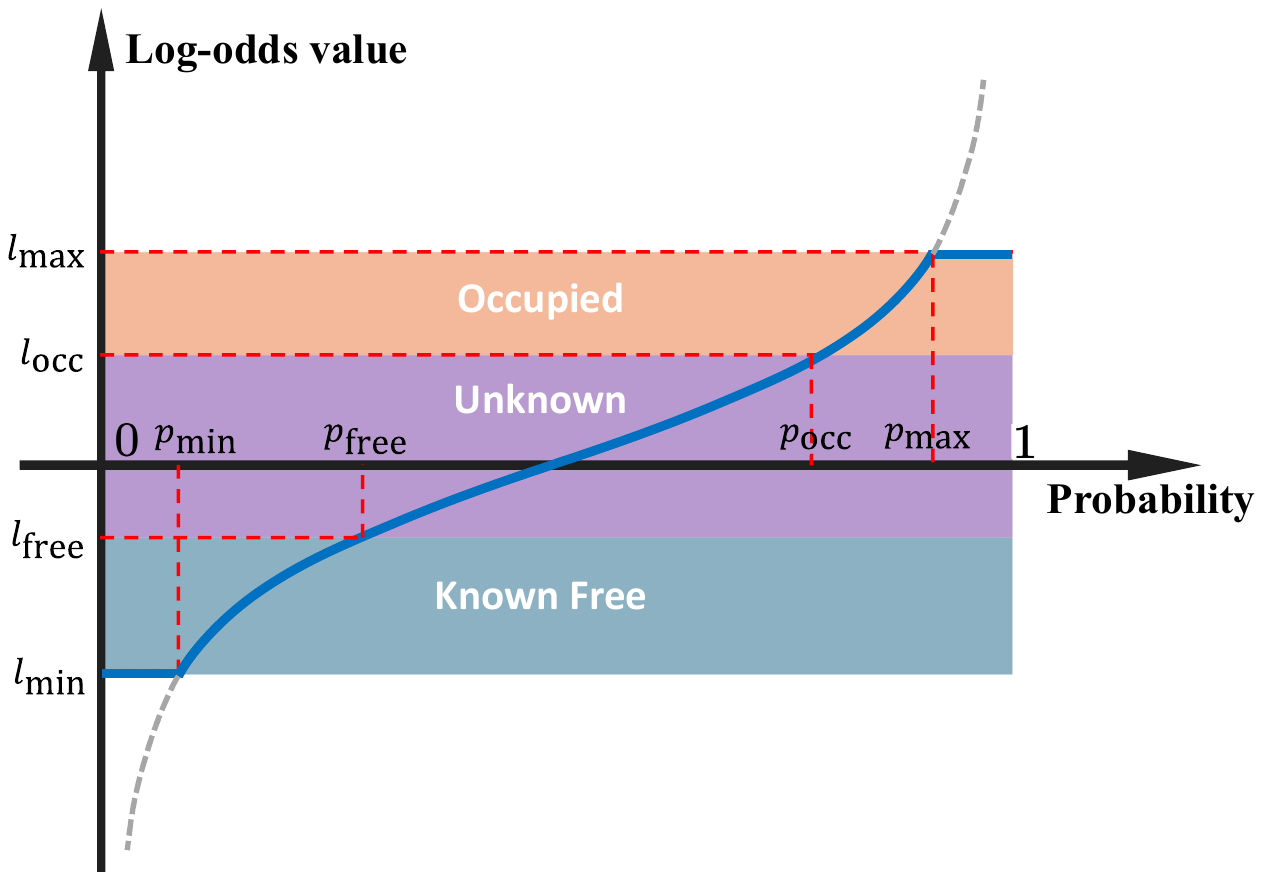}
	\caption{{Illustration of the relationship between occupancy probability and its log-odds value.}}
	\label{fig:logodds_prob}
\end{figure}

{The sequences used in our benchmark experiments (see Section~\ref{sec:benchmake}) already contain some dynamic objects. For instance, both the KITTI and Ford AV sequences were collected during real-world urban driving, which naturally includes some moving objects such as vehicles and pedestrians. The map accuracy experiments in Section~\ref{sec:exp_acc} demonstrate that our method achieves mapping results that closely match those of Octomap, which is recognized as an effective framework for handling dynamic objects through its probabilistic update mechanism. These results indicate that our method can robustly operate in such scenarios.}

{To further evaluate the robustness of our method under more dynamic conditions, we additionally include a sequence from the KITTI dataset (\textit{2011\_09\_29\_drive\_0071}), which features a large number of moving pedestrians, bicycles and vehicles, as illustrated in Fig.~\ref{fig:kitti_raw_result}(a). We present the mapping results obtained by Octomap, our method and D-Map, with all occupied voxels visualized (see Fig.~\ref{fig:kitti_raw_result}(b)). Note that the baseline methods including Uniform Grid, Hash Grid and UFOMap adopt the same update procedure as Octomap and are therefore expected to produce identical mapping results as Octomap.}

\begin{figure*} [t]
	\centering
	\includegraphics[width=\linewidth]{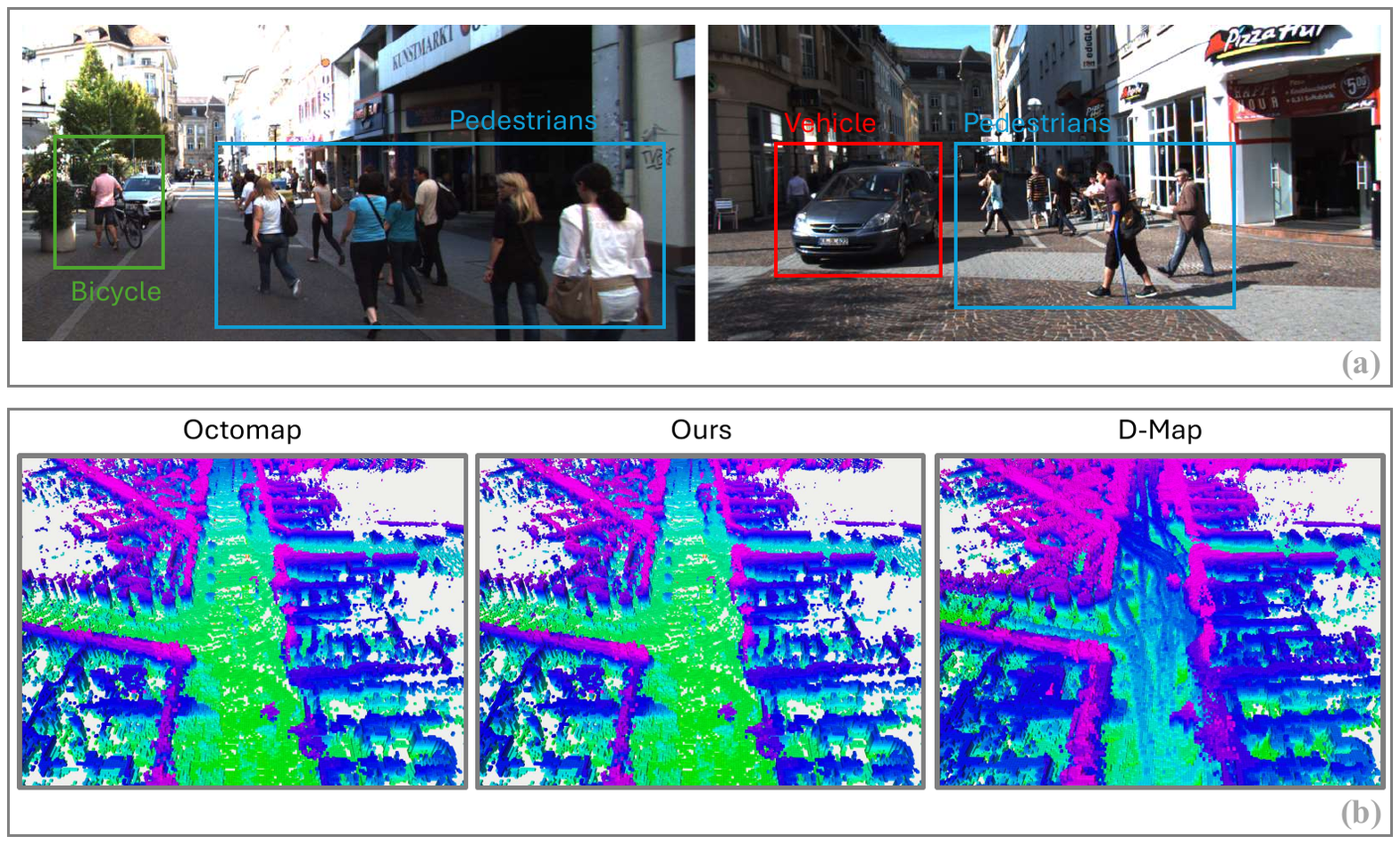}
	\caption{{(a) Sample images from sequence \textit{2011\_09\_29\_drive\_0071}, where dynamic objects (e.g., vehicles, pedestrians and bicycles) are highlighted by boxes. (b) Mapping results of the sequence obtained by Octomap, our method and D-Map, where all occupied voxels are visualized. Since the baseline methods including Uniform Grid, Hash Grid and UFOMap adopt the same update process as Octomap, they are expected to produce identical mapping results with Octomap.}}
	\label{fig:kitti_raw_result}
\end{figure*}

{We can observe that the D-Map's result exhibits residual artifacts left by moving agents. This artifact arises because, in D-Map’s update scheme, once a voxel is marked as occupied, it cannot be reverted to the free state even if subsequent scans no longer observe the obstacle. In contrast, both our method and Octomap effectively handle such dynamic changes and produce clean maps without such artifacts, exhibiting similar map results.}

{We further report the quantitative map accuracy of our method and D-Map on this sequence, using Octomap as the ground-truth reference. The map accuracy of our method is $99.42\%$ while D-Map exhibits noticeable accuracy degradation to $89.81\%$. These results further demonstrate the robustness of our method in this challenging dynamic scenario.}

\subsection{{Limitation and Future Work}}
\label{sec:discuss:limit}
{One limitation of our mapping framework lies in the conversion between log-odds probability value and discrete occupancy state in map updates. Specifically, when a voxel slides out of the local map, its log-odds probability is converted into a discrete occupancy state and represented in the global boundary map. When it re-enters the local map, this discrete occupancy state is converted back into a log-odds probability (see Equation~\ref{eqa:log_odds_init}). Such conversion introduces potential accuracy loss compared with the grid-based and octree-based methods. 
Although this accuracy loss is shown to be negligible in our benchmark experiments (see Section~\ref{sec:exp_acc}), it still represents a limitation of the current framework. 
}

{
In the future, the compactness of the boundary map can be further exploited to address a critical challenge in multi-agent robot systems: the synchronization of occupancy information across multiple agents.
Due to limited real-time communication bandwidth, transmitting large occupancy grid maps between agents is inefficient. In contrast, the compactness of our boundary map makes it particularly well-suited for this task. To enable synchronization of occupancy information by transmitting the boundary map, a method for merging different boundary maps from other agents is necessary. This will be explored as part of our future work.}

\section{Conclusion}
\label{sec:conclude}
In this paper, we introduced a novel method for occupancy grid mapping. Our method maintains only the two-dimensional (2D) boundary voxels rather than all grid voxels in the three-dimensional (3D) space. This low-dimensional representation significantly reduces memory consumption, particularly in high-resolution and large-scale mapping scenarios. A novel method is then proposed for the occupancy state determination of an arbitrary location in 3D environment, based on the 2D boundary voxels. We term our method as the boundary map. Furthermore, we designed a dedicated data structure and an algorithm that enable fast and constant-time occupancy state queries. Moreover, we developed a global-local mapping framework and a corresponding update method, achieving highly efficient map updates from real-time sensor measurements.

Extensive experimental results demonstrated that our method outperforms state-of-the-art occupancy mapping techniques by achieving substantial memory reduction while simultaneously preserving high query and update efficiency. Finally, our system was validated through a real-world experiment, where our mapping framework was deployed to support a long-range autonomous navigation task in a large, complex multi-level building, showcasing its practical capability.

Furthermore, as demonstrated by our experiments, the memory reduction of the proposed 2D boundary voxel representation is especially pronounced in high-resolution and large-scale mapping tasks. This highlights the potential of our method in enabling memory-efficient mapping, when future tasks push toward increasingly large-scale and require more detailed representations.

\section*{Acknowledgement}
The authors would like to thank Dr. Ximin Lyu from Sun Yat-sen University for supporting the field experiments. Some authors gratefully acknowledge the support of scholarships from DJI.

{
\bibliographystyle{SageH}
\bibliography{reference}

@inproceedings{yoder2016autonomous,
  title={Autonomous exploration for infrastructure modeling with a micro aerial vehicle},
  author={Yoder, Luke and Scherer, Sebastian},
  booktitle={Field and Service Robotics: Results of the 10th International Conference},
  pages={427--440},
  year={2016},
  organization={Springer}
}

@article{kong2021avoiding,
  title={Avoiding dynamic small obstacles with onboard sensing and computation on aerial robots},
  author={Kong, Fanze and Xu, Wei and Cai, Yixi and Zhang, Fu},
  journal={IEEE Robotics and Automation Letters},
  volume={6},
  number={4},
  pages={7869--7876},
  year={2021},
  publisher={IEEE}
}

@article{zhou2021fuel,
  title={Fuel: Fast uav exploration using incremental frontier structure and hierarchical planning},
  author={Zhou, Boyu and Zhang, Yichen and Chen, Xinyi and Shen, Shaojie},
  journal={IEEE Robotics and Automation Letters},
  volume={6},
  number={2},
  pages={779--786},
  year={2021},
  publisher={IEEE}
}

@article{cai2023occupancy,
  title={Occupancy grid mapping without ray-casting for high-resolution LiDAR sensors},
  author={Cai, Yixi and Kong, Fanze and Ren, Yunfan and Zhu, Fangcheng and Lin, Jiarong and Zhang, Fu},
  journal={IEEE Transactions on Robotics},
  year={2023},
  publisher={IEEE}
}

@article{xu2022fast,
  title={Fast-lio2: Fast direct lidar-inertial odometry},
  author={Xu, Wei and Cai, Yixi and He, Dongjiao and Lin, Jiarong and Zhang, Fu},
  journal={IEEE Transactions on Robotics},
  volume={38},
  number={4},
  pages={2053--2073},
  year={2022},
  publisher={IEEE}
}

@inproceedings{zhu2022robust,
  title={Robust real-time lidar-inertial initialization},
  author={Zhu, Fangcheng and Ren, Yunfan and Zhang, Fu},
  booktitle={2022 IEEE/RSJ International Conference on Intelligent Robots and Systems (IROS)},
  pages={3948--3955},
  year={2022},
  organization={IEEE}
}

@article{zhu2022decentralized,
  title={Decentralized lidar-inertial swarm odometry},
  author={Zhu, Fangcheng and Ren, Yunfan and Kong, Fanze and Wu, Huajie and Liang, Siqi and Chen, Nan and Xu, Wei and Zhang, Fu},
  journal={arXiv preprint arXiv:2209.06628},
  year={2022}
}

@article{lu2022manifold,
  title={On-manifold model predictive control for trajectory tracking on robotic systems},
  author={Lu, Guozheng and Xu, Wei and Zhang, Fu},
  journal={IEEE Transactions on Industrial Electronics},
  volume={70},
  number={9},
  pages={9192--9202},
  year={2022},
  publisher={IEEE}
}

@article{hornung2013octomap,
  title={OctoMap: An efficient probabilistic 3D mapping framework based on octrees},
  author={Hornung, Armin and Wurm, Kai M and Bennewitz, Maren and Stachniss, Cyrill and Burgard, Wolfram},
  journal={Autonomous robots},
  volume={34},
  pages={189--206},
  year={2013},
  publisher={Springer}
}

@article{duberg2020ufomap,
  title={UFOMap: An efficient probabilistic 3D mapping framework that embraces the unknown},
  author={Duberg, Daniel and Jensfelt, Patric},
  journal={IEEE Robotics and Automation Letters},
  volume={5},
  number={4},
  pages={6411--6418},
  year={2020},
  publisher={IEEE}
}

@inproceedings{dang2019graph,
  title={Graph-based path planning for autonomous robotic exploration in subterranean environments},
  author={Dang, Tung and Mascarich, Frank and Khattak, Shehryar and Papachristos, Christos and Alexis, Kostas},
  booktitle={2019 IEEE/RSJ International Conference on Intelligent Robots and Systems (IROS)},
  pages={3105--3112},
  year={2019},
  organization={IEEE}
}

@inproceedings{amanatides1987fast,
  title={A fast voxel traversal algorithm for ray tracing.},
  author={Amanatides, John and Woo, Andrew and others},
  booktitle={Eurographics},
  volume={87},
  number={3},
  pages={3--10},
  year={1987},
  organization={Citeseer}
}

@article{niessner2013real,
  title={Real-time 3D reconstruction at scale using voxel hashing},
  author={Nie{\ss}ner, Matthias and Zollh{\"o}fer, Michael and Izadi, Shahram and Stamminger, Marc},
  journal={ACM Transactions on Graphics (ToG)},
  volume={32},
  number={6},
  pages={1--11},
  year={2013},
  publisher={ACM New York, NY, USA}
}

@inproceedings{bircher2016receding,
  title={Receding horizon" next-best-view" planner for 3d exploration},
  author={Bircher, Andreas and Kamel, Mina and Alexis, Kostas and Oleynikova, Helen and Siegwart, Roland},
  booktitle={2016 IEEE international conference on robotics and automation (ICRA)},
  pages={1462--1468},
  year={2016},
  organization={IEEE}
}

@article{tranzatto2022cerberus,
  title={Cerberus in the darpa subterranean challenge},
  author={Tranzatto, Marco and Miki, Takahiro and Dharmadhikari, Mihir and Bernreiter, Lukas and Kulkarni, Mihir and Mascarich, Frank and Andersson, Olov and Khattak, Shehryar and Hutter, Marco and Siegwart, Roland and others},
  journal={Science Robotics},
  volume={7},
  number={66},
  pages={eabp9742},
  year={2022},
  publisher={American Association for the Advancement of Science}
}

@article{zhou2023racer,
  title={Racer: Rapid collaborative exploration with a decentralized multi-uav system},
  author={Zhou, Boyu and Xu, Hao and Shen, Shaojie},
  journal={IEEE Transactions on Robotics},
  volume={39},
  number={3},
  pages={1816--1835},
  year={2023},
  publisher={IEEE}
}

@inproceedings{rouvcek2020darpa,
  title={Darpa subterranean challenge: Multi-robotic exploration of underground environments},
  author={Rou{\v{c}}ek, Tom{\'a}{\v{s}} and Pecka, Martin and {\v{C}}{\'\i}{\v{z}}ek, Petr and Pet{\v{r}}{\'\i}{\v{c}}ek, Tom{\'a}{\v{s}} and Bayer, Jan and {\v{S}}alansk{\`y}, Vojt{\v{e}}ch and He{\v{r}}t, Daniel and Petrl{\'\i}k, Mat{\v{e}}j and B{\'a}{\v{c}}a, Tom{\'a}{\v{s}} and Spurn{\`y}, Voj{\v{e}}ch and others},
  booktitle={Modelling and Simulation for Autonomous Systems: 6th International Conference, MESAS 2019, Palermo, Italy, October 29--31, 2019, Revised Selected Papers 6},
  pages={274--290},
  year={2020},
  organization={Springer}
}

@inproceedings{tang2023bubble,
  title={Bubble Explorer: Fast UAV Exploration in Large-Scale and Cluttered 3D-Environments using Occlusion-Free Spheres},
  author={Tang, Benxu and Ren, Yunfan and Zhu, Fangcheng and He, Rui and Liang, Siqi and Kong, Fanze and Zhang, Fu},
  booktitle={2023 IEEE/RSJ International Conference on Intelligent Robots and Systems (IROS)},
  pages={1118--1125},
  year={2023},
  organization={IEEE}
}

@article{zhou2020ego,
  title={Ego-planner: An esdf-free gradient-based local planner for quadrotors},
  author={Zhou, Xin and Wang, Zhepei and Ye, Hongkai and Xu, Chao and Gao, Fei},
  journal={IEEE Robotics and Automation Letters},
  volume={6},
  number={2},
  pages={478--485},
  year={2020},
  publisher={IEEE}
}

@inproceedings{ren2022bubble,
  title={Bubble planner: Planning high-speed smooth quadrotor trajectories using receding corridors},
  author={Ren, Yunfan and Zhu, Fangcheng and Liu, Wenyi and Wang, Zhepei and Lin, Yi and Gao, Fei and Zhang, Fu},
  booktitle={2022 IEEE/RSJ International Conference on Intelligent Robots and Systems (IROS)},
  pages={6332--6339},
  year={2022},
  organization={IEEE}
}

@article{ren2023rog,
  title={ROG-map: An efficient robocentric occupancy grid map for large-scene and high-resolution LiDAR-based motion planning},
  author={Ren, Yunfan and Cai, Yixi and Zhu, Fangcheng and Liang, Siqi and Zhang, Fu},
  journal={arXiv preprint arXiv:2302.14819},
  year={2023}
}

@article{geiger2013vision,
  title={Vision meets robotics: The kitti dataset},
  author={Geiger, Andreas and Lenz, Philip and Stiller, Christoph and Urtasun, Raquel},
  journal={The International Journal of Robotics Research},
  volume={32},
  number={11},
  pages={1231--1237},
  year={2013},
  publisher={Sage Publications Sage UK: London, England}
}

@article{agarwal2020ford,
  title={Ford multi-AV seasonal dataset},
  author={Agarwal, Siddharth and Vora, Ankit and Pandey, Gaurav and Williams, Wayne and Kourous, Helen and McBride, James},
  journal={The International Journal of Robotics Research},
  volume={39},
  number={12},
  pages={1367--1376},
  year={2020},
  publisher={SAGE Publications Sage UK: London, England}
}

@article{ren2025safety,
  title={Safety-assured high-speed navigation for MAVs},
  author={Ren, Yunfan and Zhu, Fangcheng and Lu, Guozheng and Cai, Yixi and Yin, Longji and Kong, Fanze and Lin, Jiarong and Chen, Nan and Zhang, Fu},
  journal={Science Robotics},
  volume={10},
  number={98},
  pages={eado6187},
  year={2025},
  publisher={American Association for the Advancement of Science}
}

@inproceedings{yang2022far,
  title={Far planner: Fast, attemptable route planner using dynamic visibility update},
  author={Yang, Fan and Cao, Chao and Zhu, Hongbiao and Oh, Jean and Zhang, Ji},
  booktitle={2022 ieee/rsj international conference on intelligent robots and systems (iros)},
  pages={9--16},
  year={2022},
  organization={IEEE}
}

@article{o2012gaussian,
  title={Gaussian process occupancy maps},
  author={O’Callaghan, Simon T and Ramos, Fabio T},
  journal={The International Journal of Robotics Research},
  volume={31},
  number={1},
  pages={42--62},
  year={2012},
  publisher={SAGE Publications Sage UK: London, England}
}

@inproceedings{kim2012building,
  title={Building occupancy maps with a mixture of Gaussian processes},
  author={Kim, Soohwan and Kim, Jonghyuk},
  booktitle={2012 IEEE International Conference on Robotics and Automation},
  pages={4756--4761},
  year={2012},
  organization={IEEE}
}

@inproceedings{kim2015gpmap,
  title={GPmap: A unified framework for robotic mapping based on sparse Gaussian processes},
  author={Kim, Soohwan and Kim, Jonghyuk},
  booktitle={Field and Service Robotics: Results of the 9th International Conference},
  pages={319--332},
  year={2015},
  organization={Springer}
}

@inproceedings{wang2016fast,
  title={Fast, accurate gaussian process occupancy maps via test-data octrees and nested bayesian fusion},
  author={Wang, Jinkun and Englot, Brendan},
  booktitle={2016 IEEE International Conference on Robotics and Automation (ICRA)},
  pages={1003--1010},
  year={2016},
  organization={IEEE}
}

@article{guizilini2018towards,
  title={Towards real-time 3D continuous occupancy mapping using Hilbert maps},
  author={Guizilini, Vitor and Ramos, Fabio},
  journal={The International Journal of Robotics Research},
  volume={37},
  number={6},
  pages={566--584},
  year={2018},
  publisher={SAGE Publications Sage UK: London, England}
}

@article{o2018variable,
  title={Variable resolution occupancy mapping using gaussian mixture models},
  author={O’Meadhra, Cormac and Tabib, Wennie and Michael, Nathan},
  journal={IEEE Robotics and Automation Letters},
  volume={4},
  number={2},
  pages={2015--2022},
  year={2018},
  publisher={IEEE}
}

@inproceedings{zhi2019continuous,
  title={Continuous occupancy map fusion with fast bayesian hilbert maps},
  author={Zhi, Weiming and Ott, Lionel and Senanayake, Ransalu and Ramos, Fabio},
  booktitle={2019 International Conference on Robotics and Automation (ICRA)},
  pages={4111--4117},
  year={2019},
  organization={IEEE}
}

@article{tabib2021autonomous,
  title={Autonomous cave surveying with an aerial robot},
  author={Tabib, Wennie and Goel, Kshitij and Yao, John and Boirum, Curtis and Michael, Nathan},
  journal={IEEE Transactions on Robotics},
  volume={38},
  number={2},
  pages={1016--1032},
  year={2021},
  publisher={IEEE}
}

@inproceedings{lopez2017aggressive,
  title={Aggressive 3-D collision avoidance for high-speed navigation.},
  author={Lopez, Brett Thomas and How, Jonathan P},
  booktitle={ICRA},
  pages={5759--5765},
  year={2017}
}

@article{roth1989building,
  title={Building an environment model using depth information},
  author={Roth-Tabak, Yuval and Jain, Ramesh},
  journal={Computer},
  volume={22},
  number={6},
  pages={85--90},
  year={1989},
  publisher={IEEE}
}

@article{moravec1996robot,
  title={Robot evidence grids},
  author={Moravec, Martin C Martin Hans P},
  journal={CMU Robotics Institute Technical Report CMU-RI-TR-96-06},
  year={1996},
  publisher={Citeseer}
}

@inproceedings{elfes1995robot,
  title={Robot navigation: Integrating perception, environmental constraints and task execution within a probabilistic framework},
  author={Elfes, Alberto},
  booktitle={International Workshop on Reasoning with uncertainty in Robotics},
  pages={91--130},
  year={1995},
  organization={Springer}
}

@article{kawatsuma2012emergency,
  title={Emergency response by robots to Fukushima-Daiichi accident: summary and lessons learned},
  author={Kawatsuma, Shinji and Fukushima, Mineo and Okada, Takashi},
  journal={Industrial Robot: An International Journal},
  volume={39},
  number={5},
  pages={428--435},
  year={2012},
  publisher={Emerald Group Publishing Limited}
}

@inproceedings{seungsub2017study,
  title={A study on the disaster response scenarios using robot technology},
  author={SeungSub, Oh and Jehun, Hahm and Hyunjung, Jang and Soyeon, Lee and Jinho, Suh},
  booktitle={2017 14th International Conference on Ubiquitous Robots and Ambient Intelligence (URAI)},
  pages={520--523},
  year={2017},
  organization={IEEE}
}

@article{schmid2020efficient,
  title={An efficient sampling-based method for online informative path planning in unknown environments},
  author={Schmid, Lukas and Pantic, Michael and Khanna, Raghav and Ott, Lionel and Siegwart, Roland and Nieto, Juan},
  journal={IEEE Robotics and Automation Letters},
  volume={5},
  number={2},
  pages={1500--1507},
  year={2020},
  publisher={IEEE}
}

@inproceedings{isler2016information,
  title={An information gain formulation for active volumetric 3D reconstruction},
  author={Isler, Stefan and Sabzevari, Reza and Delmerico, Jeffrey and Scaramuzza, Davide},
  booktitle={2016 IEEE International Conference on Robotics and Automation (ICRA)},
  pages={3477--3484},
  year={2016},
  organization={IEEE}
}

@inproceedings{cao2021tare,
  title={TARE: A Hierarchical Framework for Efficiently Exploring Complex 3D Environments.},
  author={Cao, Chao and Zhu, Hongbiao and Choset, Howie and Zhang, Ji},
  booktitle={Robotics: Science and Systems},
  volume={5},
  pages={2},
  year={2021}
}

@inproceedings{liu2024omninxt,
  title={Omninxt: A fully open-source and compact aerial robot with omnidirectional visual perception},
  author={Liu, Peize and Feng, Chen and Xu, Yang and Ning, Yan and Xu, Hao and Shen, Shaojie},
  booktitle={2024 IEEE/RSJ International Conference on Intelligent Robots and Systems (IROS)},
  pages={10605--10612},
  year={2024},
  organization={IEEE}
}

@article{maddern20171,
  title={1 year, 1000 km: The oxford robotcar dataset},
  author={Maddern, Will and Pascoe, Geoffrey and Linegar, Chris and Newman, Paul},
  journal={The International Journal of Robotics Research},
  volume={36},
  number={1},
  pages={3--15},
  year={2017},
  publisher={SAGE Publications Sage UK: London, England}
}

@article{jung2024helipr,
  title={HeLiPR: Heterogeneous LiDAR dataset for inter-LiDAR place recognition under spatiotemporal variations},
  author={Jung, Minwoo and Yang, Wooseong and Lee, Dongjae and Gil, Hyeonjae and Kim, Giseop and Kim, Ayoung},
  journal={The International Journal of Robotics Research},
  volume={43},
  number={12},
  pages={1867--1883},
  year={2024},
  publisher={SAGE Publications Sage UK: London, England}
}
}

{
\appendix

\section{Proof of Theorem~\ref{thm:query}}
\label{app:proof_query}
\begin{proof}
By the condition of the theorem, the nearest boundary voxel in the search direction \(z^+\), denoted as \(\mathbf{b}_{{nn}}\), exists. As illustrated in Figure~\ref{fig:proof}, let \(\{\mathbf{n}_1, \mathbf{n}_2, \ldots, \mathbf{n}_k\}\) denote the sequence of voxels between the query voxel \(\mathbf{q}\) and \(\mathbf{b}_{{nn}}\), where \(\mathbf{n}_1\) is adjacent to \(\mathbf{q}\) in \(z^+\) direction, and \(\mathbf{b}_{{nn}}\) is adjacent to \(\mathbf{n}_k\) in \(z^+\) direction.

We first consider the case where \(\mathbf{b}_{nn}\) is a boundary interior voxel \(\mathbf{b}_\mathrm{int}\). In this case, \(\mathbf{b}_{nn}\) has occupancy state as free. Since \(\mathbf{b}_{nn}\) is the nearest boundary voxel of \(\textbf{q}\) in the search direction \(z^+\), none of the voxels in \(\{\mathbf{n}_1, \mathbf{n}_2, \ldots, \mathbf{n}_k\}\) are boundary voxels. In particular, consider voxel \(\mathbf{n}_k\). If \(\mathbf{n}_k\) were {occupied} or {unknown}, it would, by definition, be classified as a boundary exterior voxel \(\mathbf{b}_\mathrm{ext}\). Specifically, if it were occupied, it would be classified as \(\mathbf{b}_\mathrm{occ}\); if unknown, it would be classified as \(\mathbf{b}_\mathrm{ukn}\), due to having a 6-neighbor \(\mathbf{b}_{{nn}}\) in state {free} (see Equation~\ref{eqa:boundary_def}). This contradicts the condition that \(\mathbf{n}_k\) is not a boundary voxel. Hence, \(\mathbf{n}_k\) must be {free}. By induction, since \(\mathbf{n}_i\) is {free} and \(\mathbf{n}_{i-1}\) is not a boundary voxel, then \(\mathbf{n}_{i-1}\) must also be {free}, for all \(i = k, k-1, \ldots, 2\). Finally, since \(\mathbf{q}\) is not a boundary voxel and \(\mathbf{n}_1\) is {free}, it follows that \(\mathbf{q}\) is also {free}.

Alternatively, if \(\mathbf{b}_{{nn}}\) is a boundary exterior voxel \(\mathbf{b}_\mathrm{ext}\), its occupancy state is unknown or occupied. Again, since \(\mathbf{n}_k\) is not a boundary voxel and is adjacent to \(\mathbf{b}_{{nn}}\), it cannot be {free} or {occupied}, otherwise it would be classified as a boundary voxel (see Equation~\ref{eqa:boundary_def}). Thus, \(\mathbf{n}_k\) must be {unknown}. The same inductive argument applies in reverse: since \(\mathbf{n}_i\) is {unknown} and \(\mathbf{n}_{i-1}\) is not a boundary voxel, \(\mathbf{n}_{i-1}\) is also {unknown}, for all \(i = k, k-1, \ldots, 2\). Finally, since \(\mathbf{n}_1\) is {unknown} and the query voxel \(\mathbf{q}\) is not a boundary voxel, we conclude that \(\mathbf{q}\) must also be {unknown}.
\end{proof}

\section{Proof of Theorem~\ref{thm:query_null}}
\label{app:proof_query_null}
\begin{proof}

As illustrated in Figure~\ref{fig:proof}, consider a reference voxel \(\mathbf{m}_{\text{ref}}\), located along the search direction \(z^+\) but outside the spatial extent of the mapping environment. Since voxels beyond the extent are never updated by sensor measurements, \(\mathbf{m}_{\text{ref}}\) must remain in the {unknown} state.

By the condition of the theorem, there exists no boundary voxel between \(\mathbf{m}_{\text{ref}}\) and the query voxel \(\mathbf{q}\), including \(\mathbf{q}\) itself. Similar to previous proof (see Appendix~\ref{app:proof_query}), we denote the sequence of voxels between \(\mathbf{q}\) and \(\mathbf{m}_{\text{ref}}\) as \(\{\mathbf{m}_1, \mathbf{m}_2, \ldots, \mathbf{m}_l\}\) , where \(\mathbf{m}_1\) is adjacent to \(\mathbf{q}\) and \(\mathbf{m}_l\) is adjacent to \(\mathbf{m}_{\text{ref}}\). Since \(\mathbf{m}_l\) is not a boundary voxel and has a 6-neighbor \(\mathbf{m}_{\text{ref}}\), which is {unknown}, \(\mathbf{m}_l\) cannot be in the {free} or {occupied} state. This is because if so, \(\mathbf{m}_l\) would be classified as a boundary voxel (see Equation~\ref{eqa:boundary_def}), contradicting the condition. Thus, \(\mathbf{m}_l\) must be {unknown}. By induction, we conclude that for all \(i = l, l-1, \ldots, 2\), since \(\mathbf{m}_i\) is {unknown} and \(\mathbf{m}_{i-1}\) is not a boundary voxel, \(\mathbf{m}_{i-1}\) must also be {unknown}. Finally, since \(\mathbf{m}_1\) is {unknown} and \(\mathbf{q}\) is not a boundary voxel, it follows that \(\mathbf{q}\) is also {unknown}.
\end{proof}

\begin{figure}[t] 
    \centering
    \includegraphics[width=\linewidth]{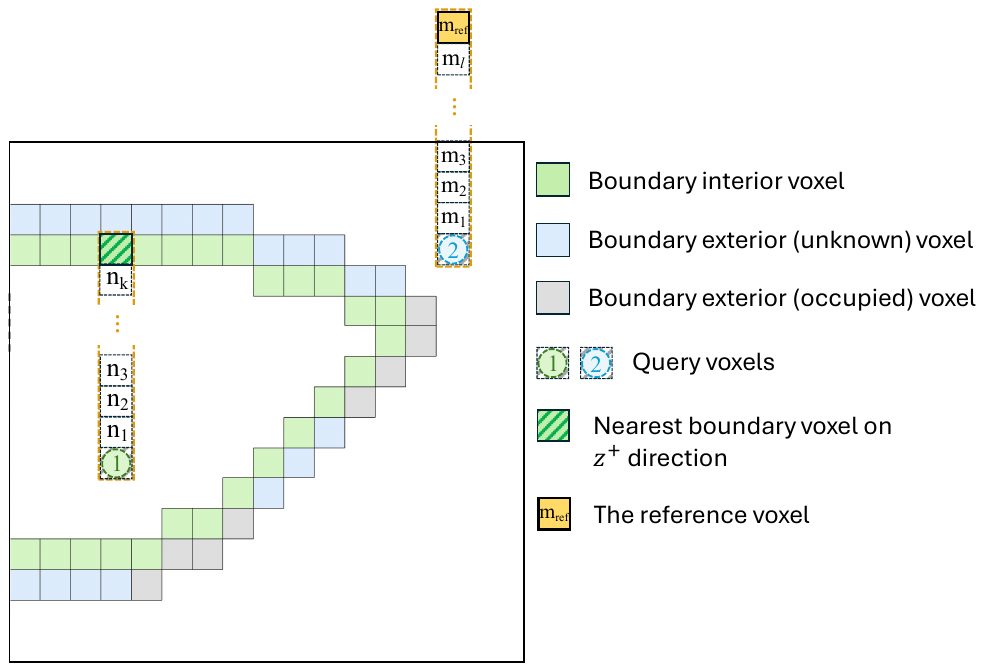} 
    \vspace{-10pt} 
    \caption{Illustration of the proof of Theorem~\ref{thm:query} and Theorem~\ref{thm:query_null}.}
    \label{fig:proof}
\end{figure}

\section{Proof of Theorem~\ref{thm:time_complx}}
\label{app:proof_time_complx}
\begin{proof}
Let the mapping environment be denoted by \(\mathcal{H}\), with its spatial scale denoted by \( D \) and map resolution denoted by \( d \). The total time complexity of querying the occupancy state for all voxels in \(\mathcal{H}\) is denoted by \( \mathcal{Q}_\text{all} \).

Consider the occupancy query process for a voxel \(\mathbf{q}\) in \(\mathcal{H}\). This process involves two steps. First, the array saved in the 2D grid cell \(\mathbf{c}_q = (q^x, q^y) \in \mathbb{Z}^2\) is retrieved. This operation involves a hash table lookup and has an average-case time complexity of \( \mathcal{O}(1) \). Second, a binary search is performed on the retrieved array. Let \( k(\mathbf{q}) \) denote the size of the array. This size also corresponds to the number of boundary voxels that share the same \( (x, y) \)-coordinates as the query voxel \(\mathbf{q}\), which is \((q^x, q^y)\). The binary search has time complexity \( \mathcal{O}(\log k(\mathbf{q})) \), which we conservatively upper bound as \( \mathcal{O}(k(\mathbf{q})) \) for analysis. Therefore, the per-voxel query complexity is \( \mathcal{O}(1 + k(\mathbf{q})) \). Then, the total time complexity for querying all voxels in the environment \(\mathcal{H}\) is given by:

\begin{align}
\mathcal{Q}_\text{all} &= \mathcal{O} \left( \sum_{\mathbf{q} \in \mathcal{H}} \left(1 + k(\mathbf{q}) \right) \right) \\
 &= \mathcal{O}\left( \sum_{q^z=-D}^{D} \sum_{q^y=-D}^{D} \sum_{q^x=-D}^{D} (1+ k(\mathbf{q})) \right)
\end{align}
where the triple summation corresponds to iterating over all voxels in the 3D environment \(\mathcal{H}\).

Note that \( k(\mathbf{q}) \) represents the total number of all boundary voxels that share the same \( (x, y) \)-coordinates as \((q^x, q^y)\). Therefore, \( \sum_{q^y=-D}^{D} \sum_{q^x=-D}^{D} k(\mathbf{q}) \) corresponds to the total number of all boundary voxels in the boundary map, which we denote as \( K \). Since this total number \( K \) is constant across all \( q^z \in [-D, D] \), and the number of voxels along the \( z \)-axis is \( \frac{D}{d} \), we obtain:

\begin{align}
\mathcal{Q}_\text{all} &= \mathcal{O}\left( \sum_{q^z=-D}^{D} \sum_{q^y=-D}^{D} \sum_{q^x=-D}^{D} (1+ k(\mathbf{q}))\right)\\ 
&= \mathcal{O}\left( \left(\frac{D}{d} \right)^3 + \sum_{q^z=-D}^{D} \left(\sum_{q^y=-D}^{D} \sum_{q^x=-D}^{D} k(\mathbf{q})\right)\right)\\ 
&= \mathcal{O}\left( \left(\frac{D}{d} \right)^3 + \sum_{q^z=-D}^{D} K \right) \\
&= \mathcal{O}\left( \left(\frac{D}{d} \right)^3 + \frac{D}{d} K \right)
\end{align}

{The total number of boundary voxels in the environment, denoted as \( K \), depends on the geometric complexity of the environment. 
As illustrated in Fig.~\ref{fig:proof_o1}(a), in an environment where the free space forms a tortuous and narrow corridor, the boundary voxels may densely fill almost the entire environment, leading to a worst-case complexity on the order of \( \mathcal{O}\!\left( \left( \frac{D}{d} \right)^3 \right) \). However, such a scenario is a highly idealized worst-case.
In contrast, in typical real-world environments, the free space forms relatively smooth and enclosed regions, as illustrated in Fig.~\ref{fig:proof_o1}(b). The boundary voxels are distributed along the surfaces, and their number scales with the surface area, i.e., \( \mathcal{O}\!\left( \left( \frac{D}{d} \right)^2 \right) \). Thus, in the real-world average case, we can consider \( \mathcal{O}(K) \) as \( \mathcal{O}\!\left( \left( \frac{D}{d} \right)^2 \right) \). Therefore, we obtain:}

\begin{align}
\mathcal{Q}_\text{all} &= \mathcal{O}\left( \left(\frac{D}{d} \right)^3 + \frac{D}{d} K \right) \\ &= \mathcal{O}\left( \left(\frac{D}{d} \right)^3 + \left( \frac{D}{d} \right)^3 \right) \\ &= \mathcal{O}\left(\left( \frac{D}{d} \right)^3 \right)
\end{align}

Finally, the environment \(\mathcal{H}\) consists of \( \left( \frac{D}{d} \right)^3 \) voxels. Therefore, the average query time complexity per voxel, denoted by \( \mathcal{Q}_\text{p} \), is given by:

\begin{align}
\mathcal{Q}_\text{p} = \frac{\mathcal{Q}_\text{all}}{ \left( \frac{D}{d} \right)^3} = \frac{\mathcal{O}\left(\left( \frac{D}{d} \right)^3 \right)}{ \left( \frac{D}{d} \right)^3} = \mathcal{O}(1)
\end{align}

\begin{figure}[t] 
    \centering
    \includegraphics[width=\linewidth]{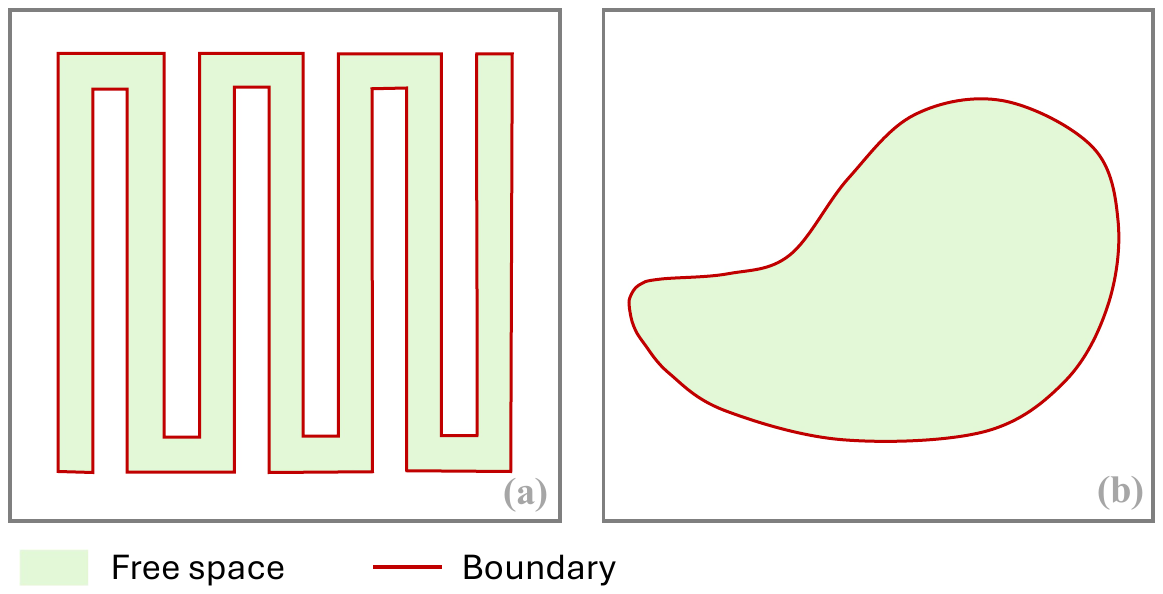} 
    \vspace{-10pt} 
    \caption{{Illustration of boundary voxels distribution in the environment. (a) An idealized worst-case scenario. (b) A typical real-world scenario.}}
    \label{fig:proof_o1}
\end{figure}

\end{proof}

\section{Proof of Claim in Section~\ref{sec:update_global}}
\label{app:proof_upd}
\begin{proof}
By the definition of a boundary voxel (see Equation~\ref{eqa:boundary_def}), a voxel's boundary voxel status is determined solely by its own occupancy state and the occupancy states of its six neighbors. Therefore, if the occupancy state of neither the voxel nor any of its six neighbors changes, its boundary voxel status remains unchanged. By construction, for any voxel not included in \( \mathcal{U} \), both its own occupancy state and the occupancy states of all its six neighbors remain unchanged. Thus, the boundary voxel status of such a voxel remains unchanged.
\end{proof}

\section{Proof of Theorem~\ref{thm:restore_null}}
\label{app:proof_restore_null}
\begin{proof}
Let \(\mathbf{n}_p, \mathbf{n}_q \) in region \( \mathcal{V}_a \cap \mathcal{S_I}\) be any two voxels such that \(\mathbf{n}_q\) lies above \(\mathbf{n}_p\) in the \(z\)-axis. Given that there are no boundary voxels within the region \(\mathcal{V}_a \cap \mathcal{S_I}\), we have the following: Since \(\mathbf{n}_p\) is not a boundary voxel, it cannot be {occupied}; therefore, its occupancy state must be either {free} or {unknown}. Let \(\mathbf{n}_1\) denote the neighbor in \(z^+\) direction of \(\mathbf{n}_p\). We first consider the case where \(\mathbf{n}_p\) is free. If \(\mathbf{n}_1\) has an occupancy state of occupied or unknown, then \(\mathbf{n}_p\) would be classified as a boundary voxel (see Equation~\ref{eqa:boundary_def}), which contradicts the condition of the theorem. Hence \(\mathbf{n}_1\) has an occupancy state of free, which is the same as \(\mathbf{n}_p\). By similar reasoning, if \(\mathbf{n}_p\) is unknown, we can conclude that \(\mathbf{n}_1\) must also be unknown. Therefore, we conclude that \(\mathbf{m}_1\) must share the same state (i.e., free or unknown) as \(\mathbf{n}_p\). Then, we consider the the neighbor in \(z^+\) direction of \(\mathbf{n}_1\), denoted as \(\mathbf{n}_2\). By induction, we can conclude that \(\mathbf{n}_2\) must share the same state as \(\mathbf{n}_1\), and by repeating this reasoning recursively along the \(z^+\) direction, we can conclude that all voxels between \(\mathbf{n}_p\) and \(\mathbf{n}_q\), including the \(\mathbf{n}_q\), must share the same occupancy state as \(\mathbf{n}_p\), which is either {free} or {unknown}.

\end{proof}

\section{Proof of Theorem~\ref{thm:restore_free}}
\label{app:proof_restore_RG}
\begin{proof}
We first prove that all voxels within the constructed tiles are free. By construction, for any voxel within the tile, the nearest boundary voxel along either the $z^+$ or $z^-$ direction is of type $\mathbf{b}_\mathrm{int}$. By Theorem~\ref{thm:query}, such a voxel's occupancy state is determined as free, which completes the first part of the proof.

Next, we show that any non-boundary voxel within region \(\mathcal{V}_a \cap \mathcal{S_I}\) that is not included in the constructed tiles must be labeled as unknown. 
We prove this by contradiction. Suppose there exists a free voxel \(\mathbf{n}_f\) in \(\mathcal{V}_a \cap \mathcal{S_I}\) that is not included in any constructed tile. By the condition of the theorem, which indicates that boundary voxel(s) exist within the region \(\mathcal{V}_a \cap \mathcal{S_I}\), there must be a nearest boundary voxel to \(\mathbf{n}_f\) along either the \(z^+\) or \(z^-\) direction, which we denote as \(\mathbf{b}_{nn}\). By Theorem~\ref{thm:query}, \(\mathbf{b}_{nn}\) must be of type \(\mathbf{b}_\mathrm{int}\), as \(\mathbf{n}_f\) is assumed to be a free voxel. By construction, no boundary voxel exists between \(\mathbf{b}_{nn}\) and \(\mathbf{n}_f\), since \(\mathbf{b}_{nn}\) is the nearest boundary voxel. The tile construction begins at \(\mathbf{b}_{nn}\), which is a \(\mathbf{b}_\mathrm{int}\) voxel, and continues until it encounters a boundary voxel. Therefore, \(\mathbf{n}_f\) must be included in the tile construction, which contradicts the initial assumption and completes the proof.

\end{proof}

\section{{Analytical Memory Usage Estimation}}
{In this appendix, we provide a rough estimation of the memory requirements of our mapping framework under different environmental conditions. 
Let the {environment sparsity} be denoted as \( \rho_e \), defined as the ratio of occupied voxels (\( N_o \)) to the total number of voxels in the environment (\( N_{\text{total}} \)). 
In average cases, each boundary exterior voxel corresponds to one boundary interior voxel. Thus, the overall memory usage can be estimated as:
\begin{equation}
\texttt{memory} \propto 2 N_o.
\end{equation}
The number of occupied voxels can be further expressed as:
\begin{equation}
N_o = \rho_e N_{\text{total}} = \rho_e \frac{{V}_e}{d^3},
\end{equation}
where \( {V}_e \) denotes the environment total volume and \( d \) is the map resolution (i.e., voxel size). 
Thus, the estimated memory usage can be written as:
\begin{equation}
\texttt{memory} \propto 2 \rho_e \frac{{V}_e}{d^3}.
\end{equation}
}

}

\end{document}